\newtheorem{claim}{Claim}
\title{Robotic Tool Tracking under Partially Visible Kinematic Chain: A Unified Approach}
\author{Florian Richter$^1$ \IEEEmembership{Student Member, IEEE}, Jingpei Lu$^{1}$, Ryan K. Orosco$^2$ \IEEEmembership{Member, IEEE},\\ and Michael C. Yip$^1$ \IEEEmembership{Senior Member, IEEE} 
\thanks{$^1$Florian Richter, Jingpei Lu, and Michael C. Yip are with the Department of Electrical and Computer Engineering, University of California San Diego, La Jolla, CA 92093 USA. {\tt\small \{frichter, jil360, yip\}@ucsd.edu}}%
\thanks{$^2$Ryan K. Orosco is with the Department of Surgery - Division of Head and Neck Surgery, University of California San Diego, La Jolla, CA 92093 USA. {\tt\small rorosco@ucsd.edu}}}
\begin{document}

\maketitle
\thispagestyle{empty}
\pagestyle{empty}


\begin{abstract}
Anytime a robot manipulator is controlled via visual feedback, the transformation between the robot and camera frame must be known. However, in the case where cameras can only capture a portion of the robot manipulator in order to better perceive the environment being interacted with, there is greater sensitivity to errors in calibration of the base-to-camera transform. A secondary source of uncertainty during robotic control are inaccuracies in joint angle measurements which can be caused by biases in positioning and complex transmission effects such as backlash and cable stretch. In this work, we bring together these two sets of unknown parameters into a unified problem formulation when the kinematic chain is partially visible in the camera view. We prove that these parameters are non-identifiable implying that explicit estimation of them is infeasible. To overcome this, we derive a smaller set of parameters we call Lumped Error since it lumps together the errors of calibration and joint angle measurements. A particle filter method is presented and tested in simulation and on two real world robots to estimate the Lumped Error and show the efficiency of this parameter reduction.
\end{abstract}

\begin{IEEEkeywords}
Visual Tracking, Computer Vision for Medical Robotics, Perception for Grasping and Manipulation, Computer Vision for Automation
\end{IEEEkeywords}

\section{Introduction}

Anytime a robot manipulator is being controlled via visual feedback, an important coordinate transform must be known: the orientation and translation between the robot and the camera frame.
This transforms positions and velocities of the robot defined by its kinematics, such as its end-effector position, into the camera's frame of reference where feedback and trajectories are often defined.
Typically this relative transform is calibrated for by placing markers, such as ArUco \cite{garrido2014automatic}, on the robot, identifying them in the image frame, and solving the homogeneous linear system for the \textit{base-to-camera} transform \cite{fassi2005hand}.
However, in the case where cameras can only observe a portion of the robot manipulator, there is greater sensitivity to errors in calibration of the base-to-camera transform due to the limited range of motions the robot can take on when collecting data \cite{tsai1989new}.
This situation arises when the camera is positioned to perceive the robotic tool (e.g. gripper) and the environment or objects being manipulated with rather than the entire kinematic chain.
An example scenario is shown in Fig. \ref{fig:cover_figure}.
This comes up frequently in object grasping and manipulation tasks \cite{levine2018learning, mahler2019learning} and small-scale manipulations such as robotic surgery with the da Vinci\textregistered{} Surgical System.

\begin{figure}[t]
	\centering
	\vspace{2mm}
	\includegraphics[trim=0cm 14.25cm 27cm 0cm, clip, width=8.5cm]{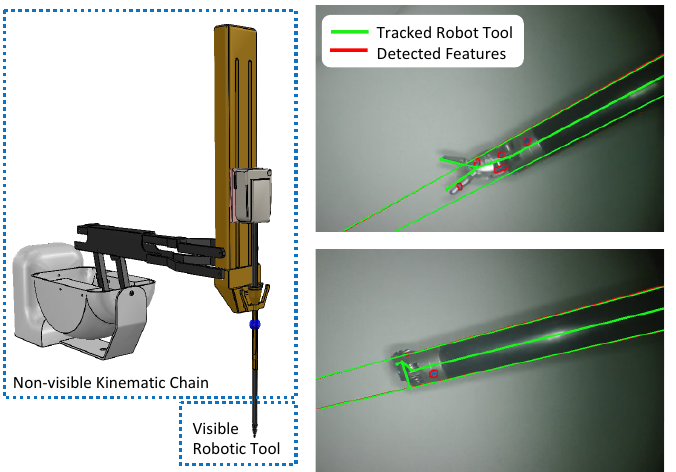}
	\caption{Precise robotic manipulation utilizes visual information with the sensor positioned to observe the environment and objects of interest rather than the entire kinematic chain. As such, it is challenging to track the robotic tool due to partially visible kinematic chain. In this work we derive and track a smaller set of parameters called Lumped Error for effective robotic tool tracking in such scenarios.
	The right two images show the re-projected, tracked robot tool and its corresponding insertion shaft using the proposed Lumped Error parameter reduction technique.
	}
    \label{fig:cover_figure}
\end{figure}

A secondary source of uncertainty that can occur during robotic control is the  inaccuracies in joint angle measurements.
Errors in joint angle measurements are caused by biases in positioning, drifting in readings, and complex transmission effects such as cable stretch and backlash.
Similar to finding the base-to-camera transform, this is typically solved through calibration where a separate sensor, such as a camera, collects ground truth measurements and compares against the joint angle readings \cite{pradeep2014calibrating}.
For non-constant errors, such as cable stretch, explicit dynamic modelling has been conducted \cite{miyasaka2020modeling} and data-driven approaches with neural networks \cite{hwang2020efficiently}.
These methods however are challenging to apply outside of a lab setting due to the need for additional sensors or calibration steps.
Furthermore, the calibration parameters can degrade over time through irreversible effects from transmission wear-and-tear and mechanical creep.

A strong motivational example for scenarios with challenging base-to-camera calibration and errors in joint angle measurements are surgical robotic-endoscopic platforms \cite{hagn2010dlr, dimaio2011vinci, sport_surgical}.
The endoscopes are designed to only capture a small working space for higher operational precision.
Surgical robotic platforms also typically use cable-drives to enable low-profile robotic tools hence resulting in joint angle measurement error.
Furthermore, the bases of the surgical manipulators are adjusted regularly depending on the type of procedure and to fit each patients anatomy.
There is a significant amount of previous literature from the surgical robotics community tackling these two problems separately, and we unify these two problems and present a solution which also generalizes to other robot manipulators.

\subsection{Contributions}

In this work, we demonstrate the ability to track robotic tools from visual observations that only show part of the kinematic chain under the conditions of uncertainty in base-to-camera transform and joint angle measurements.
To this end, we present the following novel contributions: 
\begin{enumerate}
    \item a novel problem formulation which proves direct estimation of all the described parameters is infeasible since they are non-identifiable,
    \item the first approach to unify these uncertainties into a smaller set of parameters which are identifiable and compensates for all the uncertainties,
    \item an extension of tracking under simultaneously moving robotic tools and cameras.
\end{enumerate}
We coin the reduced set of parameters as \textit{Lumped Error}.
To track the Lumped Error, a tracking algorithm based on a particle filter is presented.
The particle filter uses visual features from the tracked robotic tool to continuously update the belief of the Lumped Error.
The visual features used in our implementation are detected using markers, edge detectors for geometric primitives such as cylinders, and learned point features.
For experimentation, the presented particle filter was evaluated both in simulation and on real world robotic data using the da Vinci Research Kit (dVRK) \cite{DVRK}, a widely used surgical robotics research platform  with a total of 10 Degrees of Freedom (DoF) and a gripper across both the endoscopic (i.e. surgical robotic camera arm) and robotic manipulator kinematic chains, and the 7 DoF arm on Rethink Robotic's Baxter robot.
From this set of experiments, the joint angle disturbances include simulated noise, cable stretch from the dVRK, and backlash from the Baxter arm.
Lastly, we summarized our previous work and their results which tracked the Lumped Error for applications in control of surgical robotics, such as autonomous suction and suture needle manipulation, highlighting the impact this method already has in the surgical robotic community.
This summary is written in Appendix \ref{appendix:previou_work}.
Overall, these results show that estimation of the Lumped Error is efficient and yields precise and accurate robotic tool tracking.

\subsection{Related Work}
Integration of visual observations with robotic manipulators and handling inaccurate joint angle measurements is not a new concept.
Therefore, this related work section is split into different categories to cover the wide-range of solutions presented by previous groups for robotic tracking.
A special focus is given to surgical robotics as the challenge of surgical robotic tool tracking would be a direct application of the presented work.

\subsubsection{Base-to-camera estimation}

A common approach to calibrating the base-to-camera transform is rigidly attaching a marker whose pose can be directly estimated from visual data (e.g. ArUco \cite{garrido2014automatic}, ARTag \cite{fiala2005artag}, AprilTag \cite{olson2011apriltag}, and STag \cite{benligiray2019stag}), collect multiple images of the marker, and solve the homogeneous linear system \cite{fassi2005hand, park1994robot}.
To relieve the heavy reliance on 3D pose reconstruction, which can often be inaccurate from 2D images, markers have been attached to robot manipulators to collect 2D keypoints and the camera-to-base transform is estimated with Solve-PnP \cite{lepetit2009epnp}.
Deep learning approaches have been applied to detect 2D keypoints on robotic manipulators to remove the need for markers \cite{lambrecht2019towards, lambrecht2019robust, lee2020camera, lu2020robust}.
However, these calibration methods do not consider errors in joint angle measurements and instead make the assumption that the robot kinematic chain is located exactly at the joint angle readings.

Zhong et al. proposed an interactive method to maximize the accuracy in calibration for remote center of motion (RCM) robots \cite{zhong2020hand} which are typical for laparoscopic robots.
Similarly, Zhao et al. defined a kinematic remote-center coordinate system (KCS) which absorbs all the error in the transform from the camera frame to the base of an RCM robot \cite{zhao2015efficient}.
Tracking the KCS is a common technique in surgical robotics where the updates come from markers \cite{super}, learned features \cite{reiter2012feature, original_rcs, lu2020super}, silhouette matching \cite{Hao}, or online template matching \cite{template_matching_EKF}.
All of these estimation methods do not explicitly consider the effects of joint angle errors.
In fact, we show that the Lumped Error is mathematically equivalent to tracking the KCS implying that these methods are compensating for both the base-to-camera transform and joint angle errors.
Furthermore, in this work we generalize Lumped Error to other robotic manipulators.
In Appendix \ref{appendix:previou_work}, we summarize our previous work in control of surgical robotics which utilized tracking KCS, or equivalently Lumped Error, to highlight the impact this method already has on the surgical robotics community.

\subsubsection{Joint measurement error estimation}

Using fiducial markers to collect data, Pastor et al. applied a data-driven approach to estimating the joint angle error \cite{pastor2013learning}.
Meanwhile Wang et al. used markers to estimate the joint angle offsets in real-time via inverse kinematics \cite{wang2013online}.
From the perspective of surgical robotics, errors of joint angle readings due to transmissions effects has largely been studied in the context of cable drives.
Miyasaka et al. explicitly modelled the physical effects of cable transmissions such as friction and hystresis \cite{miyasaka2020modeling}.
Learning based approaches have been applied in the form of neural networks for direct estimation of cable stretch \cite{hwang2020efficiently, peng2019real} and Gaussian processes for compensation \cite{mahler2014learning}.
From visual data, Unscented Kalman Filter \cite{haghighipanah2016unscented} and neural networks \cite{cable_error_comp} were applied to estimate the effects of cable stretch. 
These techniques however are impractical to apply outside of a lab setting due to the need for additional sensors or calibration steps.
In addition, calibration parameters can degrade over time due to mechanical effects such as cable creep which is when cable stretch varies irreversibly through usage.

\subsubsection{Combined base-to-camera and joint estimation}
Joint calibration techniques have been proposed which optimize for both joint angle offsets and base-to-camera transformations \cite{pradeep2014calibrating, le2009joint}.
To handle dynamic uncertainties, such as non-constant joint angle errors, real-time estimation by combining iterative closest point from depth sensing and Kalman Filtering have been proposed \cite{krainin2010manipulator}.
A probabilistic approach has also been proposed where the observation models are grounded in physical parameters hence making it easier to tune the hyper parameters \cite{cifuentes2016probabilistic}.
These works largely focus on integration of sensors into real-time estimation.
Instead of this, we look into parameter reductions for the case of partially visible kinematic chains.
Therefore, the proposed Lumped Error parameter reduction can aid these efforts by reducing the total number of parameters that need to be estimated in the case of a partially visible kinematic chain.
Nonetheless, we propose a particle filter to estimate the Lumped Error which relies only on image data; meanwhile, the efforts above rely on depth sensing which is not as readily available on all robotic platforms such as the da Vinci\textregistered{} Surgical System.

A separate and popular approach to controlling a robot from visual feedback without the base-to-camera transform is through online jacobian estimation \cite{shen2003asymptotic}.
These visual servoing techniques can even compensate for kinematic inaccuracies and joint angle measurement errors \cite{cheah2003approximate}.
While these techniques are sufficient to control the end-effector in the camera frame, they do not describe the rest of the kinematic chain in the camera frame.

\subsubsection{Eye-in-hand configuration}
Another consideration in the case of a robotic camera arm is the problem of \textit{eye-in-hand} calibration \cite{shiu1989calibration}.
This particular visual-robotic challenge is not considered here but included for the sake of completeness.
Zhang et al. developed a computationally efficient methods using dual quaternions \cite{zhang2017computationally}.
Adjoint transformations from twist motions have also been applied to converge to solutions with high accuracy \cite{calibrate_2, pachtrachai2018adjoint}.
In the case of RCM robots, reduction of the computational complexity has been found \cite{pachtrachai2019hand, calibrate_1}.
Similar to the previously described visual servoing techniques, the robot camera arm can also be controlled through online jacobian estimation \cite{piepmeier2003uncalibrated, verghese2019model}.

\section{Methods}

The problem formulation for base-to-camera and joint angle measurement errors with camera interaction is first explained in this section.
To overcome the described challenges for a partially observed robot from visual observations, a Lumped Error is derived and then extended to the eye-in-hand case.
Finally, our proposed method for tracking the Lumped Error with a particle filter approach is described.

\begin{figure}[t]
	\centering
	\vspace{2mm}
	\includegraphics[trim=0cm 3.8cm 13.3cm 0cm clip, width=8.5cm]{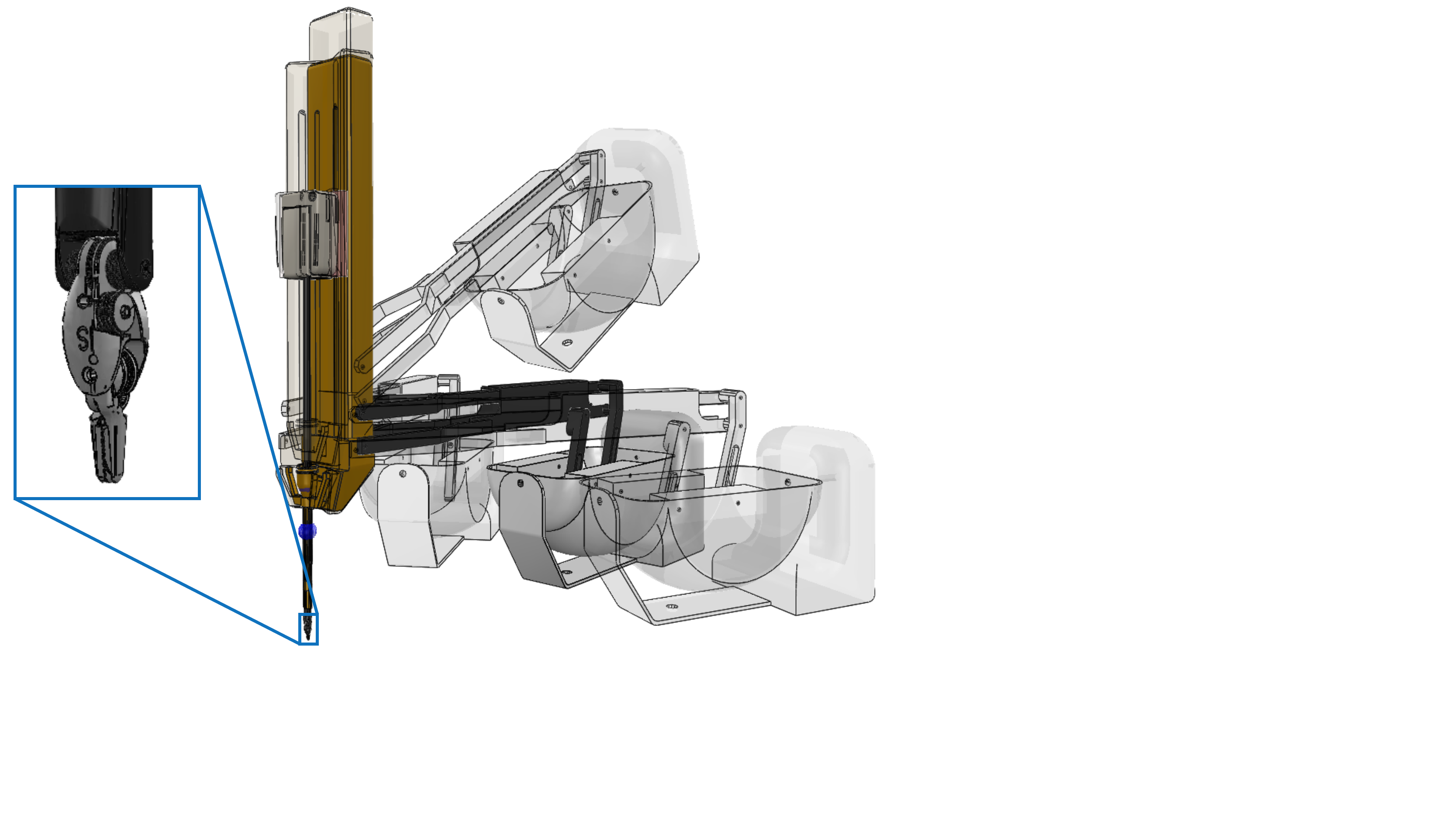}
	\caption{Given an image of the robot tool, as shown in blue, and not the whole kinematic chain, multiple solutions exist for joint angles and base-to-camera transform errors. Examples of these solutions are shown with the transparent kinematic chains. This implies that it is infeasible to estimate these unknowns directly when the kinematic chain is partially visible.
	}
    \label{fig:PSMMult}
\end{figure}

\subsection{Problem Formulation}
The 3D geometry of a robotic tool can be fully described in the stationary camera frame through a base-to-camera transform and forward kinematics.
A single point, $\mathbf{o}^j \in \mathbb{R}^3$, on the $j$-th link of a robotic tool can be transformed to the stationary camera frame by:
\begin{equation}
    \overline{\mathbf{o}}^{c}_t = \mathbf{T}^{c}_b \prod \limits_{i=1}^{j} \mathbf{T}_i^{i-1}(q^i_t) \overline{\mathbf{o}}^j
    \label{eq:perfect_fk}
\end{equation}
at time $t$ where $\mathbf{T}^c_b \in SE(3)$ is the base-to-camera transform and $\mathbf{T}_i^{i-1}(q^i_t) \in SE(3)$ is the $i$-th joint transform with joint angle $q^i_t$.
The overline operator ($\overline{\cdot}$) defines the homogeneous representation of a 3D point (e.g $\overline{\mathbf{o}} = \begin{bmatrix} \mathbf{o} & 1 \end{bmatrix}^\top$).
Therefore, all that is necessary to describe a robotic manipulator in the camera frame is the base-to-camera transform and joint angles.
Typically, the base-to-camera transform can be calibrated for, and the joint angles can be found from encoder readings.
The issue with applying this approach directly to scenarios where the camera only captures images with a portion of the kinematic chain is that small errors in calibration or joint angles will be exacerbated in the image frame.

Therefore, let $\tilde{q}^1_t, \dots, \tilde{q}^{n_j}_t$ be the joint angle measurements and $e^1_t, \dots, e^{n_j}_t$ are the measurement errors, such that:
\begin{equation}
    q^i_t = \tilde{q}^i_t + e^i_t
\end{equation}
for all $i = 1, \dots, n_j$.
No distribution is assumed for the errors, $e^i_t$.
For example, the error could be constant bias from absolute position error or non-constant with hysteresis effects from cable stretch. 
Combining with (\ref{eq:perfect_fk}), the robotic tool can be described in the camera frame by:
\begin{equation}
    \overline{\mathbf{o}}^{c}_t = \mathbf{T}^{c}_{b-} \mathbf{T}^{b-}_{b} \prod \limits_{i=1}^{j} \mathbf{T}_i^{i-1}(\tilde{q}^i_t + e^i_t) \overline{\mathbf{o}}^j
    \label{eq:not_perfect_fk}
\end{equation}
where the true base-to-camera transform is broken into $\mathbf{T}^{c}_{b-} \in SE(3)$ and $\mathbf{T}^{b-}_{b} \in SE(3)$ which are measured from an initial calibration and the error in the calibration respectively.
Therefore, in order to correctly describe the robotic tool in the camera frame, both the joint angle errors, $e^i_t$, and the error in base-to-camera transform, $\mathbf{T}^{b-}_{b}$, need to be estimated.
Let $n_j$ be the total number of joint angles and the $SE(3)$ error transform, $\mathbf{T}^{b-}_{b}$, be estimated with an axis-angle and a translation vector, resulting a total of $n_j + 6$ parameters to estimate. 

Explicit estimation for the joint angles and base-to-camera transform is not possible when only a portion of the kinematic chain is visible in the camera frame.
This is since one cannot distinguish where the source of error is coming from, joint angles or the base-to-camera transform calibration.
For example, a surgical tool is considered partially visible with regards to the endoscopic camera.
The endoscopes narrow field only has visual information of the tool-tip and not the base nor joints preceding the articulated wrist resulting in multiple viable solutions to estimating $\mathbf{T}^{b-}_b$ and $e^i_t$.
An example of this is shown in Fig. \ref{fig:PSMMult} where the joints part of the RCM are not visible to the endoscopic camera, but the joints on the gripper are. 

This relates to the concept of \textit{identifiability} \cite{parameter_identification}.
Identifiability is concerned with the existence of a unique inverse association with regards to the parameters estimated from observations.
Fig. \ref{fig:PSMMult} shows examples of there not being a unique association from the image of the surgical tool from an endoscope to the base-to-camera transform and errors in joint angles.
These instances are denoted as \textit{observationally equivalent}.
Parameters are only considered identifiable if there are no observational equivalences.

\begin{claim}
\label{claim:inf_solutions}
When only using the camera data for observations, then the error in base-to-camera transform, $\mathbf{T}^{b-}_b$, and errors in the first $n_b$ joint angles, $e^1_t, \dots, e^{n_b}_t$, are not identifiable if all the kinematic links preceding joint $n_b$ are out of the camera frame.
\end{claim}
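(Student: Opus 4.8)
The plan is to prove non-identifiability by exhibiting an explicit, genuinely distinct family of parameter settings $(\mathbf{T}^{b-}_b, e^1_t, \dots, e^{n_b}_t)$ that all induce exactly the same image of the tool, so that they are pairwise \emph{observationally equivalent}. Starting from (\ref{eq:not_perfect_fk}), the hypothesis that every link preceding joint $n_b$ (the base link and links $1,\dots,n_b-1$) is out of view means each observed point lies on some link $j \ge n_b$, so I can split the forward-kinematics product at joint $n_b$:
\begin{equation}
\overline{\mathbf{o}}^{c}_t = \mathbf{T}^{c}_{b-}\,\mathbf{A}_t \prod\limits_{i=n_b+1}^{j}\mathbf{T}_i^{i-1}(\tilde{q}^i_t + e^i_t)\,\overline{\mathbf{o}}^{j},
\end{equation}
where $\mathbf{A}_t := \mathbf{T}^{b-}_b \prod_{i=1}^{n_b}\mathbf{T}_i^{i-1}(\tilde{q}^i_t + e^i_t) \in SE(3)$. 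Holding everything else fixed (the calibrated $\mathbf{T}^c_{b-}$, the joint readings $\tilde{q}^i_t$, and the remaining errors $e^{n_b+1}_t,\dots,e^{n_j}_t$), every observed point — hence the entire re-projected tool — depends on the parameters of interest only through the single element $\mathbf{A}_t$. So it suffices to show many distinct $(\mathbf{T}^{b-}_b, e^1_t,\dots,e^{n_b}_t)$ yield the same $\mathbf{A}_t$.

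That follows from a dimension count together with an explicit inverse. Consider $F:\, SE(3)\times\mathbb{R}^{n_b}\to SE(3)$ with $F(\mathbf{T}^{b-}_b, e^1,\dots,e^{n_b}) = \mathbf{T}^{b-}_b\prod_{i=1}^{n_b}\mathbf{T}_i^{i-1}(\tilde{q}^i_t + e^i)$: its domain has dimension $6+n_b$ and its codomain dimension $6$, and for any fixed errors it is right multiplication by a fixed element of $SE(3)$, hence a bijection of $SE(3)$. Therefore $F$ is surjective and every fiber is an honest $n_b$-parameter family — fix the target $\mathbf{A}_t$, pick $e^1,\dots,e^{n_b}$ freely, and set $\mathbf{T}^{b-}_b = \mathbf{A}_t\big(\prod_{i=1}^{n_b}\mathbf{T}_i^{i-1}(\tilde{q}^i_t + e^i)\big)^{-1}$, a well-defined element of $SE(3)$. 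Since $n_b \ge 1$, this fiber is infinite and its members are pairwise distinct (the axis-angle/translation coordinates of $\mathbf{T}^{b-}_b$ move as the $e^i$ move), yet they all produce the identical image; by definition the parameters are not identifiable, which also justifies the ``infinitely many solutions'' label.

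The main obstacle is pinning down the exact informational content of the observations and the time dependence. For the implication actually used — equal $\mathbf{A}_t$ gives equal image — nothing beyond (\ref{eq:not_perfect_fk}) is needed, but I would want to remark that, conversely, enough visible features do determine $\mathbf{A}_t$ (monocular pose of a known rigid segment under a calibrated camera), so the fibers above are exactly the observational-equivalence classes and not an artifact of weak observations. The subtler point is that $\mathbf{T}^{b-}_b$ is constant while the $e^i_t$ are not, so to claim the degeneracy persists along a whole trajectory I would either argue pointwise in $t$ (identifiability already fails at a single instant), or, more cleanly, use the global symmetry available because joint $1$ is revolute (or prismatic): the substitution $e^1_t \mapsto e^1_t + \delta$ together with $\mathbf{T}^{b-}_b \mapsto \mathbf{T}^{b-}_b\,\mathrm{Rot}(-\delta\,\hat{\mathbf{n}}_1)$ — a fixed pre-rotation about joint $1$'s axis $\hat{\mathbf{n}}_1$, with the analogous fixed translation for a prismatic joint — leaves $\mathbf{A}_t$, and hence every observed point on every link $j\ge 1$, invariant for all $t$ simultaneously, giving the non-identifiability directly.
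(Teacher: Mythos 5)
Your proof is correct, but it reaches the conclusion by a genuinely different route than the paper. The paper works concretely with Modified Denavit--Hartenberg parameters: it shows each joint error can be commuted leftward as a conjugated transform $\mathbf{T}_i(\omega)$, runs an induction to establish (\ref{eq:result_of_lump_proof}), and thereby exhibits an explicit family of observationally equivalent parameters indexed by the scalars $\beta_1,\dots,\beta_{n_b}$ that split each error between the chain and the base. You instead observe that the visible links depend on $(\mathbf{T}^{b-}_b, e^1_t,\dots,e^{n_b}_t)$ only through the single group element $\mathbf{A}_t=\mathbf{T}^{b-}_b\prod_{i=1}^{n_b}\mathbf{T}_i^{i-1}(\tilde q^i_t+e^i_t)\in SE(3)$, and then note that the map from the $(6+n_b)$-dimensional parameter space onto $SE(3)$ has $n_b$-dimensional fibers because right multiplication is a bijection of the group --- so the fiber is parameterized by freely chosen errors with $\mathbf{T}^{b-}_b$ solved for explicitly. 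Your argument is shorter, coordinate-free (it needs no DH parameterization and works for any smoothly parameterized joint transforms), and exhibits a strictly larger equivalence class (the paper's $\beta$-family degenerates when some $e^i_t=0$, yours does not); what it does not produce is the closed-form expression (\ref{eq:analyical_lump}) for the commuted error $\mathbf{T}^{n_b}$, which the paper reuses immediately afterward as the analytical definition of the Lumped Error, so the paper's longer construction is doing double duty. You also flag a subtlety the paper passes over silently: $\mathbf{T}^{b-}_b$ is a constant while the $e^i_t$ vary, so the exhibited degeneracies are really pointwise-in-$t$ statements (the paper's own $\mathbf{T}^{n_b}$ in (\ref{eq:analyical_lump}) is time-varying, so its alternative base error is not constant either); your joint-$1$ symmetry, which inserts a fixed screw about the first joint axis between $\mathbf{T}^{b-}_b$ and the chain, is exactly the paper's induction base case and is the one degeneracy that survives as a trajectory-level, constant-calibration observational equivalence. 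Both resolutions you offer are sound.
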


\begin{proof}

Let \textit{Modified Denavit-Hartenberg Parameters} be used to define each forward kinematic joint transform.
Therefore: $\mathbf{T}_i^{i-1}(q^i_t) = \mathbf{T}_x(\alpha^i, a^i) \mathbf{T}_z(\theta^i, d^i)$ where
\begin{align*}
  \mathbf{T}_x(\alpha^i, a^i) &=
    \begin{bmatrix}
        1 & 0 & 0 & a^i \\
        0 & cos(\alpha^i) & -sin(\alpha^i) & 0 \\
        0 & sin(\alpha^i) & cos(\alpha^i) & 0 \\
        0 & 0 & 0 & 1
    \end{bmatrix} \\
    \mathbf{T}_z(\theta^i, d^i) &= 
    \begin{bmatrix}
        cos(\theta^i) & -sin(\theta^i) & 0 & 0 \\
        sin(\theta^i) & cos(\theta^i) & 0 & 0 \\
        0 & 0 & 1 & d^i \\
        0 & 0 & 0 & 1
    \end{bmatrix} 
\end{align*}
and $q^i_t$ is plugged into $\theta^i$ or $d^i$ for a revolute and prismatic joint respectively.
A revolute joint transform with a joint angle of $\omega + \psi \in \mathbb{R}$, $\mathbf{T}_i^{i-1}(\omega + \psi)$, can be expanded using the Modified Denavit-Hartenberg Parameters:
\begin{align}
    \label{eq:start_of_error_extraction}
    &\mathbf{T}_x(\alpha^i, a^i) \mathbf{T}_{z}(\omega + \psi, d^i)\\
    &\mathbf{T}_{x}(\alpha^i, a^i)\mathbf{T}_{z}(\omega, 0) \mathbf{T}^{-1}_{x}(\alpha^i, a^i)\mathbf{T}_{x}(\alpha^i, a^i)\mathbf{T}_{z}(\psi, d^i)\\
    &\mathbf{T}_i(\omega) \mathbf{T}_i^{i-1}(\psi)
\end{align}
where $\mathbf{T}_i(\omega) = \mathbf{T}_{x}(\alpha^i, a^i)\mathbf{T}_{z}(\omega, 0) \mathbf{T}^{-1}_{x}(\alpha^i, a^i)$. Likewise for a prismatic joint transform:
\begin{align}
    &\mathbf{T}_x(\alpha^i, a^i) \mathbf{T}_{z}(\theta^i, \omega + \psi)\\
    &\mathbf{T}_{x}(\alpha^i, a^i)\mathbf{T}_{z}(0, \omega) \mathbf{T}^{-1}_{x}(\alpha^i, a^i)\mathbf{T}_{x}(\alpha^i, a^i)\mathbf{T}_{z}(\theta^i, \psi)\\
    &\mathbf{T}_i(\omega) \mathbf{T}_i^{i-1}(\psi) \label{eq:end_of_error_extraction}
\end{align}
where $\mathbf{T}_i(\omega) = \mathbf{T}_{x}(\alpha^i, a^i)\mathbf{T}_{z}(0, \omega) \mathbf{T}^{-1}_{x}(\alpha^i, a^i)$.
Note that the same notation, $\mathbf{T}_i(\omega)$, is used for rotational and prismatic joints for simplified notation in the coming equations.

Using these expansions, we will show using induction that portions of the joint angle errors can be expanded out as follows:
\begin{equation}
    \prod \limits_{i=1}^{n_b} \mathbf{T}_i^{i-1}(\tilde{q}^i_t + e^i_t) = \mathbf{T}^{n_b} \prod \limits_{i=1}^{n_b} \mathbf{T}_i^{i-1}(\tilde{q}^i_t + \beta_i e^i_t)
    \label{eq:result_of_lump_proof}
\end{equation}
where
\begin{multline}
\label{eq:analyical_lump}
\mathbf{T}^{n_b} = 
   \prod \limits_{k=1}^{n_b}  \Big( \prod \limits_{i=1}^{k-1} \mathbf{T}_i^{i-1}(\tilde{q}^i_t + \beta_i e^i_t) \Big) \mathbf{T}_{k}((1-\beta_i)e^k_t) \\ \Big( \prod \limits_{i=1}^{k-1} \mathbf{T}_i^{i-1}(\tilde{q}^i_t+ \beta_i e^i_t) \Big)^{-1}
\end{multline}
and $\beta_i \in \mathbb{R}$ for $i=1,\dots,n_b$ is an arbitrary portion of the joint angle error not to be lumped into $\mathbf{T}^{n_b}$.
For the base case of $n_b = 1$ in (\ref{eq:result_of_lump_proof}), the error of the joint angle can be pulled out:
\begin{equation}
  \mathbf{T}_1^0(\tilde{q}^i_t + e^i_t) = \mathbf{T}_1((1 - \beta_1) e^i_t) \mathbf{T}_1^0(\tilde{q}^i_t + \beta_1 e^i_t)
\end{equation}
using (\ref{eq:start_of_error_extraction}) - (\ref{eq:end_of_error_extraction}). 

Assuming (\ref{eq:result_of_lump_proof}) holds true for $n_b = m$, then for $n_b = m+1$  the left-hand-expression from (\ref{eq:result_of_lump_proof}) can be rewritten as:
\begin{equation}
    \mathbf{T}^{m} \prod \limits_{i=1}^{m} \mathbf{T}_i^{i-1}(\tilde{q}^i_t + \beta_i e^i_t) \mathbf{T}_{m+1}^{m}(\tilde{q}^{m+1}_t + e^{m+1}_t)
\end{equation}
which expands to
\begin{multline}
    \mathbf{T}^{m} \prod \limits_{i=1}^{m} \mathbf{T}_i^{i-1}(\tilde{q}^i_t + \beta_i e^i_t) \mathbf{T}_{m+1}((1 - \beta_{m+1}) e^{m+1}_t)\\ \mathbf{T}^{m}_{m+1}(\tilde{q}^{m+1}_t + \beta_{m+1} e^{m+1}_t)
\end{multline}
using (\ref{eq:start_of_error_extraction}) - (\ref{eq:end_of_error_extraction}). Expanding the expression one more time:
\begin{multline}
    \mathbf{T}^{m} \prod \limits_{i=1}^{m} \mathbf{T}_i^{i-1}(\tilde{q}^i_t + \beta_i e^i_t) \mathbf{T}_{m+1}((1 - \beta_{m+1}) e^{m+1}_t)\\ \Big( \prod \limits_{i=1}^{m} \mathbf{T}_i^{i-1}(\tilde{q}^i_t + \beta_i e^i_t) \Big)^{-1} \prod \limits_{i=1}^{m} \mathbf{T}_i^{i-1}(\tilde{q}^i_t + \beta_i e^i_t)\\
    \mathbf{T}^{m}_{m+1}(\tilde{q}^{m+1}_t + \beta_{m+1} e^{m+1}_t)
\end{multline}
which is equivalent to (\ref{eq:result_of_lump_proof}). Therefore (\ref{eq:result_of_lump_proof}) holds for $n_b = 1, 2, \dots$ by mathematical induction.

Let $P(\mathbf{y}_t| \mathbf{T}^{b-}_b$, $e^1_t, \dots, e^{n_j}_t)$ be a proper probability distribution and the observation model of some feature $\mathbf{y}$ from the surgical tool in the camera frame parameterized by all the unknowns in the kinematic chain described in (\ref{eq:not_perfect_fk}).
Since $P(\mathbf{y}_t| \cdot)$ cannot describe a feature from the kinematic links preceding joint $n_b$, the observation model for some feature $\mathbf{y}$ can be re-parameterized to:
\begin{equation}
    \label{eq:many_to_one}
    P \Big(\mathbf{y}_t| \mathbf{T}^{c}_{b-} \mathbf{T}^{b-}_{b} \prod \limits_{i=1}^{n_b} \mathbf{T}_i^{i-1}(\tilde{q}^i_t + e^i_t), e^{n_b+1}_t, \dots, e^{n_j}_t \Big)
\end{equation}
The equality in (\ref{eq:result_of_lump_proof}) implies that the observation is not a one-to-one mapping from the parameter space $(\mathbf{T}^{b-}_{b}$, $e^1_t, \dots, e^{n_j}_t)$ to the camera observation (output of $P(\mathbf{y}_t| \cdot)$). In fact, for each observation generated by $P(\mathbf{y}_t| \cdot)$, there are an infinite solutions for the inverse mapping which are spanned by $\beta_1, \dots, \beta_{n_b}$. Since there are infinite observational equivalencies, the parameters are not identifiable.

\end{proof}

The equality in (\ref{eq:result_of_lump_proof}), which causes the lack of identifiability, can be interpreted as moving the joint errors from the kinematic chain to the base transform of the robot.
Lack of identifiability implies undesirable properties for parameter estimation such as rank deficiency in the Fischer Information Matrix \cite{parameter_identification}.
Furthermore, it shows the inability to estimate both errors in joint angles and base-to-camera transform.

\subsection{Lumped Error Transform for Estimation}
Due to Claim \ref{claim:inf_solutions}, it is infeasible to estimate all of the error parameters described in (\ref{eq:not_perfect_fk}) when only using camera data.
Therefore, we propose a parameter reduction technique where all the errors of the first $n_b$ joints are lumped together with the error in base-to-camera transform.
Hence, we call it the \textit{Lumped Error} Transform.

Using (\ref{eq:result_of_lump_proof}), (\ref{eq:not_perfect_fk}) can be re-written as:
\begin{equation}
    \label{eq:final_estimation_equation}
    \overline{\mathbf{o}}^{c}_t = \mathbf{T}^{c}_{b-} \mathbf{T}^{b-}_{n_b} (\mathbf{w}_t, \mathbf{b}_t)\prod \limits_{i=1}^{n_b}  \mathbf{T}_i^{i-1}(\tilde{q}^i_t) \prod \limits_{i=n_b+1}^{j}  \mathbf{T}_i^{i-1}(\tilde{q}^i_t + e^i_t)\overline{\mathbf{o}}^j
\end{equation}
where $\mathbf{T}^{b-}_{n_b}(\mathbf{w}_t, \mathbf{b}_t) \in SE(3)$ is the Lumped Error transform of all the first $n_b$ joint errors and the base-to-camera transform calibration error, $\mathbf{T}_b^{b-}$, and it is parameterized by an orientation, $\mathbf{w}_t \in \mathbb{R}^3$, and translation, $\mathbf{b}_t \in \mathbb{R}^3$.
The Lumped Error analytical solution from the joint angle errors and error in base-to-camera transform is $\mathbf{T}^{b-}_{n_b}(\mathbf{w}_t, \mathbf{b}_t) = \mathbf{T}_b^{b-} \mathbf{T}^{n_b}$ where $\mathbf{T}^{n_b}$ is defined in (\ref{eq:analyical_lump}) with $\beta_i = 0$ for $i=1,\dots,n_b$.

Intuitively, the Lumped Error transform is virtually adjusting the base of the kinematic chain for the robot in the camera frame.
The virtual adjustments are done to fit the error of the first $n_b$ joint angles and any error in base-to-camera transform.
The Lumped Error transform removes the many to one mapping shown in (\ref{eq:many_to_one}).
Furthermore, it is a significant reduction of the parameters that need to be estimated for robotic tool tracking.
With a total of $n_j$ joints and the $SE(3)$ error transforms, $\mathbf{T}_{b-}^b$ and $\mathbf{T}^{b-}_{n_b}(\mathbf{w}_t, \mathbf{b}_t)$, being estimated using axis-angle and a translation vector, then (\ref{eq:not_perfect_fk}) has $n_j + 6$ parameters to estimate while (\ref{eq:final_estimation_equation}) has $n_j-n_b + 6$ parameters. 

Even with this parameter reduction, it can still be challenging to constrain all of the parameters with image observations.
For example, from a single image frame, 4 pixel point detections are required to constrain the Lumped Error transform \cite{p3p_solution} and additional point detections would be needed for the joint errors $e^{n_b +1}_t, \dots, e^{n_j}_t$.
Therefore, we propose the following simplification to (\ref{eq:final_estimation_equation}) if there is not an abundance of features:
\begin{equation}
    \label{eq:simplification}
    e^{i}_t \approx 0 \text{ for } i = n_b+1, \dots, n_j
\end{equation}
With this simplification, only the Lumped Error transform needs to be estimated.
This simplification can be done in situations where the error from joints $n_b+1, \dots, n_j$ does not propagate through the kinematic chain dramatically.
In cases where the camera focuses on an articulated wrist or gripper as shown in Fig. \ref{fig:PSMMult}, this is an acceptable assumption as their link lengths are short hence reducing their sensitivity to error.

The resulting expression when combining the simplification in (\ref{eq:simplification}) with (\ref{eq:final_estimation_equation}) is equivalent to what previous literature in robotic surgical tool tracking described as the KCS which was developed for RCM based robots \cite{zhao2015efficient}.
Therefore, the KCS tracking formulation not only corrects for the error in base-to-camera transform, but also joint angle errors.

The Lumped Error can also be moved to the right hand-side of the first $n_b$ joint transforms in (\ref{eq:final_estimation_equation}) giving the following expression:
\begin{equation}
        \label{eq:right_handed_lumped_error}
        \overline{\mathbf{o}}^{c}_t = \mathbf{T}^{c}_{b-}  \prod \limits_{i=1}^{n_b} \mathbf{T}_i^{i-1}(\tilde{q}^i_t)  \mathbf{T}^{n_b, b-}_{n_b+1}(\mathbf{w}_t,\mathbf{b}_t) \prod \limits_{i=n_b+1}^{j}  \mathbf{T}_i^{i-1}(\tilde{q}^i_t + e^i_t)\overline{\mathbf{o}}^j
\end{equation}
where the right hand-side Lumped Error is:
\begin{equation}
    \label{eq:right_hand_lumped_analytical}
    \mathbf{T}^{n_b, b-}_{n_b+1}(\mathbf{w}_t,\mathbf{b}_t) = \Big( \prod \limits_{i=1}^{n_b} \mathbf{T}_i^{i-1}(\tilde{q}^i_t) \Big)^{-1} \mathbf{T}^{b-}_{n_b}(\mathbf{w}_t,\mathbf{b}_t) \prod \limits_{i=1}^{n_b} \mathbf{T}_i^{i-1}(\tilde{q}^i_t)
\end{equation}
which is equivalent to the tracking method proposed by Hao et. al. \cite{Hao} and shows their method compensates for both errors in base-to-camera transform and joint angle errors.

\subsection{Extension to Robotic Camera Arm}

In the case of eye-in-hand, the constant true base-to-camera transform, $\mathbf{T}^{c}_{b} = \mathbf{T}^{c}_{b-} \mathbf{T}^{b-}_{b}$, described in (\ref{eq:not_perfect_fk}) is replaced with a kinematic chain as follows:
\begin{equation}
    \overline{\mathbf{o}}^{c}_t = \mathbf{T}^c_{c_n} \Big( \prod \limits_{i=1}^{n}  \mathbf{T}^{c_{i-1}}_{c_{i}}(q^{c_i}_t) \Big)^{-1} \mathbf{T}^{c_b}_{b} \prod \limits_{i=1}^{j} \mathbf{T}_i^{i-1}(\tilde{q}^i_t + e^i_t) \overline{\mathbf{o}}^j
    \label{eq:fk_with_robotic_camera}
\end{equation}
where $\mathbf{T}^c_{c_n} \in SE(3)$ is the static transform from the final joint to the camera frame, $\mathbf{T}^{c_{i-1}}_{c_{i}}(q^{c_i}_t) \in SE(3)$ is the $i$-th joint transform of the camera arm with joint angle $q^{c_i}_t$, and $\mathbf{T}^{c_b}_{b} \in SE(3)$ is the base-to-base transform (i.e. transform from the base of the robotic tool to the base of the robotic camera arm).

Calibration of the base-to-base transform is even more challenging than calibrating the base-to-camera transform since the kinematic chain is extended by the camera arm.
Joint angle errors are also still assumed. 
Let $\tilde{q}^{c_i}_t$ and $e^{c_i}_t$ be the joint angle measurement and measurement error respectively for joint angle $c_i$ on the camera arm.
The base to base transform $\mathbf{T}^{c_b}_{b}$ is split into the calibrated base to base transform, $\mathbf{T}^{c_b}_{b-}$, and the error in calibration, $\mathbf{T}^{b-}_{b}$.
Therefore (\ref{eq:fk_with_robotic_camera}) is rewritten as:
\begin{multline}
    \overline{\mathbf{o}}^{c}_t = \mathbf{T}^c_{c_n} \Big( \prod \limits_{i=1}^{n}  \mathbf{T}^{c_{i-1}}_{c_{i}}(\tilde{q}^{c_i}_t + e^{c_i}_t) \Big)^{-1} \mathbf{T}^{c_b}_{b-} \mathbf{T}^{b-}_{b} \\ \prod \limits_{i=1}^{j} \mathbf{T}_i^{i-1}(\tilde{q}^i_t + e^i_t) \overline{\mathbf{o}}^j
    \label{eq:not_perfect_fk_with_robotic_camera}
\end{multline}
The kinematic links from the camera arm are typically not visible in the camera frame.
Therefore, the same non-identifiability issue from Claim \ref{claim:inf_solutions} extends to joint angle errors $e^{c_i}_t$ for $i=1,\dots c_n$.
To solve this issue, the Lumped Error from (\ref{eq:final_estimation_equation}) is applied to the camera arm's kinematic chain.
This results in:
\begin{multline}
    \overline{\mathbf{o}}^{c}_t = \mathbf{T}^c_{c_n} \Big( \prod \limits_{i=1}^{n}  \mathbf{T}^{c_{i-1}}_{c_{i}}(\tilde{q}^{c_i}_t) \Big)^{-1} \mathbf{T}^{c_b}_{c_n}(\mathbf{w}^c_t,\mathbf{b}^c_t)^{-1} \mathbf{T}^{c_b}_{b-} \\ \mathbf{T}^{b-}_{n_b}(\mathbf{w}_t,\mathbf{b}_t)  \prod \limits_{i=1}^{n_b}  \mathbf{T}_i^{i-1}(\tilde{q}^i_t) \prod \limits_{i=n_b+1}^{j}  \mathbf{T}_i^{i-1}(\tilde{q}^i_t + e^i_t)\overline{\mathbf{o}}^j
    \label{eq:intermediate_lumped_error_endoscopic_case}
\end{multline}
where $\mathbf{T}^{c_b}_{c_n}(\mathbf{w}^c_t,\mathbf{b}^c_t)$ analytical expression from joint angles is described in (\ref{eq:result_of_lump_proof}) with $\beta_i = 0$ for $i = 1,\dots, n$. 
Continuing further, (\ref{eq:intermediate_lumped_error_endoscopic_case}) can be reduced to a single unknown pose parameterized by orientation and translation vectors $\mathbf{w}^l_t, \mathbf{b}^l_t \in \mathbb{R}^3$ respectively and unknown joint errors $e^i_t$ for $i=n_b+1,\dots, n_j$. 
The new expression is: 
\begin{multline}
    \overline{\mathbf{o}}^{c}_t = \mathbf{T}^c_{c_n} \Big( \prod \limits_{i=1}^{n}  \mathbf{T}^{c_{i-1}}_{c_{i}}(\tilde{q}^{c_i}_t) \Big)^{-1}  \mathbf{T}^{c_b}_{b-} \mathbf{T}^{c_n}_{n_b} (\mathbf{w}^l_t,\mathbf{b}^l_t) \\ \prod \limits_{i=1}^{n_b} \mathbf{T}_i^{i-1}(\tilde{q}^i_t) \prod \limits_{i=n_b+1}^{j} \mathbf{T}_i^{i-1}(\tilde{q}^i_t + e^i_t)\overline{\mathbf{o}}^j
    \label{eq:lumped_error_endoscopic_case}
\end{multline}
where 
\begin{equation}
    \label{eq:explicit_endoscope_lumped_error}
    \mathbf{T}^{c_n}_{n_b} (\mathbf{w}^l_t,\mathbf{b}^l_t) = \big( \mathbf{T}^{c_b}_{b-} \big)^{-1} \mathbf{T}^{c_b}_{c_n}(\mathbf{w}^c_t,\mathbf{b}^c_t)^{-1} \mathbf{T}^{c_b}_{b-} \mathbf{T}^{b-}_{n_b}(\mathbf{w}_t,\mathbf{b}_t)
\end{equation}

The Lumped Error that would be estimated in this case, $\mathbf{T}^{c_n}_{n_b} (\mathbf{w}^l_t,\mathbf{b}^l_t)$, holds similar properties to the previous case of the stationary camera Lumped Error.
The base of the robotic manipulator relative to the camera arm base is virtually adjusted to compensate for the error in the first $n_b$ joint readings in it and all the joint readings in the robotic camera arm.
This Lumped Error also reduces the number of parameters that need to be estimated to $n_j - n_b + 6$ while in (\ref{eq:not_perfect_fk_with_robotic_camera}) there are $n_j + n +6$ unknown parameters.
For even fewer parameters to estimate, the simplification in (\ref{eq:simplification}) can be applied resulting in only 6 parameters.

\subsection{Particle Filter for Tracking of Lumped Error}

\begin{algorithm}[t]
    \caption{Particle Filter to Track Lumped Error}
     \label{alg:pf}
    \SetKwInOut{Input}{Input}
    \SetKwInOut{Output}{Output}
    \Input{Initial base-to-camera transform $\mathbf{T}^{c}_{b-}$}
    \Output{Estimated Lumped Error and Observable Joint Errors $\hat{\mathbf{w}}_{t}, \hat{\mathbf{b}}_{t}, \hat{\mathbf{e}}_t$}
    Initialize particle list $P_{0|0} = \{\alpha_{0|0}^{(p)}, \hat{\mathbf{w}}^{(p)}_{0|0}, \hat{\mathbf{b}}^{(p)}_{0|0}, \hat{\mathbf{e}}^{(p)}_{0|0}\}_{p=1}^N$ \\
    \tcp{Initialize particle distribution}
    \label{alg_pf:begin_initialize}
    \For{particle $p \in P_{0|0}$}{
        $\begin{bmatrix}
        \hat{\mathbf{w}}^{(p)}_{0|0}, \hat{\mathbf{b}}^{(p)}_{0|0} \end{bmatrix}^\top \sim \mathcal{N} ( \mathbf{0}, \mathbf{\Sigma}_{\mathbf{w},\mathbf{b},0} )$\\
        $\alpha_{0|0}^{(p)} \leftarrow \mathcal{G} \left( \begin{bmatrix} \hat{\mathbf{w}}^{(p)}_{0|0}, \hat{\mathbf{b}}^{(p)}_{0|0} \end{bmatrix}^\top, \mathbf{\Sigma}_{\mathbf{w},\mathbf{b},0}\right)$\\
        $\hat{\mathbf{e}}^{(p)}_{0|0} \sim \mathcal{U}(-\mathbf{a}_{\hat{\mathbf{e}}}, \mathbf{a}_{\hat{\mathbf{e}}})$\\
    }
    $\{ \alpha^{(p)}_{0|0} \}_{p=1}^N \leftarrow normalizeWeights \left( \{ \alpha^{(p)}_{0|0} \}_{k=1}^N \right)$\\
    \label{alg_pf:end_initialize}

    \tcp{Main Loop}
    \While{image and joint readings, $(\mathbf{I}_{t}$, $\tilde{\mathbf{q}}_{t})$, arrive}{
    \tcp{Predict}
    Initialize new particle list $P_{t|t-1}$\\
    \label{alg_pf:begin_predict}
    \For{particle $p \in P_{t|t-1}$}{
        $q \sim P_{t-1|t-1}$ weights $\{\alpha^{(1)}_{t-1|t-1} ,\dots , \alpha^{(N)}_{t-1|t-1}\}$\\
        $\begin{bmatrix} \hat{\mathbf{w}}^{(p)}_{t|t-1}, \hat{\mathbf{b}}^{(p)}_{t|t-1} \end{bmatrix}^\top \sim \mathcal{N} \left( \begin{bmatrix} \hat{\mathbf{w}}^{(q)}_{t-1|t-1}, \hat{\mathbf{b}}^{(q)}_{t-1|t-1} \end{bmatrix}^\top, \mathbf{\Sigma}_{\mathbf{w},\mathbf{b},t} \right)$\\
        $\alpha^{(p)}_{t|t-1} \leftarrow \mathcal{G} \left( \begin{bmatrix} \hat{\mathbf{w}}^{(p)}_{t|t-1}, \hat{\mathbf{b}}^{(p)}_{t|t-1} \end{bmatrix}^\top, \mathbf{\Sigma}_{\mathbf{w},\mathbf{b},t}\right) $ \\
        $\hat{\mathbf{e}}^{(p)}_{t|t-1} \sim \mathcal{N} \left( \hat{\mathbf{e}}^{(q)}_{t-1|t-1}, \mathbf{\Sigma}_{\hat{\mathbf{e}},t} \right)$\\
        $\alpha^{(p)}_{t|t-1} \leftarrow \alpha^{(p)}_{t|t-1} \cdot \mathcal{G} \left( \hat{\mathbf{e}}^{(p)}_{t|t-1}, \mathbf{\Sigma}_{\hat{\mathbf{e}},t} \right) $\\
    }
    \label{alg_pf:end_predict}
    \tcp{Update}
    $\mathbf{m}_{t} \leftarrow detectRobotPointFeatures(\mathbf{I}_{t})$\\
    $\boldsymbol{\rho}_{t}, \boldsymbol{\phi}_{t} \leftarrow detectRobotEdgeFeatures(\mathbf{I}_{t})$\\
    \For{particle $p \in P_{t|t-1}$}{
        $\hat{\mathbf{m}}_{t} \leftarrow projPoints( \hat{\mathbf{w}}^{(p)}_{t|t-1}, \hat{\mathbf{b}}^{(p)}_{t|t-1}, \hat{\mathbf{e}}^{(p)}_{t|t-1}, \tilde{\mathbf{q}}_{t})$\\
       \label{alg_pf:update_proj_point}
        $A_m, \mathbf{C}^m \leftarrow associatePoints(\mathbf{m}_{t}, \hat{\mathbf{m}}_{t})$\\
        \label{alg_pf:update_point_associate}
        $\alpha^{(p)}_{t|t-1} \leftarrow \alpha^{(p)}_{t|t-1} \cdot pointObsModel(A_m, \mathbf{C}^m)$\\
        \label{alg_pf:update_point_obs_mod}
        $\hat{\boldsymbol{\rho}}_{t}, \hat{\boldsymbol{\phi}}_{t} \leftarrow projEdges( \hat{\mathbf{w}}^{(p)}_{t|t-1}, \hat{\mathbf{b}}^{(p)}_{t|t-1}, \hat{\mathbf{e}}^{(p)}_{t|t-1}, \tilde{\mathbf{q}}_{t})$\\
        \label{alg_pf:update_proj_edge}
        $A_l, \mathbf{C}^l \leftarrow associateEdges([\boldsymbol{\rho}_{t}, \boldsymbol{\phi}_{t}],[ \hat{\boldsymbol{\rho}}_{t}, \hat{\boldsymbol{\phi}}_{t}])$\\
        \label{alg_pf:update_edge_associate}
        $\alpha^{(p)}_{t|t-1} \leftarrow \alpha^{(p)}_{t|t-1} \cdot edgeObsModel(A_l, \mathbf{C}^l)$\\
        \label{alg_pf:update_edge_obs_mod}
    }
    $P_{t|t} \leftarrow P_{t|t-1}$\\
    $\{ \alpha^{(p)}_{t|t} \}_{k=1}^N \leftarrow normalizeWeights \left( \{ \alpha^{(p)}_{t|t} \}_{k=1}^N \right)$\\
    \If{$numEffectiveParticles(P_{t|t}) > N_{eff}$}{
        $P_{t|t} \leftarrow stratifyResampling(P_{t|t})$
    }
    $\begin{bmatrix} \hat{\mathbf{w}}_{t}, \hat{\mathbf{b}}_{t}, \hat{\mathbf{e}}_t \end{bmatrix}^\top = \sum \limits_{p=1}^N \alpha^{(p)}_{t|t} \begin{bmatrix} \hat{\mathbf{w}}^{(p)}_{t|t}, \hat{\mathbf{b}}^{(p)}_{t|t}, \hat{\mathbf{e}}^{(p)}_{t|t} \end{bmatrix}^\top$\\
    }
 \end{algorithm}

The result in (\ref{eq:final_estimation_equation}) reduced the number of parameters that are required to be estimated to the Lumped Error transform, $\mathbf{T}_{n_b}^{b-}(\hat{\mathbf{w}}_t, \hat{\mathbf{b}}_t)$, and joint errors: $\hat{\mathbf{e}}_t := \begin{bmatrix} \hat{e}^{n_b+1}_t & \dots &\hat{e}^{n_j}_t \end{bmatrix}^{\top}$.
By these reductions, one can use previously developed methods of parameter estimation to track it such as the Extended Kalman Filter, Unscented Kalman Filter, or a particle filter with updates from camera images.
For our approach, we utilized a particle filter because of its flexibility to model the posterior probability density function with a finite-number of samples \cite{particle_filter_robotics} rather than using parametric model such as a Kalman Filter.
It also has found recent success in estimating poses \cite{robust_particle_filter} which is needed here for the Lumped Error transform.
The coming sections describe tracking of the parameters by defining the motion models and observation models.
The last section covers the few modifications necessary for the eye-in-hand case.
An outline of the proposed particle filter is shown in Algorithm \ref{alg:pf} and specific parameter values used in the experiments are described in Appendix \ref{appendix:particle_filter_details}.

\subsubsection{Motion Model}
The joint angle errors are initialized from a uniform distribution and have a motion model of additive zero mean Gaussian noise:
\begin{align}
    \hat{\mathbf{e}}_{0} \sim \mathcal{U}(-\mathbf{a}_{\hat{\mathbf{e}}}, \mathbf{a}_{\hat{\mathbf{e}}}) &&
    \hat{\mathbf{e}}_{t+1} \sim \mathcal{N} ( \hat{\mathbf{e}}_{t}, \mathbf{\Sigma}_{\hat{\mathbf{e}},t+1} )
    \label{eq:pf_joint_motion_model}
\end{align}
where $\mathbf{a}_{\hat{\mathbf{e}}} \in \mathbb{R}^{n_j - n_b}$ describes the bounds of constant joint angle error and $\mathbf{\Sigma}_{\hat{\mathbf{e}},t+1} \in \mathbb{R}^{(n_j - n_b) \times (n_j - n_b)}$ is a covariance matrix.
The initialization is done to capture joint angle biases, and a Weiner Process is chosen for the motion model due to its ability to generalize over a large number of random processes.

Let the tracked Lumped Error, $\mathbf{T}_{n_b}^{b-}(\hat{\mathbf{w}}_t, \hat{\mathbf{b}}_t)$ , be represented by an axis angle vector, $\hat{\mathbf{w}}_t \in \mathbb{R}^3$, and translation vector, $\hat{\mathbf{b}}_t \in \mathbb{R}^3$.
Their initialization and motions are defined as:
\begin{equation}
\label{eq:lumped_error_prediction}
\begin{split}
    \begin{bmatrix}
    \hat{\mathbf{w}}_{0}, \hat{\mathbf{b}}_{0}  \end{bmatrix}^\top &\sim \mathcal{N} ( \mathbf{0}, \mathbf{\Sigma}_{\mathbf{w},\mathbf{b},0} ) \\ \begin{bmatrix}
    \hat{\mathbf{w}}_{t+1}, \hat{\mathbf{b}}_{t+1}  \end{bmatrix}^\top &\sim \mathcal{N} (  \begin{bmatrix}
    \hat{\mathbf{w}}_{t}, \hat{\mathbf{b}}_{t}  \end{bmatrix}^\top,  \mathbf{\Sigma}_{\mathbf{w},\mathbf{b},t+1} )
\end{split}
\end{equation}
where $\mathbf{\Sigma}_{\mathbf{w},\mathbf{b},t} \in \mathbb{R}^{6 \times 6}$ is a covariance matrix.
A Weiner Process is once again chosen for the same reason as the joint angle error motion model.
Integration of the initial distribution and motion model in the particle filter is shown in lines \ref{alg_pf:begin_initialize} to \ref{alg_pf:end_initialize} and \ref{alg_pf:begin_predict} to \ref{alg_pf:end_predict} respectively in Algorithm \ref{alg:pf}.

\subsubsection{Observation Model}
To update the Lumped Error from images, features need to be detected and a corresponding observation model for them must be defined.
The coming observation models will generalize for any point or edge features.
Let $\mathbf{m}_t$ be a list of detected point features in the image frame from the projected robot tool.
By following the standard camera pin-hole model combined with (\ref{eq:final_estimation_equation}), the camera projection equation for the $k$-th point is:
\begin{multline}
    \label{eq:marker_camera_projection}
    \hat{\mathbf{m}}_k(\hat{\mathbf{w}}_t,\hat{\mathbf{b}}_t, \hat{\mathbf{e}}_t) = \frac{1}{s} \mathbf{K} \mathbf{T}^{c}_{b-} \mathbf{T}^{b-}_{n_b}(\hat{\mathbf{w}}_t,\hat{\mathbf{b}}_t) \prod \limits_{i=1}^{n_b}  \mathbf{T}_i^{i-1}(\tilde{q}^i_t) \\ \prod \limits_{i=n_b+1}^{j_k}  \mathbf{T}_i^{i-1}(\tilde{q}^i_t + \hat{e}^i_t)\overline{\mathbf{p}}^{j_k}
\end{multline}
where $\frac{1}{s} \mathbf{K}$ is the camera projection operator with intrinsic matrix $\mathbf{K}$ and known location $\mathbf{p}^{j_k}$ on joint link $j_k$. 

Similarly, let the paired lists $\boldsymbol{\rho}_t, \boldsymbol{\phi}_t$ be the parameters describing the detected edges in the image from the projected robot tool.
The parameters describe an edge in the image frame using the Hough Transform \cite{hough_transform}, so the $k$-th pair, $\rho^k_t$ and $\phi^k_t$ parameterize the $k$-th detected edge with the following equation:
\begin{equation}
    \label{eq:hough_shaft_parameters}
    \rho^k_t = u \text{ cos}(\phi^k_t) + v \text{ sin}(\phi^k_t)
\end{equation}
where $(u,v)$ are pixel coordinates.
Let the projection equations for the $i$-th edge be $\big(\hat{\rho}^i(\hat{\mathbf{w}}_t,\hat{\mathbf{b}}_t, \hat{\mathbf{e}}_t), \hat{\phi}^i(\hat{\mathbf{w}}_t,\hat{\mathbf{b}}_t, \hat{\mathbf{e}}_t)\big)$.
These projection equations will need to be defined based on the geometry of the robot.
An example of a cylindrical shape is shown in Appendix \ref{appendix:shaft_projection}.
Furthermore, Chaumette derived the projection equations for a multitude of geometric primitives and can be referred to for additional shapes \cite{project_cylinder}.
The point and edge projections are computed on lines \ref{alg_pf:update_proj_point} and \ref{alg_pf:update_proj_edge} respectively in Algorithm \ref{alg:pf}.

From the lists of detected features, there may be false detections, and they need to be associated with the correct point position ($\mathbf{p}^{j_i}$) or edge on the robot.
To accomplish this, a cost matrix, $\mathbf{C}^m$, is generated between the detected and projected features.
For the $k$-th detected point feature and $i$-th projected point, the cost is:
\begin{equation}
    C^m_{k,i} = \gamma_m || \mathbf{m}^k_t - \hat{\mathbf{m}}_i(\hat{\mathbf{w}}_t,\hat{\mathbf{b}}_t, \hat{\mathbf{e}}_t) ||^2 
\end{equation}
where $\gamma_m$ is a tuned parameter. Likewise a cost matrix, $\mathbf{C}^l$, is computed for the edges, and the $k$-th detected edge and $i$-th projected edge the cost is:
\begin{equation}
    C^l_{k,i} = \gamma_{\rho} | \rho^k_t -   \hat{\rho}_i ( \hat{\mathbf{w}}_t,\hat{\mathbf{b}}_t, \hat{\mathbf{e}}_t)| + \gamma_{\phi} | \phi^k_t -   \hat{\phi}_i ( \hat{\mathbf{w}}_t,\hat{\mathbf{b}}_t, \hat{\mathbf{e}}_t )|
\end{equation}
where $\gamma_{\rho}$ and $\gamma_{\phi}$ are tuned parameters. 

A greedy matching technique is used to make associations between the detected and projected features because of the computational efficiency.
The costs are sorted from lowest to highest, and the first $(k,i)$ pair is matched and added to the set $A_m$ or $A_l$ for points and edges respectively.
All subsequent costs associated with either detection $k$ or projection $i$ are removed from the sorted list.
This is repeated until a maximum cost of $C^m_{max}$ or $C^l_{max}$ is reached for points and edges respectively.
By limiting the maximum cost for association, false detections can be filtered out.
This association technique is conducted on lines \ref{alg_pf:update_point_associate} and \ref{alg_pf:update_edge_associate} in Algorithm \ref{alg:pf} for points and edges respectively.

The observation model wraps the associations and their costs into a probability function dependent on the state, so the filter can update the states properly.
For the list of point features, the probability is: 
\begin{equation}
    P( \mathbf{m}_{t}  | \hat{\mathbf{w}}_t, \hat{\mathbf{b}}_t, \hat{\mathbf{e}}_t ) \propto (n_m - |A_m|) e^{-C^m_{max}} + \sum \limits_{ k,i  \in A_m } e^{-C^m_{k,i}}
    \label{eq:pf_observation_model_markers}
\end{equation}
where there are a total of $n_m$ detectable point features on the robot.
Similarly, the probability of the list of detected edges is:
\begin{equation}
    P( \boldsymbol{\rho}_{t}, \boldsymbol{\phi}_{t} |\hat{\mathbf{w}}_t, \hat{\mathbf{b}}_t, \hat{\mathbf{e}}_t)  \propto (n_l - |A_l|)e^{-C^l_{max}}  + \sum \limits_{ k,i \in A_l } e^{-C^l_{k,i}}
\end{equation}
where there are a total of $n_l$ detectable edge features on the robot.
The probability distributions can be viewed as a summation of Gaussians centered about the projected features.
The individual Gaussian probabilities are bounded and clipped by the maximum cost for association.
Clipping the Gaussians is preferred since in the cases of missed feature detections, the posterior probability from the filter does not go to zero. 
An additional advantage of using a particle filter for tracking the Lumped Error is that these observation models do not need to be normalized.
In this case, finding the normalization factor would be challenging due to the matching complexity and clipping of Gaussians.
These observation models update the particle filter on lines \ref{alg_pf:update_point_obs_mod} and \ref{alg_pf:update_edge_obs_mod} in Algorithm \ref{alg:pf} for points and edges respectively.

\subsubsection{Modifications for Eye-in-Hand Configuration}

The explicitly tracked joint errors, $\hat{\mathbf{e}}_t$, remains the same since they are still the only joints visible in the camera frame.
However, the tracked pose is now $\mathbf{T}^{c_n}_{n_b}(\hat{\mathbf{w}}^l_t, \hat{\mathbf{b}}^l_t)$, described in (\ref{eq:lumped_error_endoscopic_case}).
The tracked parameters $\hat{\mathbf{w}}^l_t, \hat{\mathbf{b}}^l_t \in \mathbb{R}^3$ represent the Lumped Error as axis-angle and translation vectors respectively and have the same additive zero mean Gaussian noise as described in (\ref{eq:lumped_error_prediction}).
The feature detection, association, and observation models all remain the same.
The only change required is modifying the camera projection equations. 
The camera projection equation for the $i$-th marker is changed from (\ref{eq:marker_camera_projection}) to:
\begin{equation}
\begin{split}
     \hat{\mathbf{m}}_k(\hat{\mathbf{w}}^l_t, \hat{\mathbf{b}}^l_t, \hat{\mathbf{e}_t}) = \frac{1}{s} \mathbf{K} & \mathbf{T}^c_{c_n} \Big( \prod \limits_{i=1}^{n}  \mathbf{T}_{c_{i-1}}^{c_{i}}(\tilde{q}^{c_i}_t) \Big)^{-1}    \mathbf{T}^{c_b}_{b-} \\ \mathbf{T}_{n_b}^{c_n} (\hat{\mathbf{w}}^l_t, \hat{\mathbf{b}}^l_t) &\prod \limits_{i=1}^{n_b}  \mathbf{T}_i^{i-1}(\tilde{q}^i_t)  \prod \limits_{i=n_b+1}^{j_k}  \mathbf{T}_i^{i-1}(\tilde{q}^i_t + \hat{e}^i_t)\overline{\mathbf{p}}^{j_k}
\end{split}
\end{equation}
by combining (\ref{eq:lumped_error_endoscopic_case}) with the camera pin-hole model.
A similarly simple modification is required for the projected edges $\big(\hat{\rho}^i(\hat{\mathbf{w}}_t,\hat{\mathbf{b}}_t, \hat{\mathbf{e}}_t), \hat{\phi}^i(\hat{\mathbf{w}}_t,\hat{\mathbf{b}}_t, \hat{\mathbf{e}}_t)\big)$.
The example shown in Appendix \ref{appendix:shaft_projection} for cylindrical shapes includes the modifications necessary.

\section{Experiments and Results}

Since surgical robotic tool tracking is a direct application of this work, the proposed particle filter was used to track the Lumped Error in a simulated scene of a da Vinci\textregistered{} Surgical System and on a real world dVRK \cite{DVRK}.
The uncertainties of joint angles on the dVRK system are so prevalent that results relying only on base-to-camera calibration and not accounting for joint angle error were intentionally omitted in previous work due to such poor results \cite{cable_error_comp}.
Furthermore, in our own previously equivalent work, we experimented using either calibrated base-to-camera transform or active tracking to grasp chicken tissue detected in the camera frame \cite{super}.
Using the calibrated base-to-camera transform, the surgical tool was unable to grasp the chicken tissue.
Meanwhile with active tracking, the surgical tool was able to repeatedly grasp the chicken tissue.
The last experiment is tracking a partially visible Baxter robot arm which has significant backlash transmission effects.
These set of tests show the effectiveness of the proposed parameter reduction technique by comparing against different parameter sets.

\subsection{da Vinci Simulated Scene Setup}
A simulated scene in V-REP \cite{rohmer2013v} was developed based on the da Vinci robot model constructed in Fontanelli et al. \cite{vrep_simulator}.
The robotic tool and camera arm simulated were a Patient Side Manipulator (PSM) with a Large Needle Driver and an Endoscopic Camera Manipulator (ECM) respecitvely from the da Vinci\textregistered{} Surgical System.
The PSM has 6 DoF and an additional gripper joint. 
The ECM is stereoscopic and has 4 DoF.
The first joint link visible in the endoscopic camera frame was after the $n_b = 4$ joint as expected when operating with a da Vinci\textregistered{} Surgical System.

Small blue spheres were placed as markers along the kinematic links near the gripper to be used as point features to update the particle filter.
The blue markers were detected using standard color segmentation from OpenCV \cite{opencv_library}.
Each camera image was first converted to the Hue, Saturation, and Value (HSV) color space.
Hand-tuned lower and upper bounds for each HSV channel was then applied to the image resulting in a segmented binary image.
The segmented binary image was then clustered into distinct contours from which the centroids are estimated.
The list of centroids, $\mathbf{m}_t$, were considered detected pixel coordinate features potentially from projected points on the surgical tool.
The edges of the projected cylindrical insertion shaft of the PSM tool were also used to update the particle filter and detected using standard OpenCV functionality \cite{opencv_library}.
Each pixel potentially associated with the edges were detected using Canny edge detector \cite{canny_edge_detector}.
The pixels were further classified into distinct edges using the Hough Transform \cite{hough_transform} with parameters $\rho^k_t$ and $\phi^k_t$ to fit (\ref{eq:hough_shaft_parameters}).
The simulated scene and a corresponding camera image with the detected features is shown in Fig. \ref{fig:simuated_scene}.

\begin{figure} 
    \vspace{2mm}
    \centering
    \includegraphics[width=0.95\linewidth]{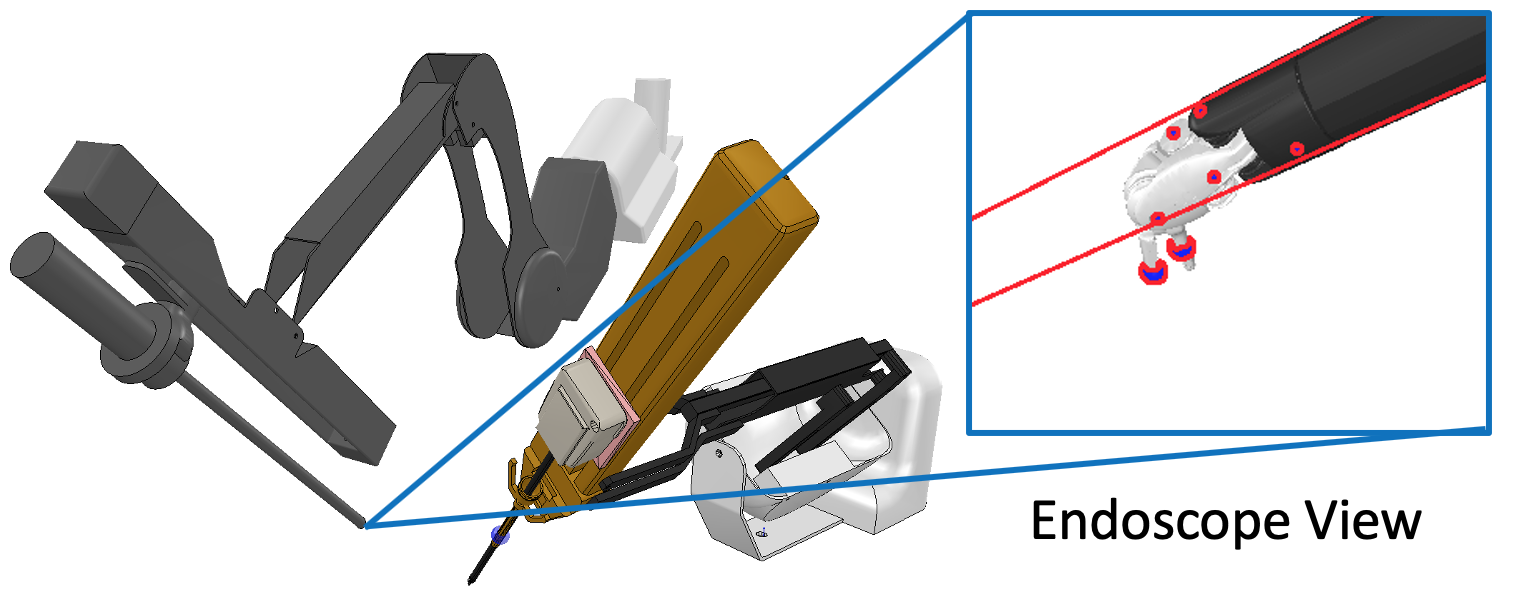}
    \caption{Simulated scene in V-REP \cite{vrep_simulator} of a Patient Side Manipulator (PSM) and Endoscopic Camera Manipulator (ECM) from a da Vinci\textregistered{} Surgical System. Blue markers are placed on PSM's gripper and detected for the particle filter as shown in red in the endoscopic view. Similarly, the detected edges of the insertion shaft are highlighted with red lines and are also used by the particle filter.}
    \label{fig:simuated_scene}
\end{figure}

\begin{figure*}
    \centering
    \vspace{2mm}
    \begin{subfigure}{1.0\linewidth}
    \centering
        \begin{subfigure}{0.48\textwidth}
        \includegraphics[width=\linewidth]{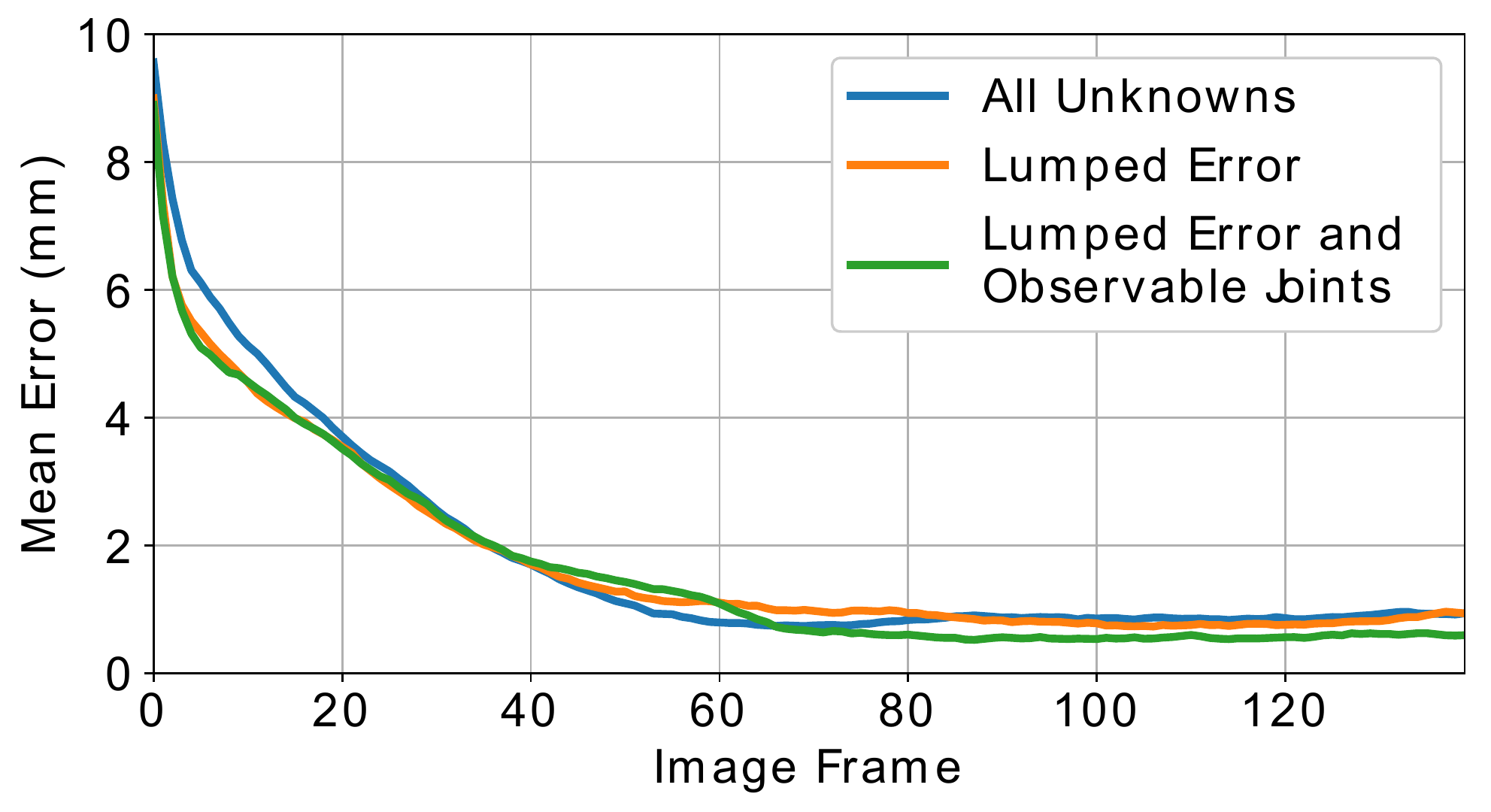}  
        \end{subfigure}
        \hspace{10pt}
        \begin{subfigure}{0.48\textwidth}
        \includegraphics[width=\linewidth]{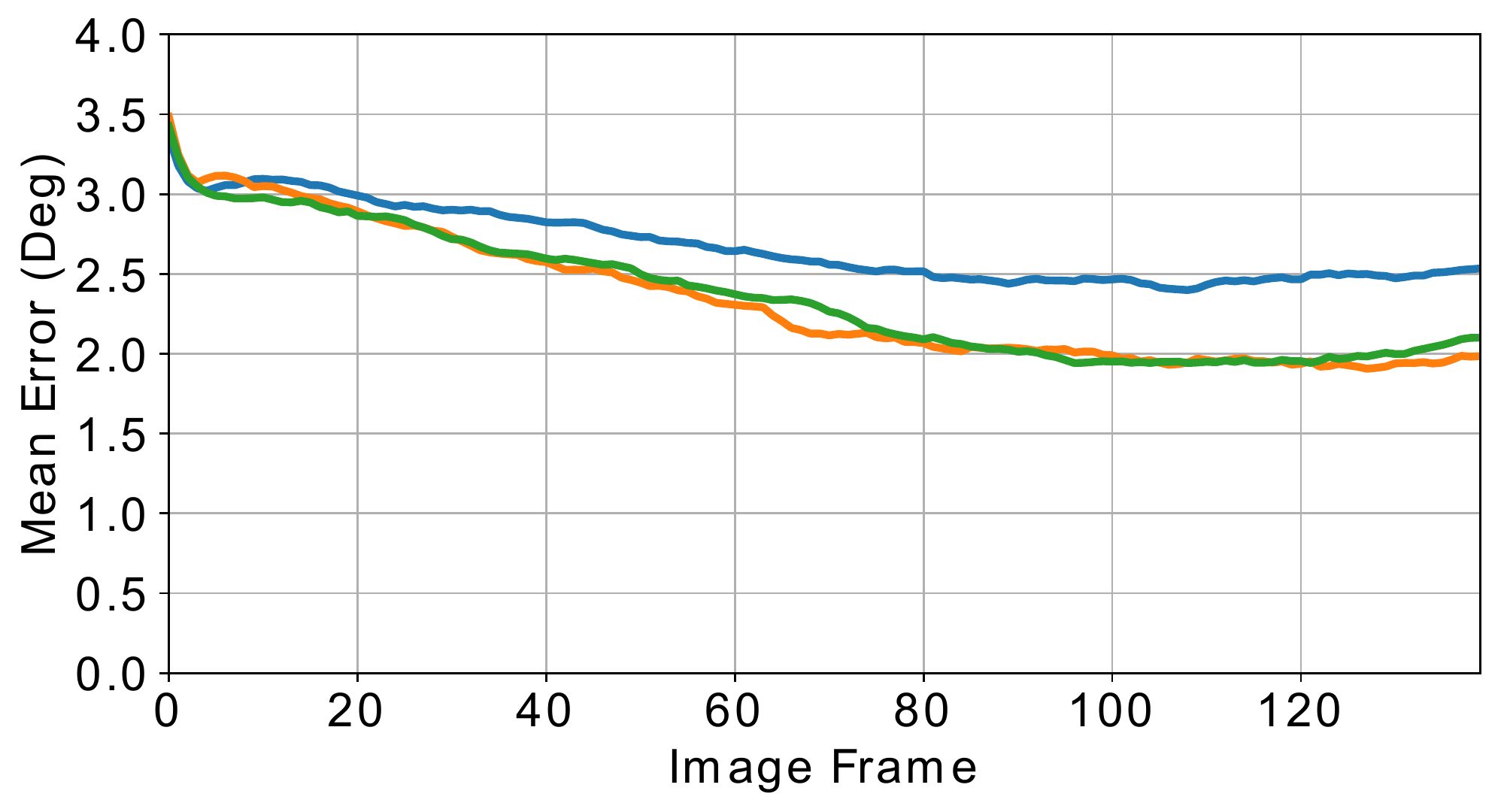}  
        \end{subfigure}
    \end{subfigure}
    
    \vspace{1mm}
    
        \begin{subfigure}{1.0\linewidth}
    \centering
        \begin{subfigure}{0.48\textwidth}
        \includegraphics[clip,width=\linewidth]{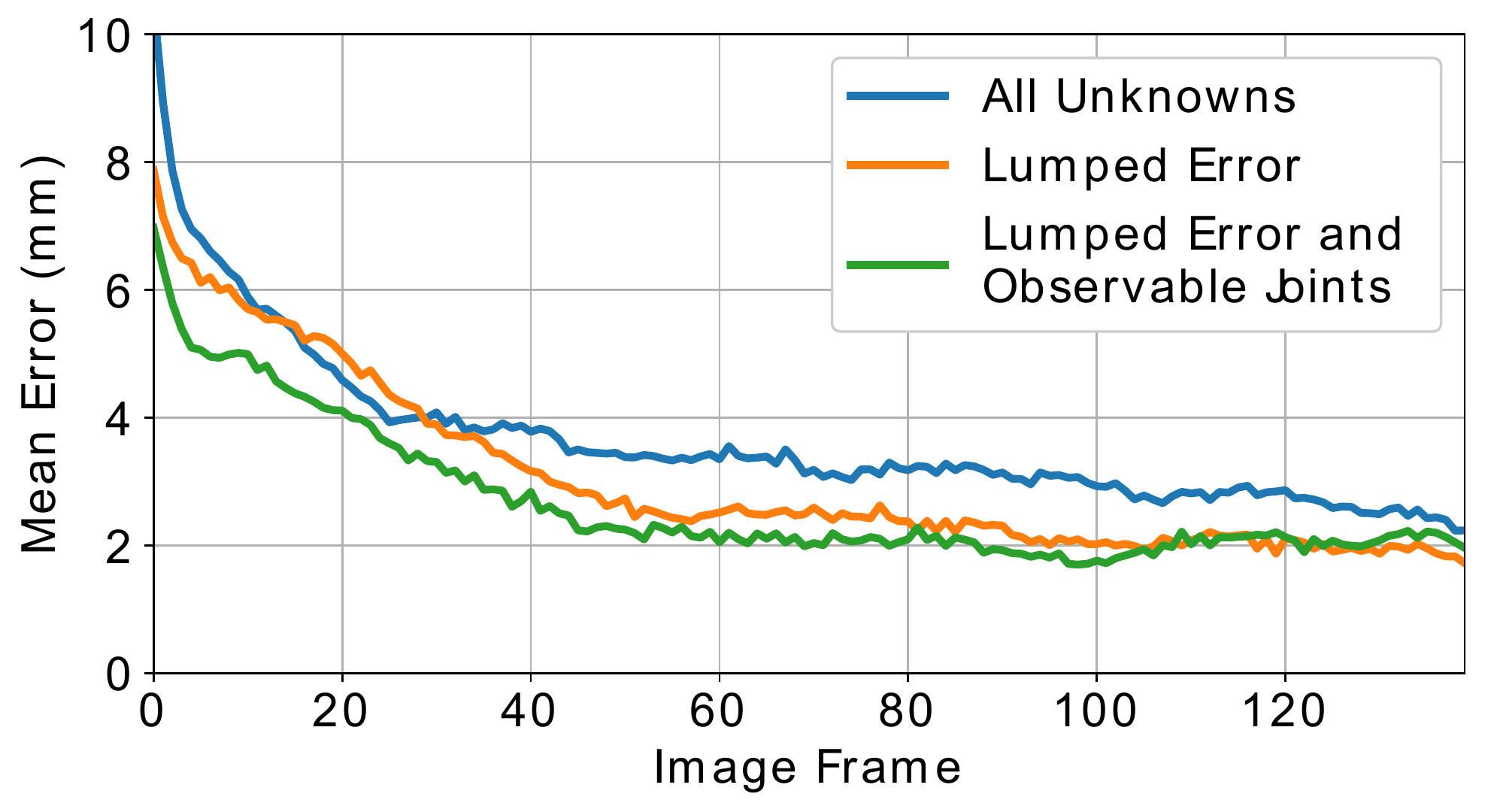}  
        \end{subfigure}
        \hspace{10pt}
        \begin{subfigure}{0.48\textwidth}
        \includegraphics[clip,width=\linewidth]{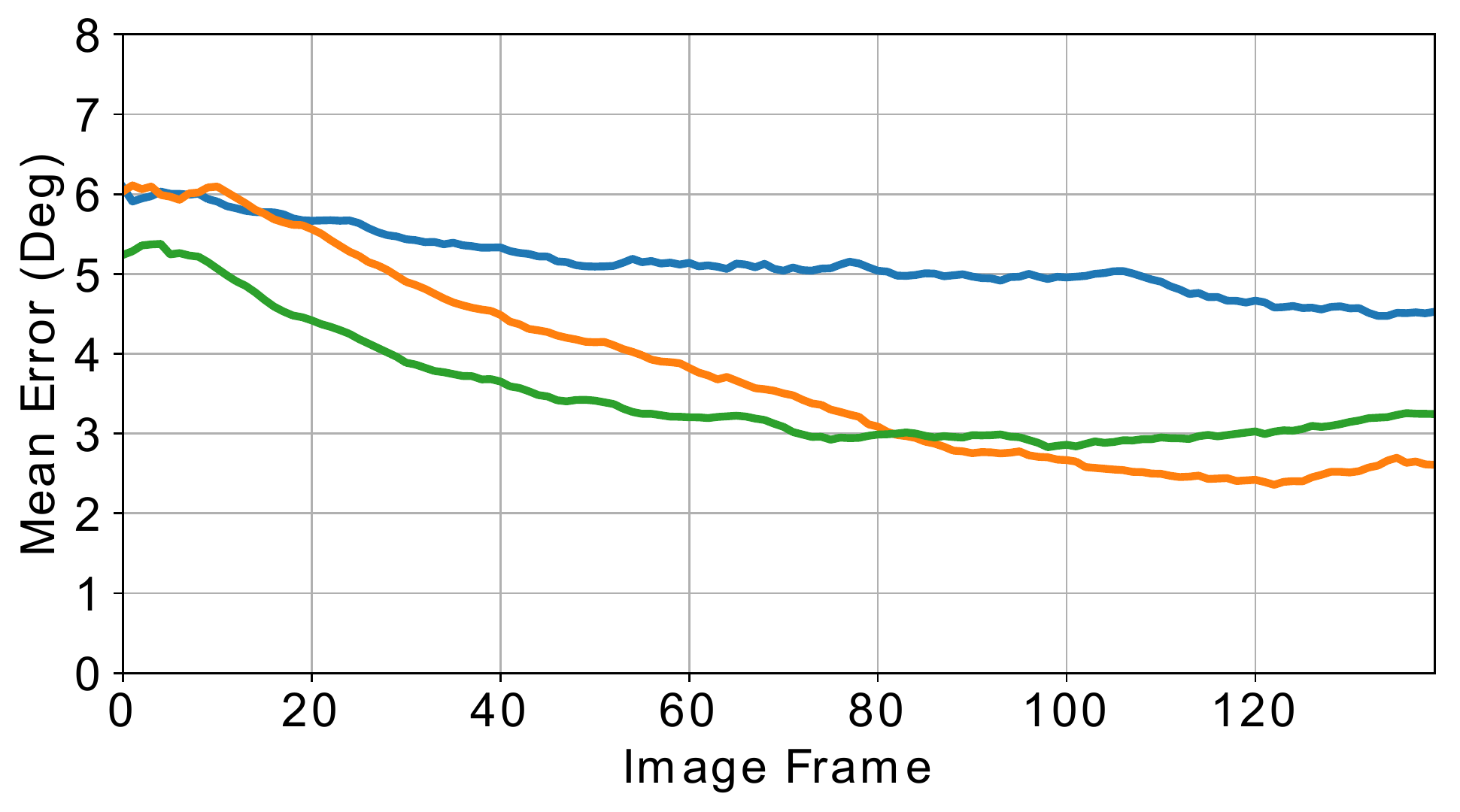}  
        \end{subfigure}
    \end{subfigure}
    \caption{
    Mean end-effector pose error in the camera frame over time under various tracking configurations from simulated da Vinci scene.
    The top and bottom row of plots are measured from the stationary and eye-in-hand cases respectively.
    These mean error trends are calculated with 50 trials and shows that tracking the Lumped Error results in a lower end-effector orientation error compared to tracking all unknowns.}
    \label{fig:pf_over_time_plots}
\end{figure*}

The error in calibration, $\mathbf{T}^{b-}_b$ was done by sampling from zero mean Gaussian in its axis angle and translation vector representations:
\begin{equation}
    \begin{bmatrix}
    \mathbf{w}^{b-}, \mathbf{b}^{b-}  \end{bmatrix}^\top \sim \mathcal{N} ( \mathbf{0}, \mathbf{\Sigma}^{b-}_{\mathbf{w},\mathbf{b}} )
    \label{eq:simulated_hand_eye_noise_1}
\end{equation}
Therefore, the initial calibration given to the filter was set to
\begin{equation}
\begin{split}
    \mathbf{T}^c_{b-} &=  \mathbf{T}^c_{b} \Big( \mathbf{T}^{b-}_b(\mathbf{w}^{b-}, \mathbf{b}^{b-}) \Big)^{-1}  \\ \mathbf{T}^{c_b}_{b-} &= \mathbf{T}^{c_b}_{b} \Big( \mathbf{T}^{b-}_b(\mathbf{w}^{b-}, \mathbf{b}^{b-}) \Big)^{-1}
    \label{eq:simulated_hand_eye_noise_2}
\end{split}
\end{equation}
for the stationary camera and eye-in-hand cases respectively where $\mathbf{T}^c_{b}$ and $\mathbf{T}^{c_b}_{b}$ were given by the simulator.

The joint error for the PSM was simulated as a summation between a uniformly sampled bias at the start of each trial and linear cable stretch.
Written explicitly, the error for joint angle $i$ was defined as:
\begin{equation}
    e^i_t = e^i_b + e^i_c q^i_t
    \label{eq:simulated_joint_noise_1}
\end{equation}
where $e^i_b \sim \mathcal{U}(-a^{i,b}_e, a^{i,b}_e)$, $e^i_c$ is the linear cable stretch coefficient, and $q^i_t$ is the correct joint angle from the PSM.
Similarly, joint error $c_i$ for the ECM was defined as:
\begin{equation}
    e^{c_i}_t = e^{c_i}_b + e^{c_i}_{l,t}
    \label{eq:simulated_joint_noise_2}
\end{equation}
where $e^{c_i}_b \sim \mathcal{U}(-a^{c_i,b}_e, a^{c_i,b}_e)$ was sampled once at the start of each trial and $e^{c_i}_{l,t} \sim \mathcal{N}(0, \sigma_{c_i,l}^2)$ sampled at every time step to simulate the uncertainties in the robotic endoscopes joint angles.

The PSM arms configuration was set via V-REP's inverse kinematics. 
Its position moved along a preset cyclical trajectory added with a small, random sample from a zero mean Gaussian with standard deviation of 1mm.
The gripper joint opened and closed at a similar cyclical rate.
Likewise, the four joint angles of the ECM were set to move in a cyclical pattern in the eye-in-hand case.
The orientation of PSM end-effector instead takes a random walk starting at a preset value by rotating an additional uniformly sampled rotation at every time step.
Note that this represents the most complex scenario where every part of the robot (manipulator, gripper, camera) are continuously moving on independent paths hence testing the proposed tracking method in a larger variety of scenarios including occlusions of features.
Additional details and parameter values are described in Appendix \ref{appendix:simulation_detials}.

\begin{figure}[t]
  \centering
    \begin{subfigure}{0.17\textwidth}
        \includegraphics[width=\linewidth]{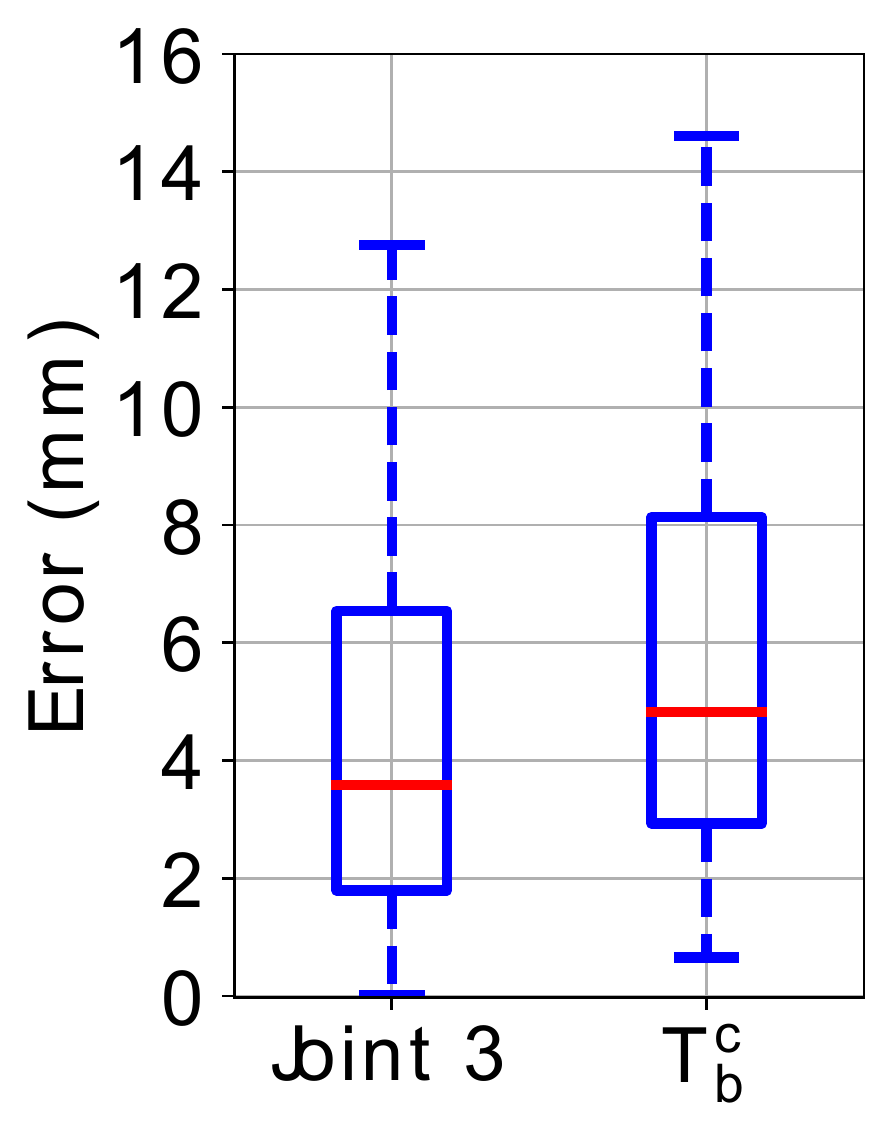}
    \end{subfigure}
    \begin{subfigure}{0.31\textwidth}
        \includegraphics[width=\linewidth]{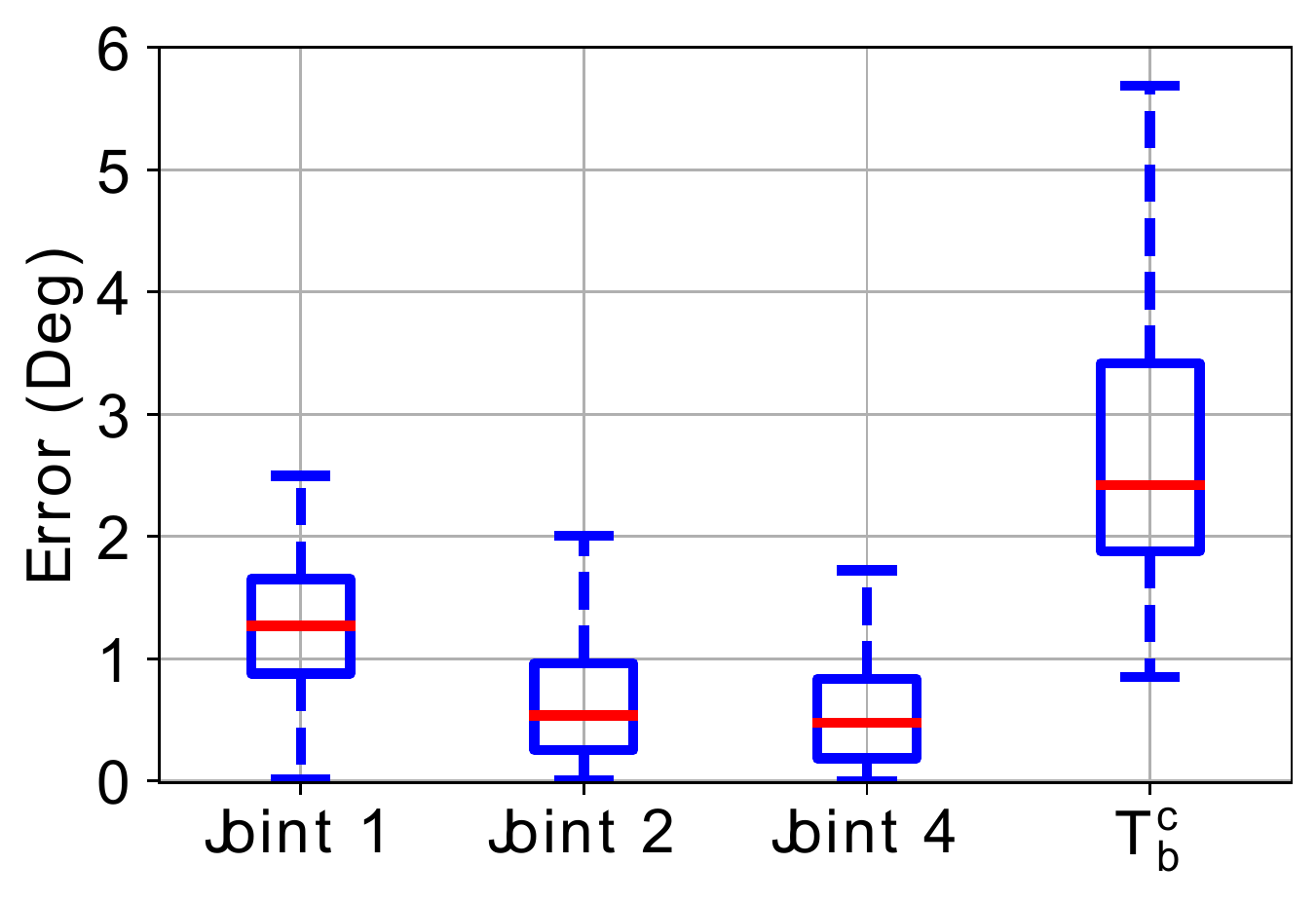}
    \end{subfigure}
    \caption{Distribution of first 4 joint angle errors, whose preceding kinematic links are never in the camera frame, and the stationary camera to base transform, $\mathbf{T}^{c}_b$, error when explicitly estimating them in the simulated da Vinci scene.
    Errors up to 14mm and 5 degrees highlight the inability to estimate these unknown values explicitly due to the parameters being non-identifiable as shown in Claim \ref{claim:inf_solutions}.
    }
    \label{fig:errors_when_tracking_everything_stationary}
\end{figure}

\begin{figure*}
    \centering
    \begin{subfigure}{1.0\textwidth}
      \centering
        \begin{subfigure}{0.3\textwidth}
           \includegraphics[width=\linewidth]{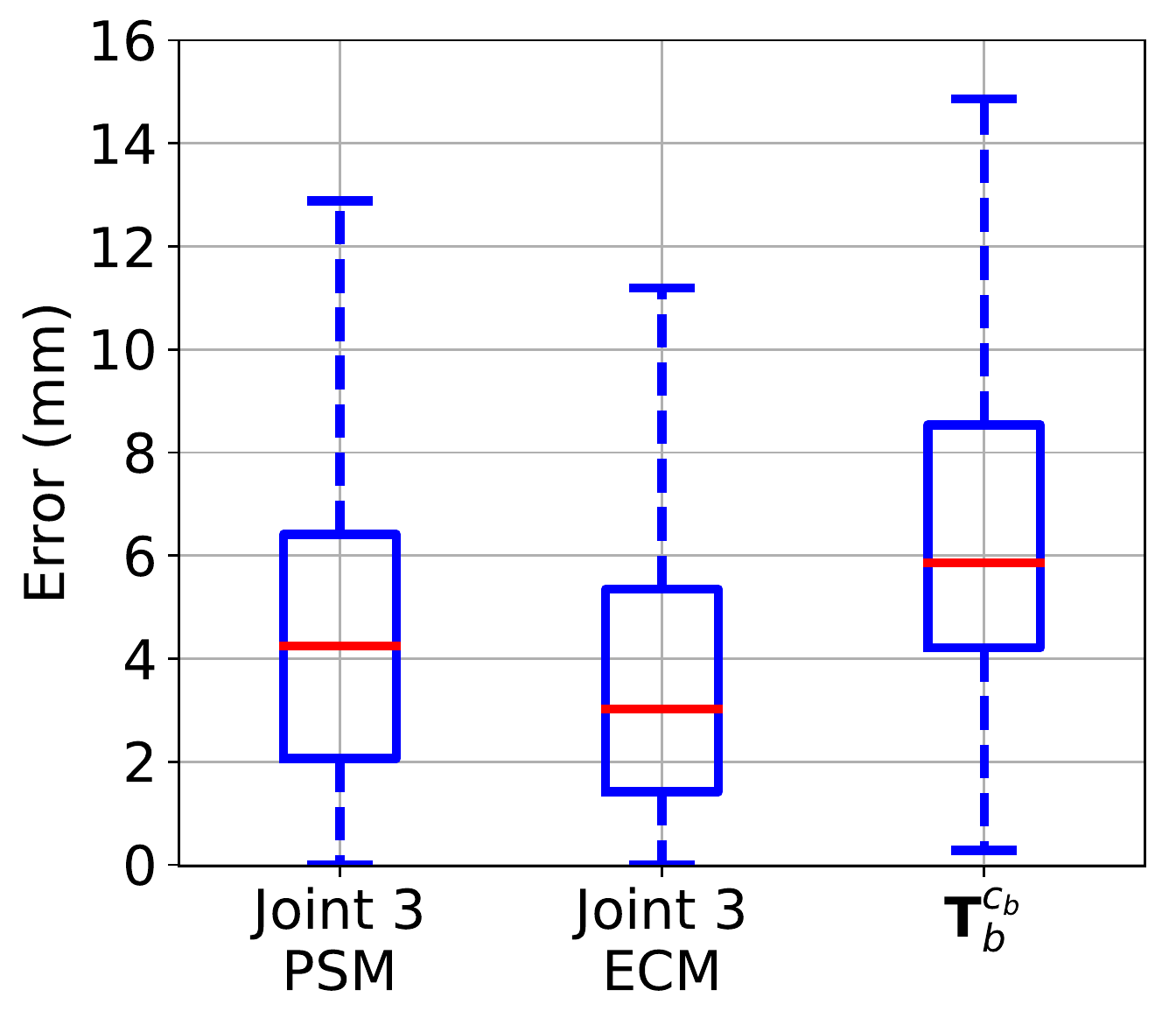}  
        \end{subfigure}
        \begin{subfigure}{0.6\textwidth}
            \includegraphics[width=\linewidth]{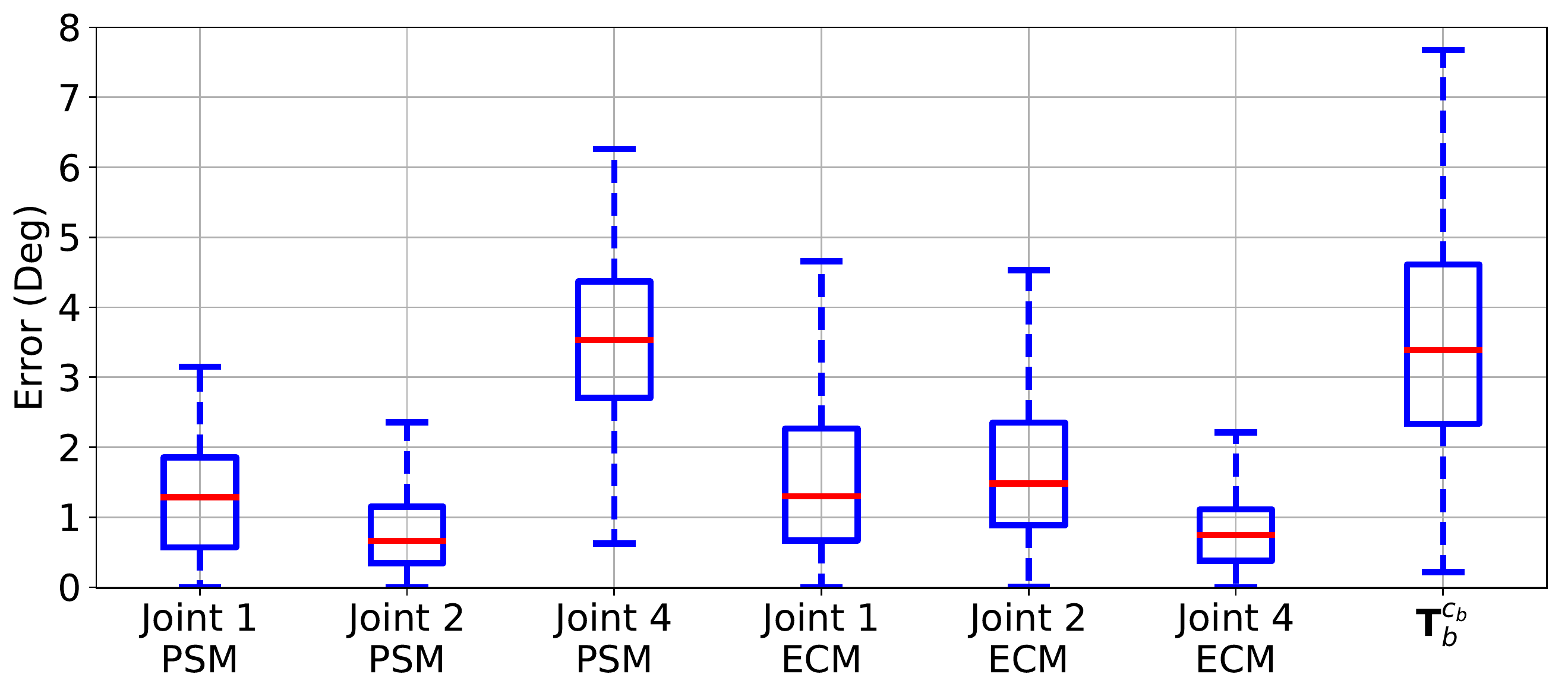}  
        \end{subfigure}
    \end{subfigure}
    \caption{
    Box plots of the tracked joint angle errors, whose preceding kinematic links are never in the camera frame, and the base of camera arm to base of robotic tool transform, $\mathbf{T}^{c_b}_b$, error when explicitly estimating all unknowns in the simulated da Vinci scene.
    Errors up to 14mm and 7 degrees highlight the inability to estimate these unknown values explicitly due to the parameters being non-identifiable as shown in Claim \ref{claim:inf_solutions}.
    }
    \label{fig:errors_when_tracking_everything_moving_ecm}
\end{figure*}

\begin{figure*}
    \centering
        \begin{subfigure}{0.45\textwidth}
        \includegraphics[width=\linewidth]{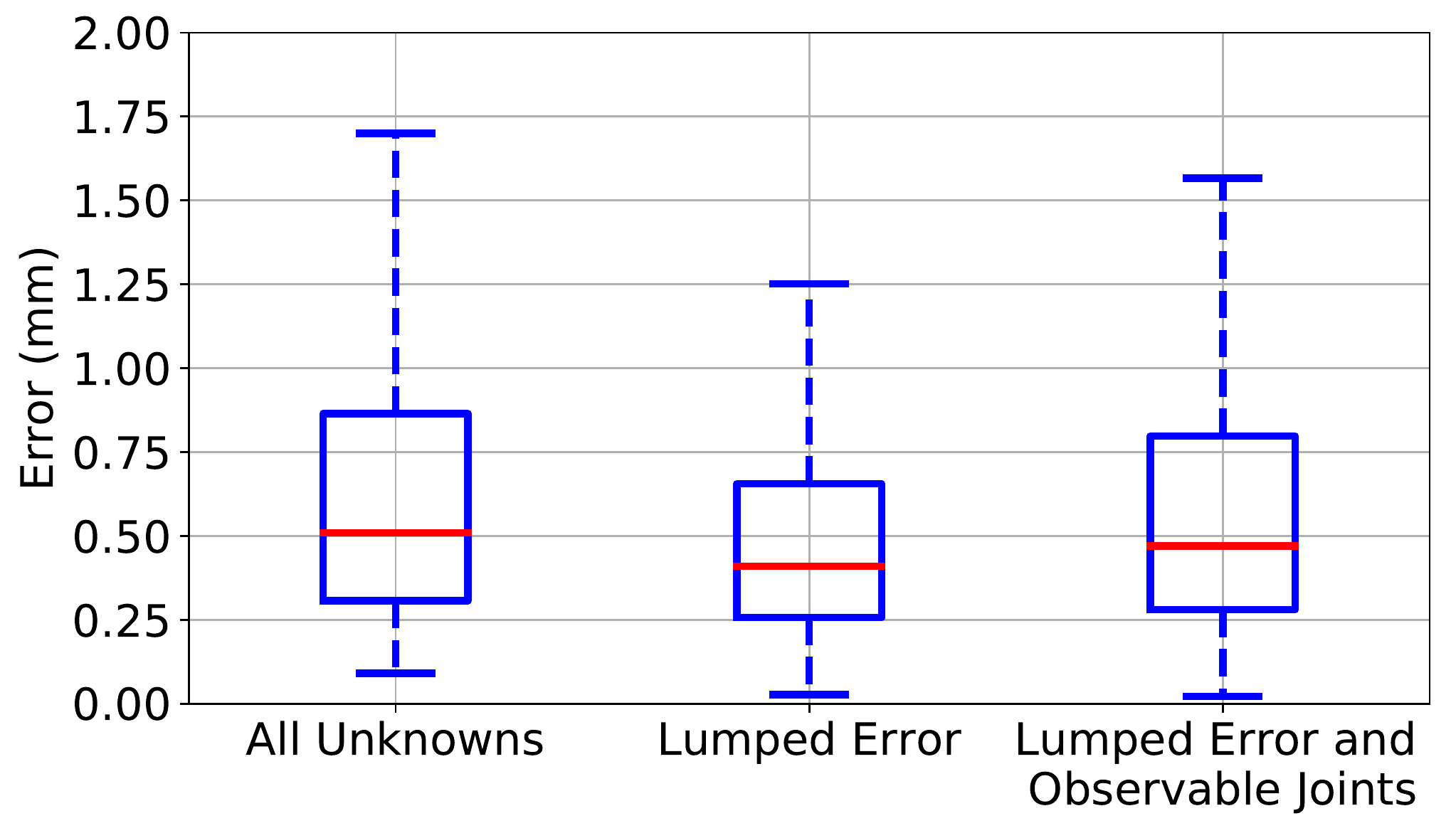} 
        \caption{End-effector position error}
    \end{subfigure}
    \hspace{10pt}
    \begin{subfigure}{0.45\textwidth}
        \includegraphics[width=\linewidth]{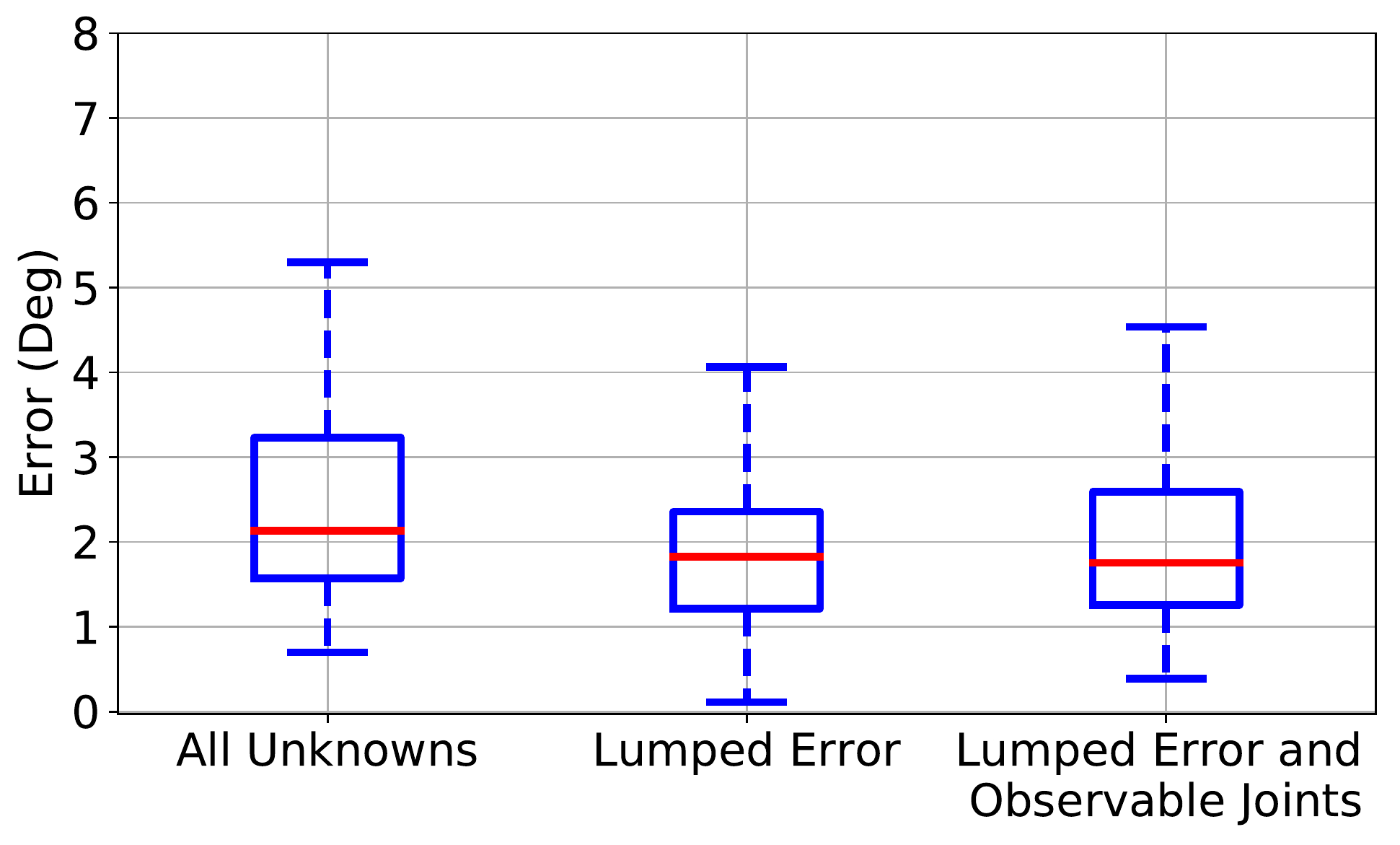} 
        \caption{End-effector orientation error}
    \end{subfigure}
    \begin{subfigure}{0.32\textwidth}
        \includegraphics[width=\linewidth]{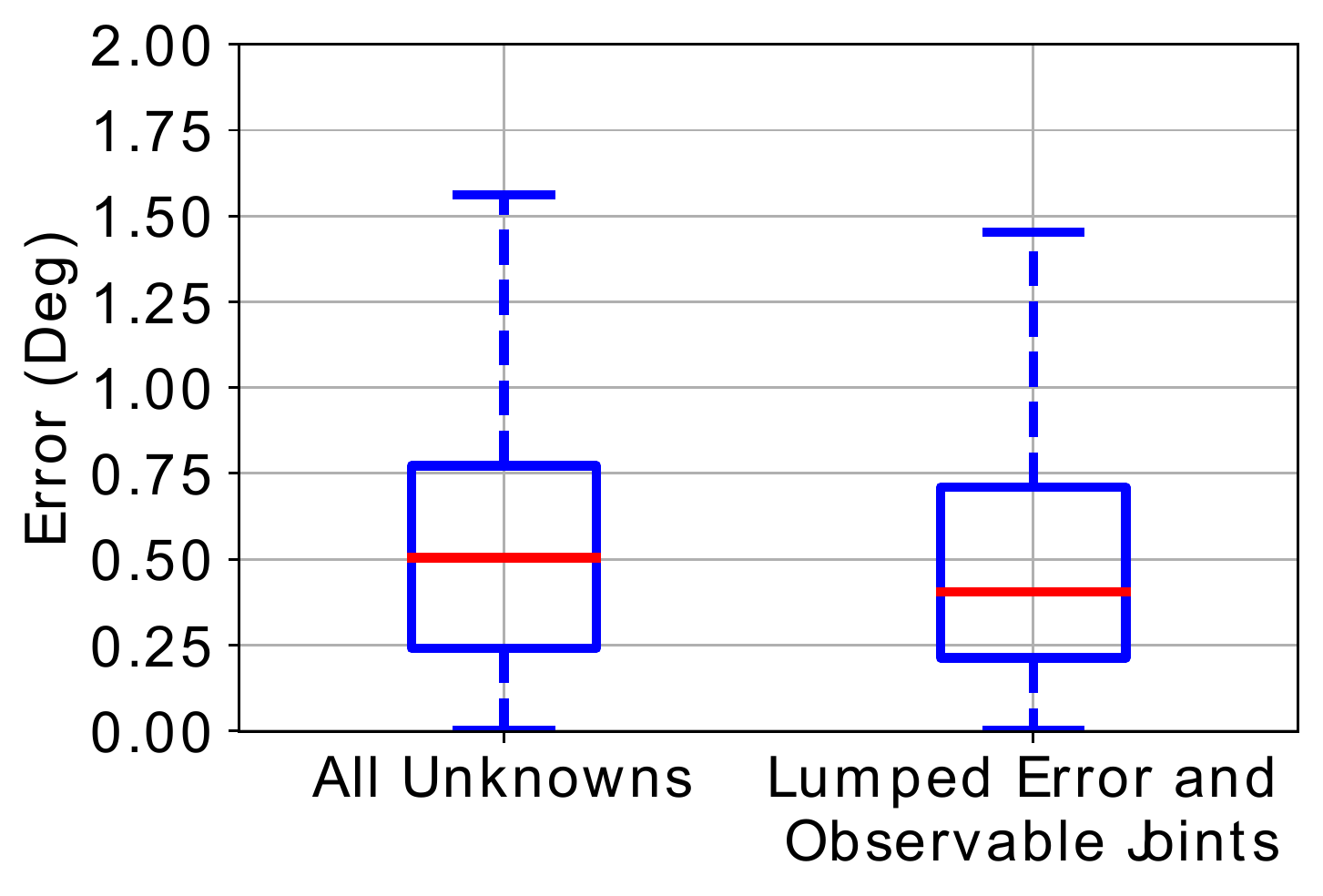} 
        \caption{Joint 5 error}
    \end{subfigure}
    \begin{subfigure}{0.32\textwidth}
        \includegraphics[width=\linewidth]{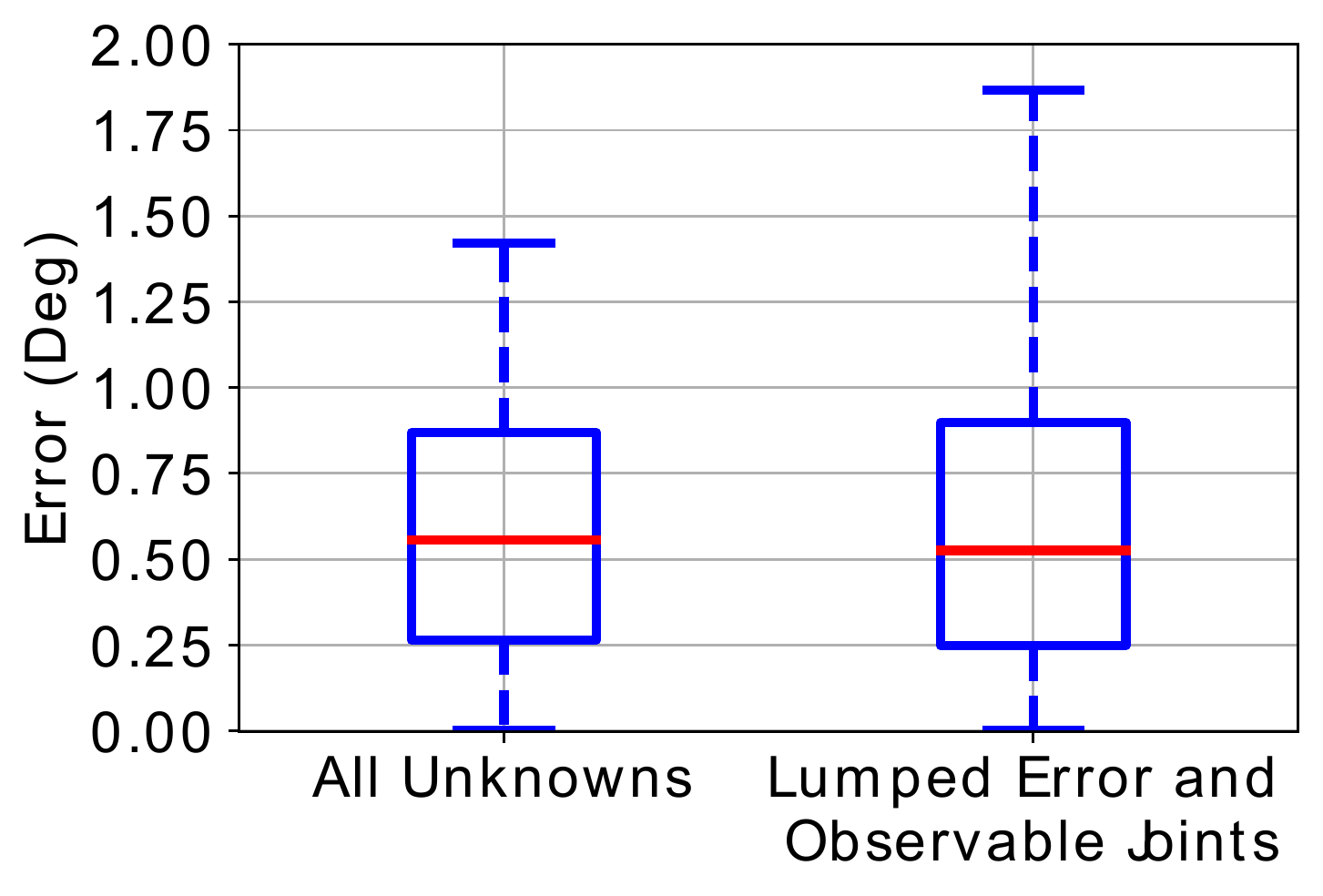}  
        \caption{Joint 6 error}
    \end{subfigure}
    \begin{subfigure}{0.32\textwidth}
        \includegraphics[width=\linewidth]{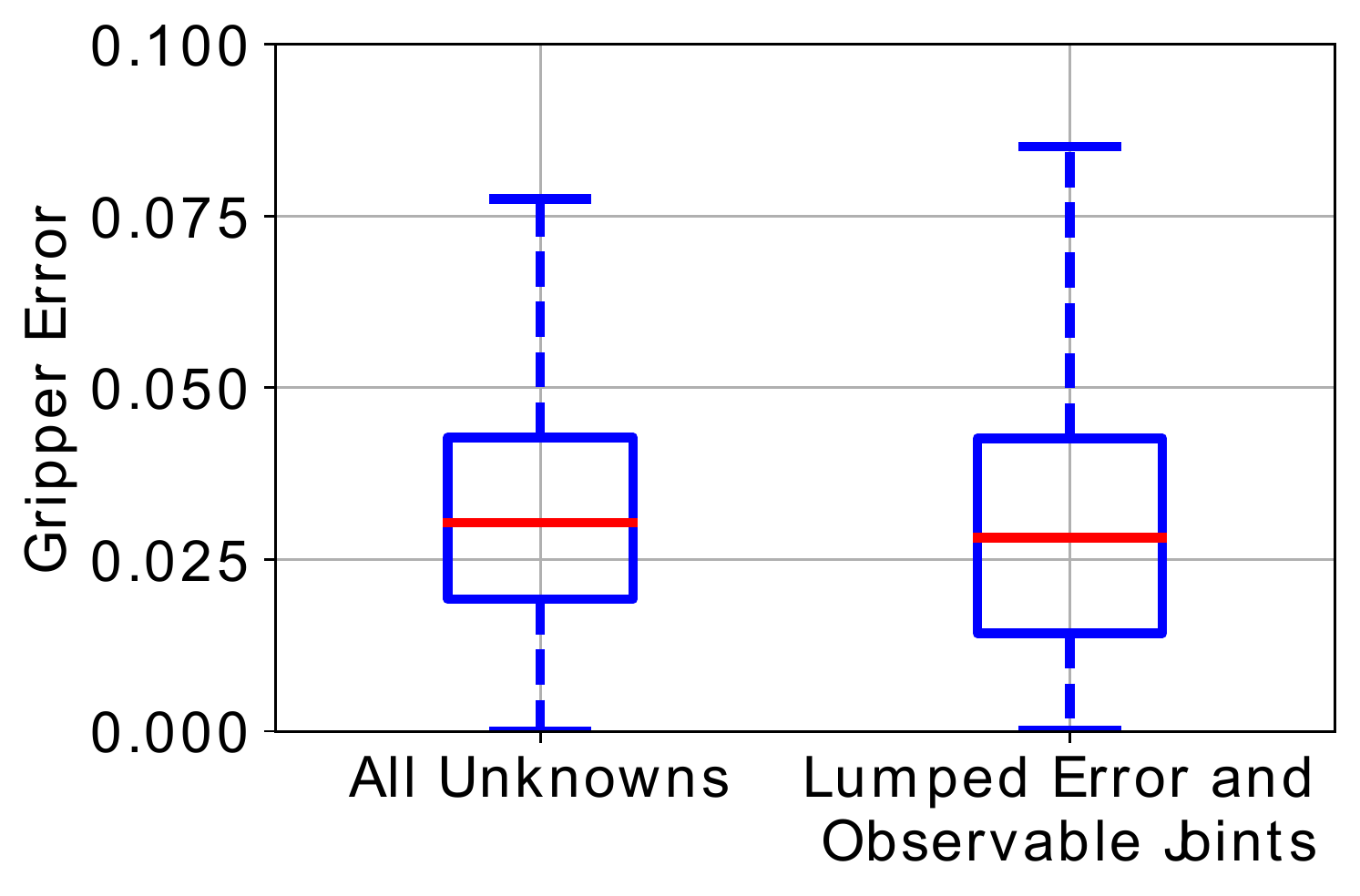} 
        \caption{Joint 7 error}
    \end{subfigure}
    \caption{Box plots of converged tracking performance under various configurations for the particle filter in simulated da Vinci  for stationary camera.
    As is evident from the box plots, the Lumped Error results in better converged end-effector pose error compared to tracking all unknowns.
    Meanwhile, no significant difference in observable joint angle error is seen.
    }
    \label{fig:stationary_converged_pf_box_plots}
\end{figure*}
\begin{figure*}
    \vspace{2mm}
    \centering
    \begin{subfigure}{0.45\textwidth}
        \includegraphics[width=\linewidth]{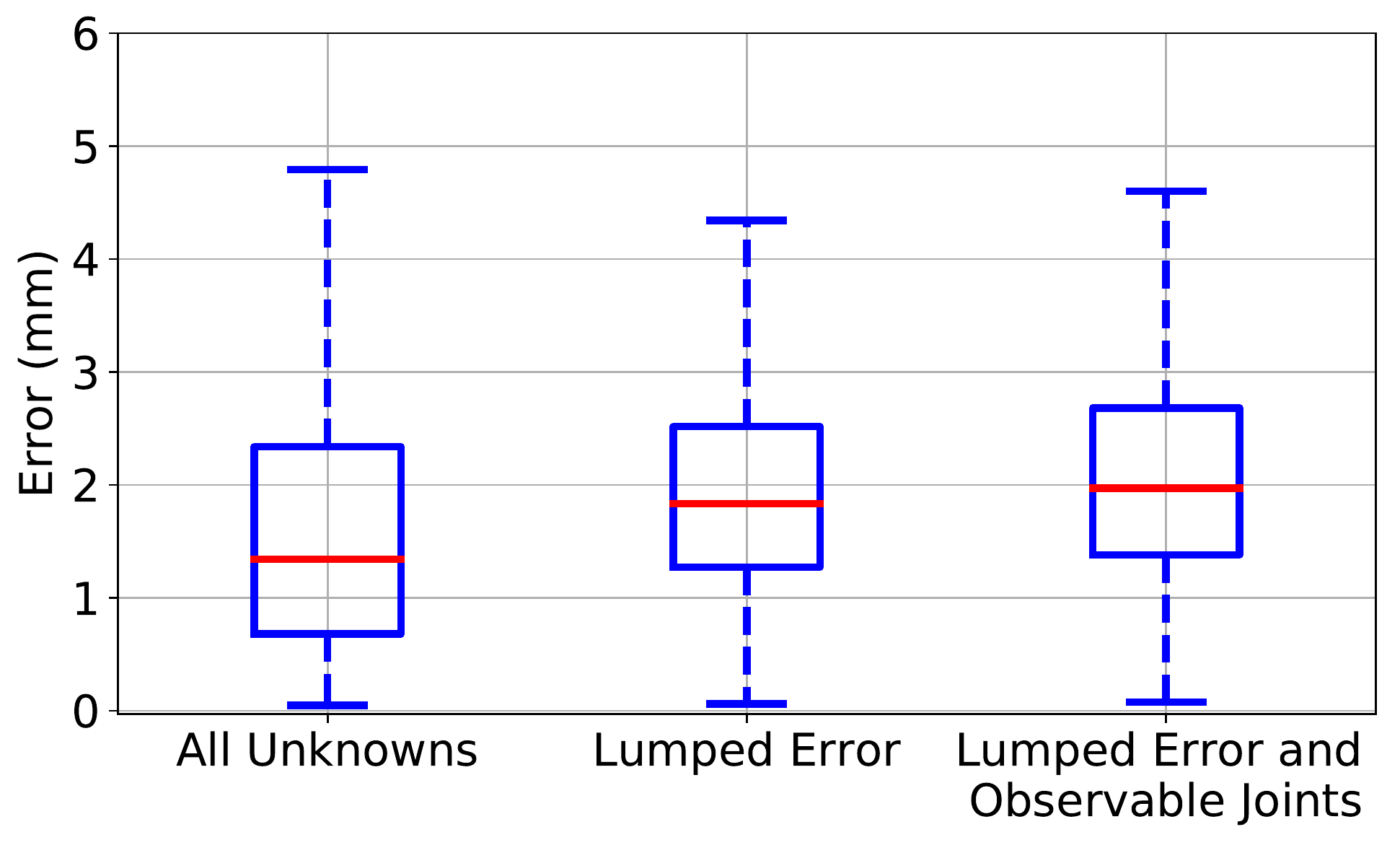}
        \caption{End-effector position error}
    \end{subfigure}
    \hspace{10pt}
    \begin{subfigure}{0.45\textwidth}
        \includegraphics[width=\linewidth]{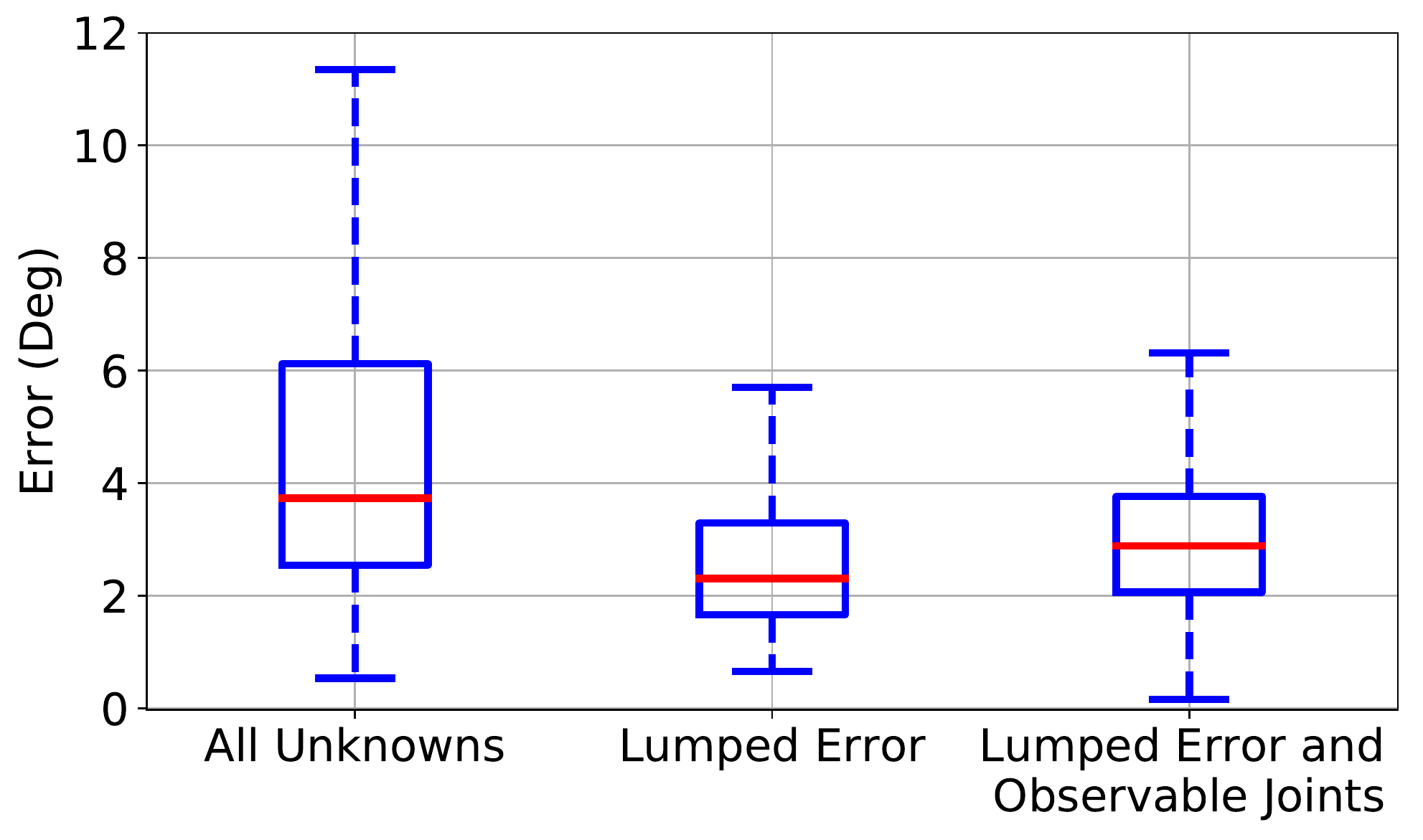} 
        \caption{End-effector orientation error}
    \end{subfigure}
    \begin{subfigure}{0.32\textwidth}
        \includegraphics[width=\linewidth]{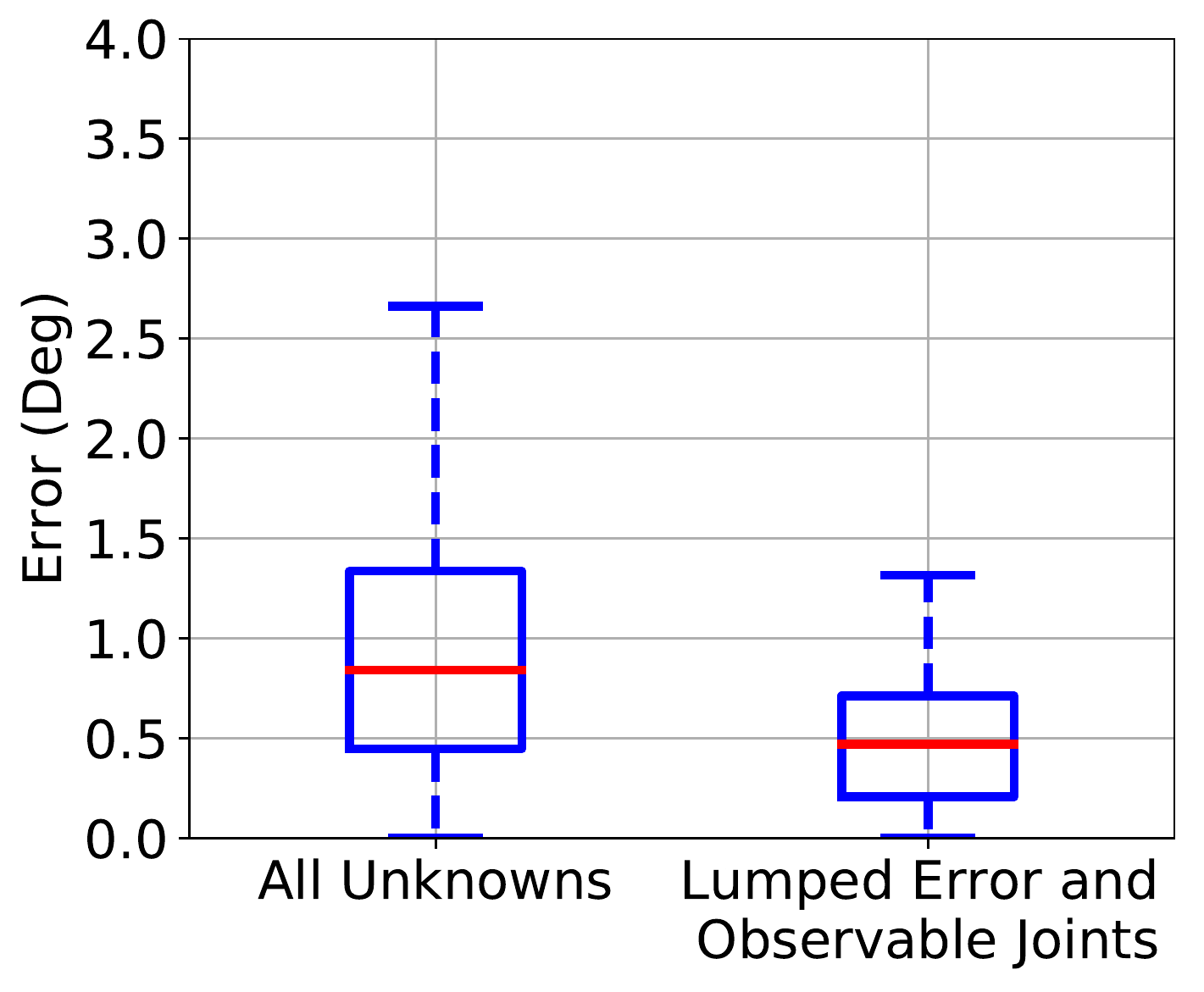} 
        \caption{Joint 5 error}
    \end{subfigure}
    \begin{subfigure}{0.32\textwidth}
        \includegraphics[width=\linewidth]{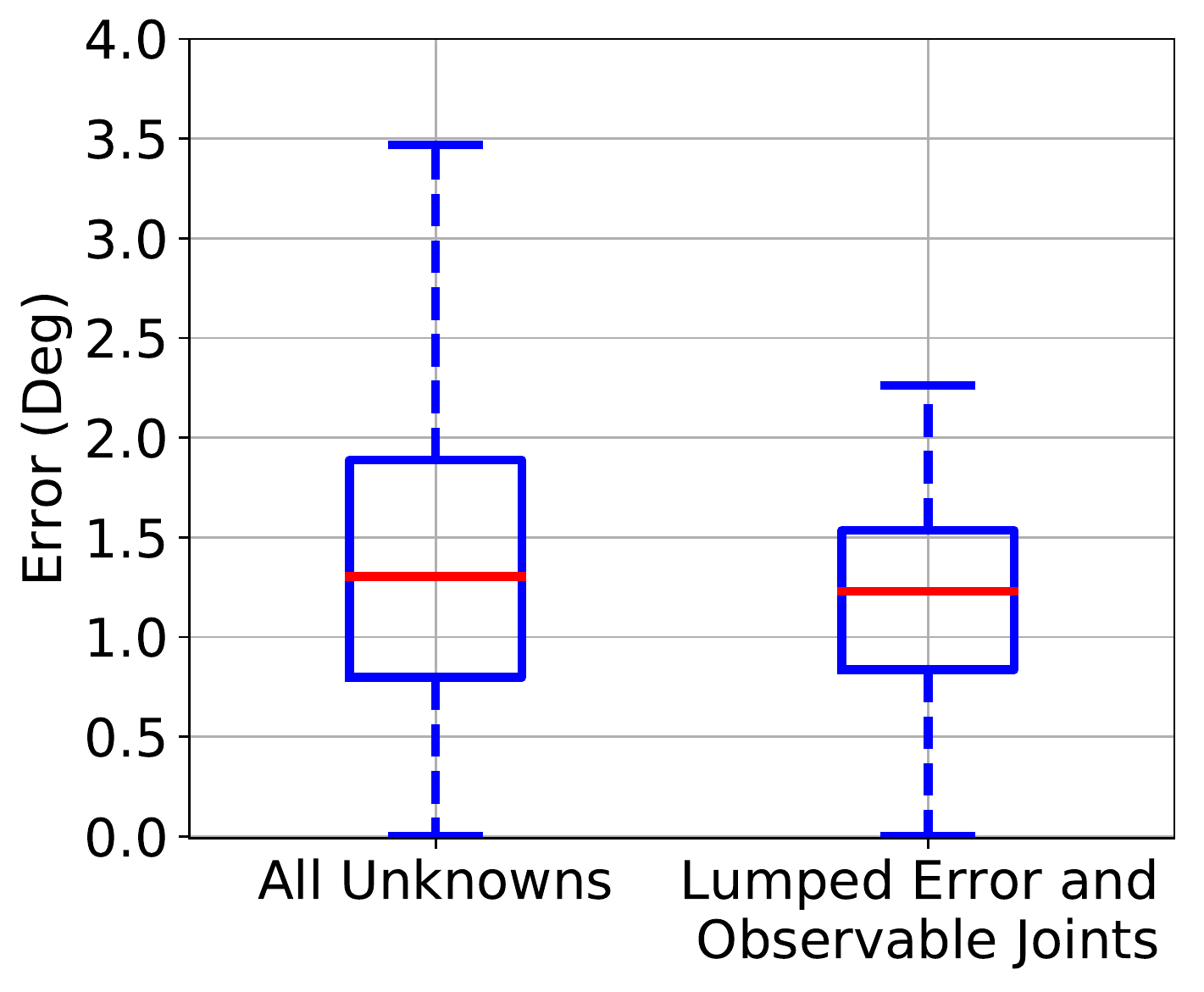}  
        \caption{Joint 6 error}
    \end{subfigure}
    \begin{subfigure}{0.32\textwidth}
        \includegraphics[width=\linewidth]{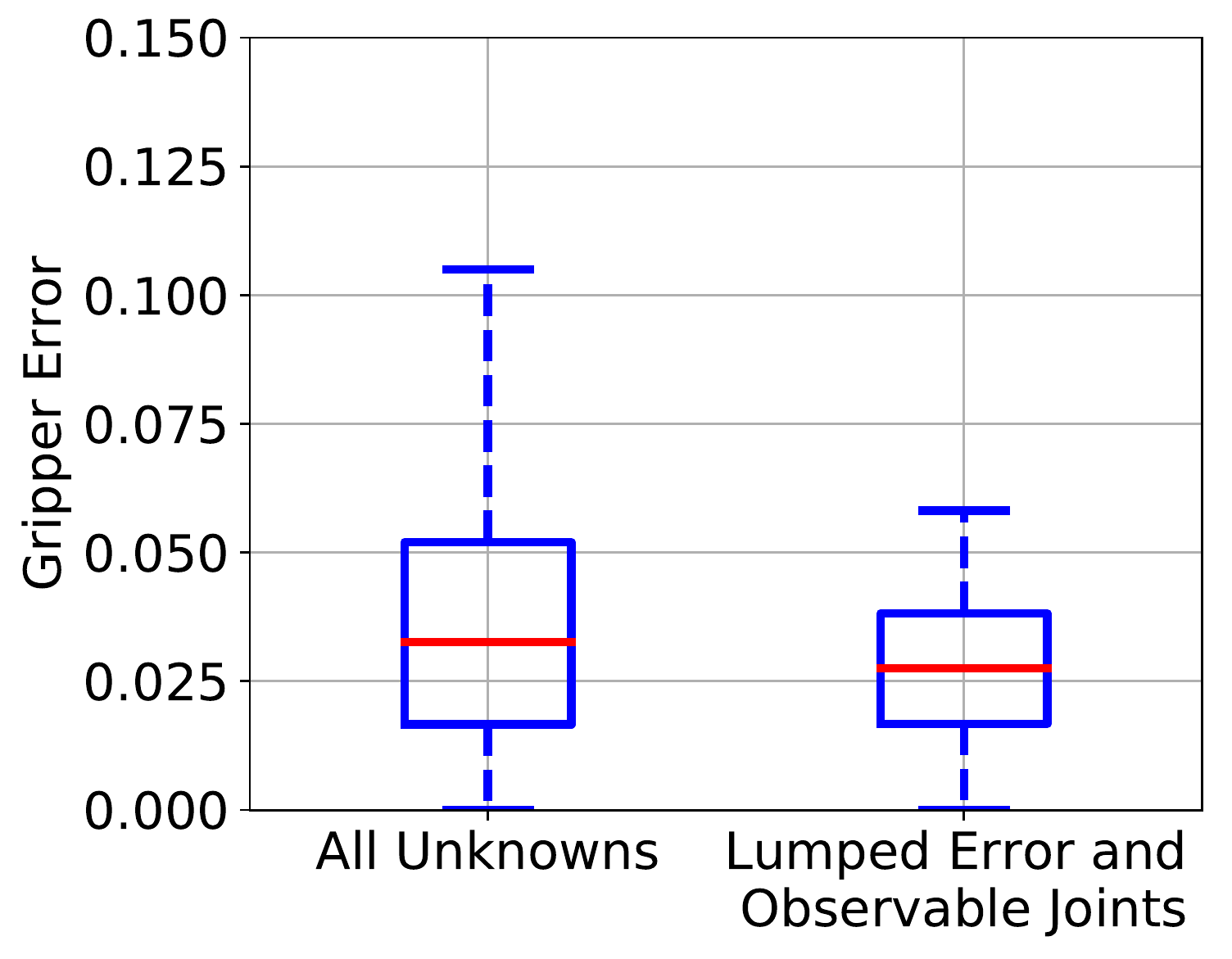}  
        \caption{Joint 7 error}
    \end{subfigure}
    \caption{Box plots of converged tracking performance under various configurations for the particle filter in simulated da Vinci scene for eye-in-hand configuration.
    As is evident from the box plots, the Lumped Error results in better converged end-effector orientation error and observable joint angle errors.  
    }
    \label{fig:moving_camera_converged_pf_box_plots}
\end{figure*}

\subsection{Tracking Lumped Error in da Vinci Simulated Scene}
The particle filter configurations evaluated were:
\begin{enumerate}
    \item \textit{All Unknowns}: tracking all joint angle errors and base-to-camera transform or base-to-base in the stationary and eye-in-hand case respectively. Done by setting $n_b = 0$ in the particle filter.
    \item \textit{Lumped Error}: applying (\ref{eq:simplification}) to the particle filter
    \item \textit{Lumped Error and Observable Joints}: no modifications to the described particle filter
\end{enumerate}
Both stationary camera and moving camera arm scenarios were tested.
Each configuration was repeated 50 times to test for consistent performance.

To evaluate the effectiveness of pose or transform estimation, the error was calculated at time $t$ as:
\begin{align}
   \epsilon_{\mathbf{b}} = || \mathbf{b}_t - \hat{\mathbf{b}}_t || && \epsilon_{\mathbf{w}} =|| \mathbf{w}^r_t||
\end{align}
where $\mathbf{w}^r_t$ is the axis angle representation of $\mathbf{R}_t (\hat{\mathbf{R}}_t)^{-1}$, $\mathbf{b}_t \in \mathbb{R}^3$ and $\mathbf{R}_t \in SO(3)$ are the ground truth translation vector and rotation matrix respectively, and $\hat{\mathbf{b}}_t \in \mathbb{R}^3$ and $\hat{\mathbf{R}}_t \in SO(3)$ are the tracked translation vector and rotation matrix respectively.
The $i$th joint angle error was computed as:
\begin{equation}
   \epsilon_{q^i} = | \hat{q}^i_t - q^i_t |j
\end{equation}

The mean end-effector pose error plots are shown in Fig. \ref{fig:pf_over_time_plots} for both stationary camera and eye-in-hand cases.
Fig. \ref{fig:errors_when_tracking_everything_stationary} shows the distributions of errors for the non-identifiable joint angles and base-to-camera transform in the stationary camera case when explicitly tracking all unknown parameters.
Fig. \ref{fig:errors_when_tracking_everything_moving_ecm} shows the distributions of errors in the eye-in-hand configuration for the non-identifiable joint angles and base-to-base transform when explicitly tracking all unknown parameters. 
These values were calculated across 40 time steps from all 50 trials after 100 time steps to give time for the particle filter to converge.
These errors have a large spread even though the particle filter is still able to sufficiently track the end-effector as seen in the mean end-effector pose error plots in Fig. \ref{fig:pf_over_time_plots}.
This supports Claim \ref{claim:inf_solutions} by showing that it is infeasible to explicitly track all the unknown parameters from partially visible robotic tools since they do not converge to their true values.

The distribution of end-effector tracking errors after giving the particle filter time to converge in the same manner as previously described are shown in Fig. \ref{fig:stationary_converged_pf_box_plots} and \ref{fig:moving_camera_converged_pf_box_plots} for stationary camera and moving camera arm respectively.
The converged distributions of error show little difference in end-effector positional error.
However, end-effector orientation error was clearly improved by using the Lumped Error estimation for both the stationary camera and robotic camera arm cases.
For the observable joints, joints $5, 6, 7$, the Lumped Error tracking method showed no significant difference in error between the stationary camera and eye-in-hand cases.
Meanwhile, when explicitly tracking all unknowns, the error in observable joints was significantly worse in the eye-in-hand case.

\subsection{Tracking Lumped Error on dVRK}

Two one-minute segments of encoder readings and stereoscopic data from the endoscope on an ECM was captured from a dVRK \cite{DVRK}.
The stereoscopic camera system used the standard dVRK endoscopic lens and has a resolution of 1920 by 1080 pixels at 30FPS. 
In both sequences, a single PSM arm was teloperated with gripper in full view of the stereoscopic camera.
In the first sequence, the PSM arm travelled a total distance of 48mm, and the ECM was stationary.
In the second sequence, the PSM arm travelled a total distance of 49mm, and the ECM arm joint angles were set to sinusoidal patterns similar to the previous simulation experiment resulting in 35mm for a total distance travelled.
The PSM arm had blue colored markers painted on.
The markers and edges of the projected cylindrical insertion shaft were detected in the same manner as the simulated scene.
The initial calibration, $\mathbf{T}^c_{b-}$ and $\mathbf{T}^{c_b}_{b-}$, were computed using OpenCV's solvePnP \cite{opencv_library} with manually set associations of the markers.
On a deployed, and fully assembled da Vinci\textregistered{} Surgical System, we envision that these initial transformations are computed from the set up joints, which connect the ECM with the PSM arm.
However, the dVRK by default does not come with set up joints, which is why we elected to the sovlePnP with manually set associations for initialization.

From both sequences, 20 evenly distributed images were manually annotated using the VGG labeller \cite{dutta2016via}.
These labels, $\mathbf{I}_G$, were considered ground truth and Intersection over Union (IoU) was used as the metric in this experiment:
\begin{equation}
    \frac{\mathbf{I}_R \cap \mathbf{I}_G}{\mathbf{I}_R \cup \mathbf{I}_G}
\end{equation}
where $\mathbf{I}_R$ was a generated mask using our previously developed rendering procedure \cite{sarpd}.
The generated mask was rendered using the tracked parameters.
The distributions of IoU are shown in Fig. \ref{fig:IoU_results_dvrk} for both stationary and moving camera arm cases.
Similar to the simulation results, the Lumped Error clearly performed the best.
Examples of surgical tool renderings on top of the image feed are shown in Fig. \ref{fig:images_of_best_and_worst_iou_dvrk}. 

\begin{figure*}
  \centering
  \vspace{2mm}
    \begin{subfigure}{0.45\textwidth}
        \includegraphics[width=\linewidth]{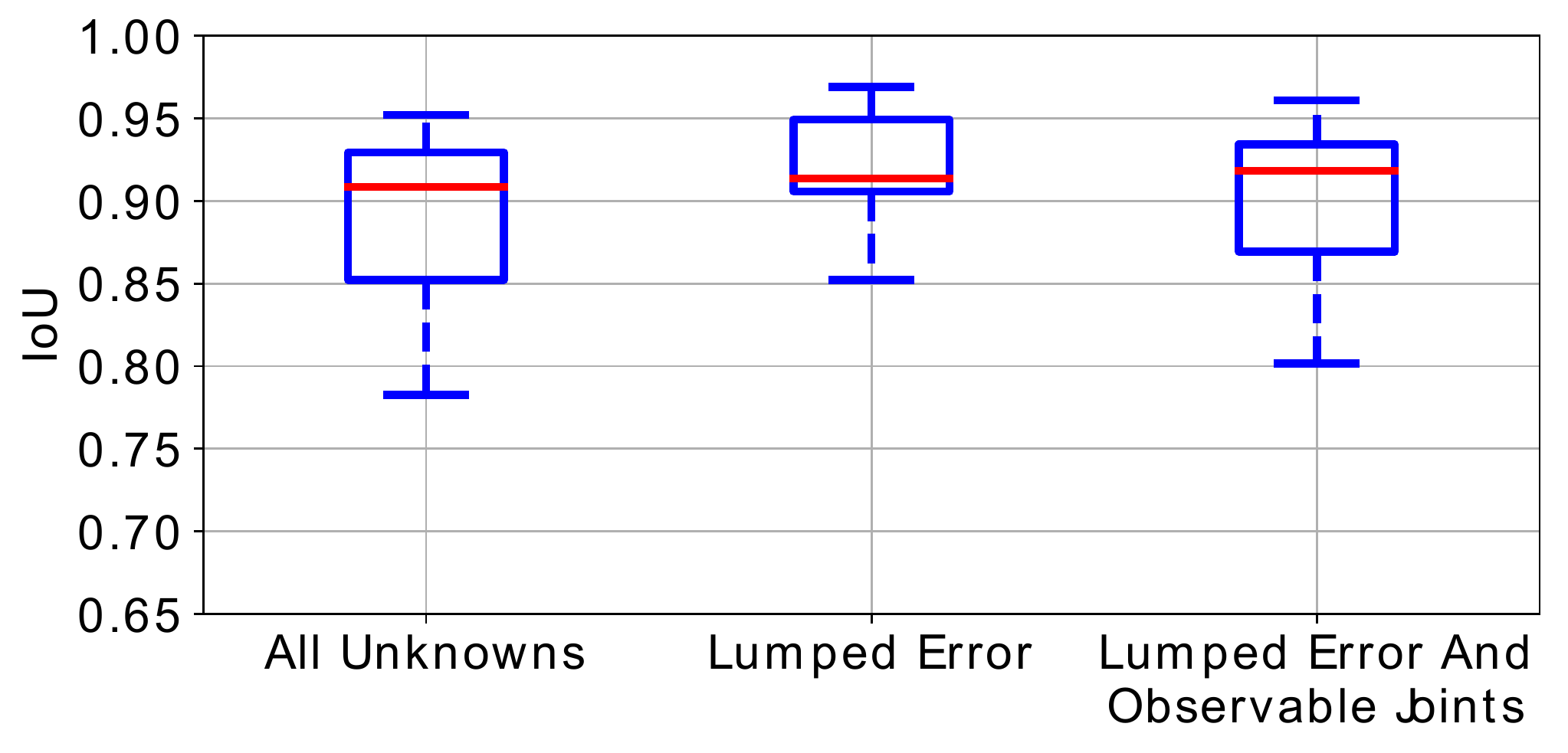}
    \end{subfigure}
    \hspace{10pt}
    \begin{subfigure}{0.45\textwidth}
        \includegraphics[width=\linewidth]{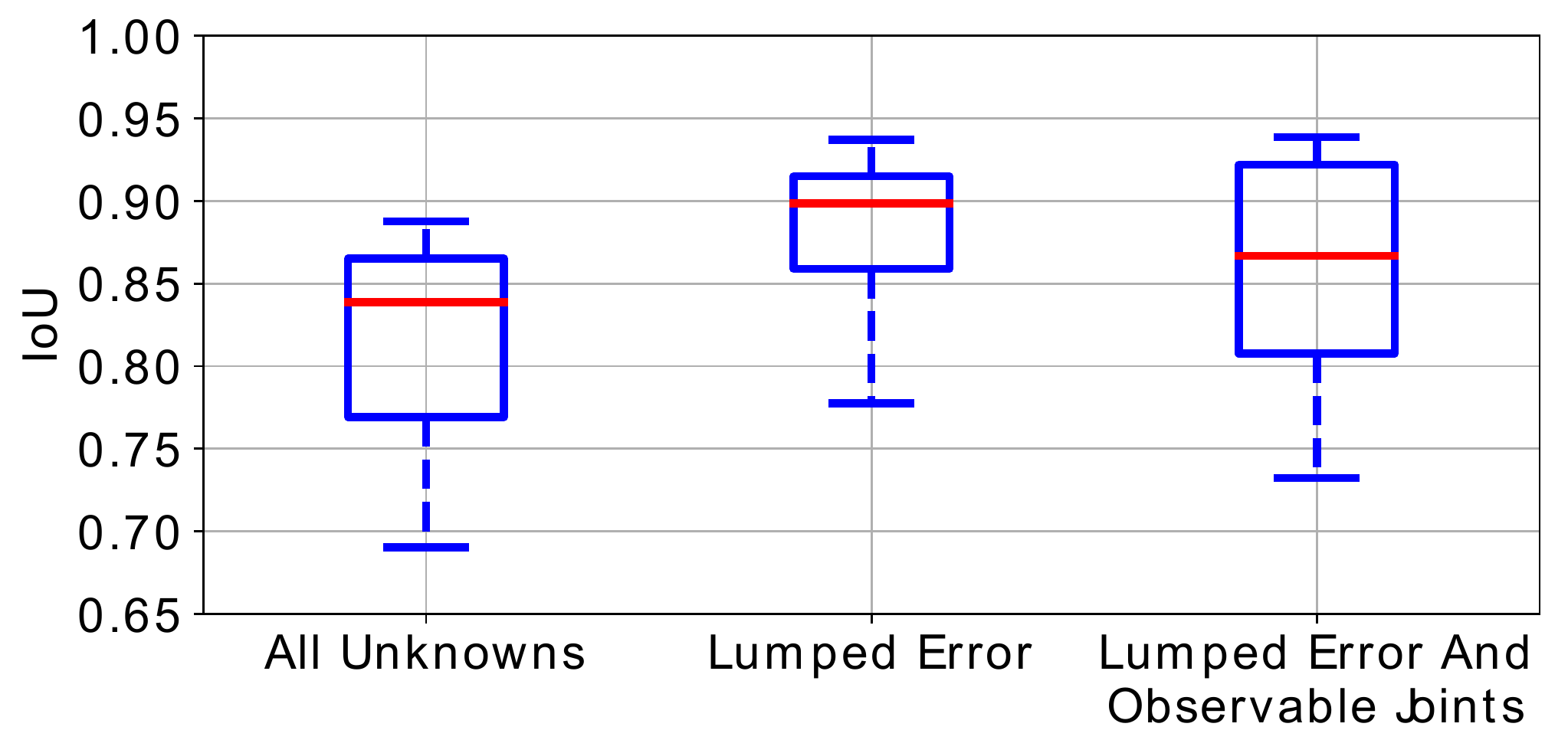}
    \end{subfigure}
    \caption{Distribution of Intersection over Union (IoU) between manual annotations and re-projected rendering from tracked values under various particle filter configurations on the dVRK \cite{DVRK}. The left and right plots correspond to stationary camera and eye-in-hand cases respectively. The plots show tracking the Lumped Error yields better tool tracking rather than explicitly tracking all unknowns.}
    \label{fig:IoU_results_dvrk}
\end{figure*}

\begin{figure*}
    \centering
    \begin{subfigure}{0.49\linewidth}
    \centering
    \begin{subfigure}{0.325\textwidth}
        \includegraphics[width=\linewidth]{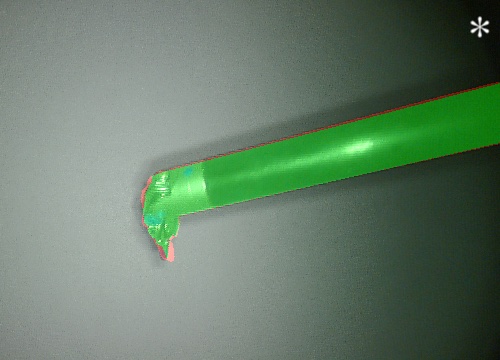}
    \end{subfigure}
    \begin{subfigure}{0.325\textwidth}
        \includegraphics[width=\linewidth]{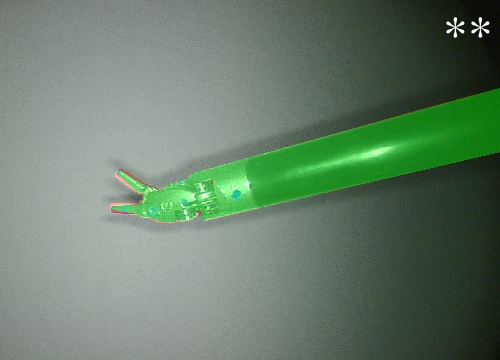}
    \end{subfigure}
    \begin{subfigure}{0.325\textwidth}
        \includegraphics[width=\linewidth]{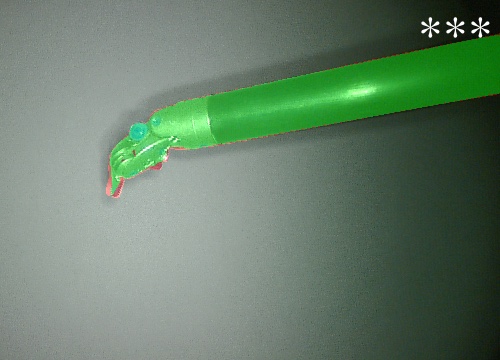}
    \end{subfigure}
    \caption{Best IoU for stationary camera: 0.94$^{*}$, 0.96$^{**}$, 0.96$^{***}$}
    \vspace{5pt}
    \end{subfigure}
    \begin{subfigure}{0.49\linewidth}
    \centering
    \begin{subfigure}{0.325\textwidth}
        \includegraphics[width=\linewidth]{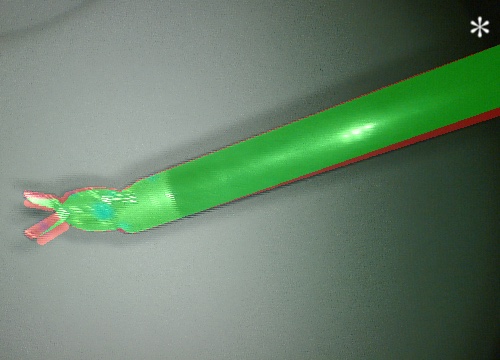}
    \end{subfigure}
    \begin{subfigure}{0.325\textwidth}
        \includegraphics[width=\linewidth]{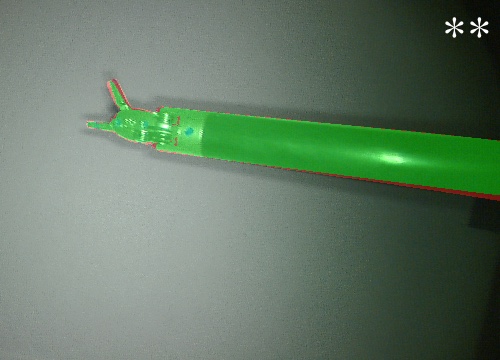}
    \end{subfigure}
    \begin{subfigure}{0.325\textwidth}
        \includegraphics[width=\linewidth]{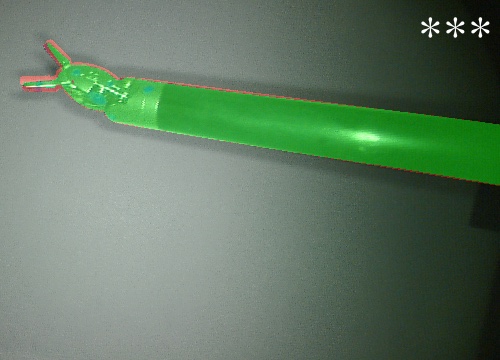}
    \end{subfigure}
    \caption{Best IoU for moving camera arm: 0.89$^{*}$, 0.94$^{**}$, 0.94$^{***}$}
    \vspace{5pt}
    \end{subfigure}
    \begin{subfigure}{0.49\linewidth}
    \centering
    \begin{subfigure}{0.325\textwidth}
        \includegraphics[width=\linewidth]{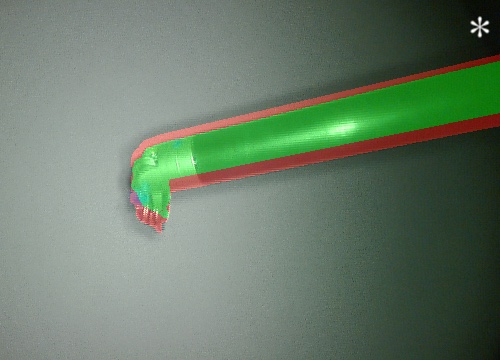}
    \end{subfigure}
    \begin{subfigure}{0.325\textwidth}
        \includegraphics[width=\linewidth]{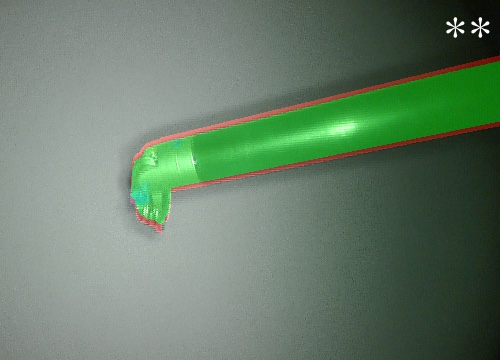}
    \end{subfigure}
    \begin{subfigure}{0.325\textwidth}
        \includegraphics[width=\linewidth]{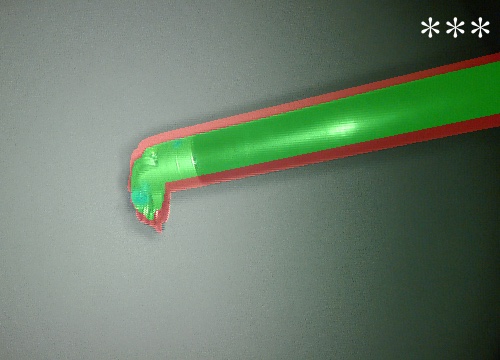}
    \end{subfigure}
    \caption{Worst IoU for stationary camera: 0.69$^{*}$, 0.85$^{**}$, 0.70$^{***}$}
    \end{subfigure}
    \begin{subfigure}{0.49\linewidth}
    \centering
    \begin{subfigure}{0.325\textwidth}
        \includegraphics[width=\linewidth]{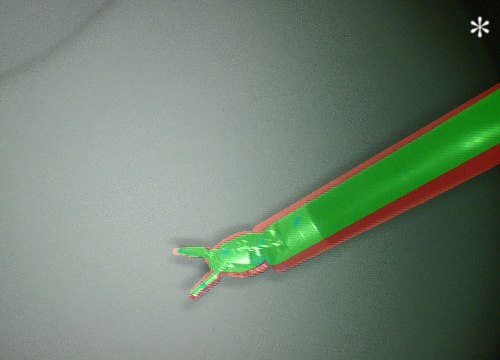}
    \end{subfigure}
    \begin{subfigure}{0.325\textwidth}
        \includegraphics[width=\linewidth]{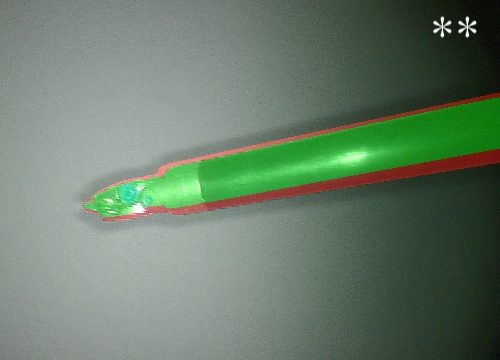}
    \end{subfigure}
    \begin{subfigure}{0.325\textwidth}
        \includegraphics[width=\linewidth]{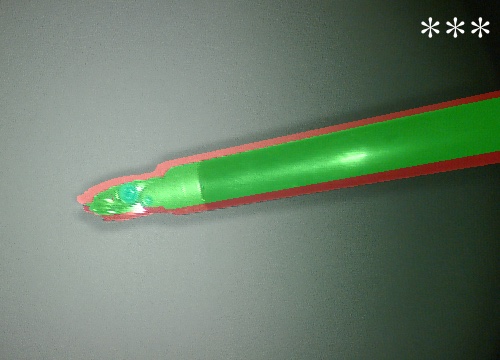}
    \end{subfigure}
    \caption{Worst IoU for moving camera arm: 0.69$^*$, 0.76$^{**}$, 0.73$^{***}$}
    \end{subfigure}
    \caption{Images from the best and worst Intersection over Union (IoU) when re-projecting the tracked dVRK \cite{DVRK} surgical tool onto the image. The green and red regions are the intersection and error, union minus intersection, respectively between the re-projection and surgical tool (best seen in color). The tracking conditions for all sets of three, from left to right, are: All Unknowns$^{*}$, Lumped Error$^{**}$, Lumped Error and Observable Joints$^{***}$. The corresponding IoU values are also listed. This shows that the failures when using Lumped Error are substantially less severe than tracking all unknowns.}
    \label{fig:images_of_best_and_worst_iou_dvrk}
\end{figure*}


\begin{figure*}[t]
    \centering
    \vspace{2mm}
    \includegraphics[trim=0cm 3cm 0cm 0cm, clip,width=0.43\textwidth]{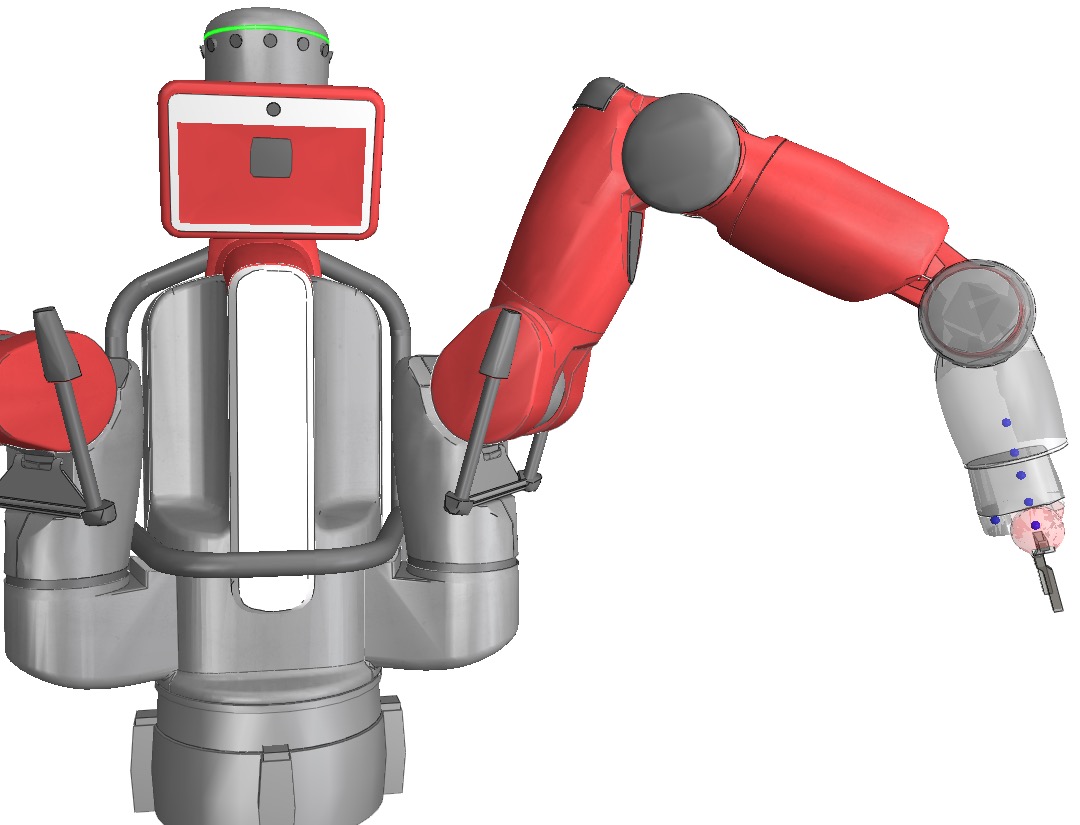}\hspace{5mm}
    \includegraphics[trim=0cm 7.5cm 10cm 3.5cm, clip,width=0.43\textwidth]{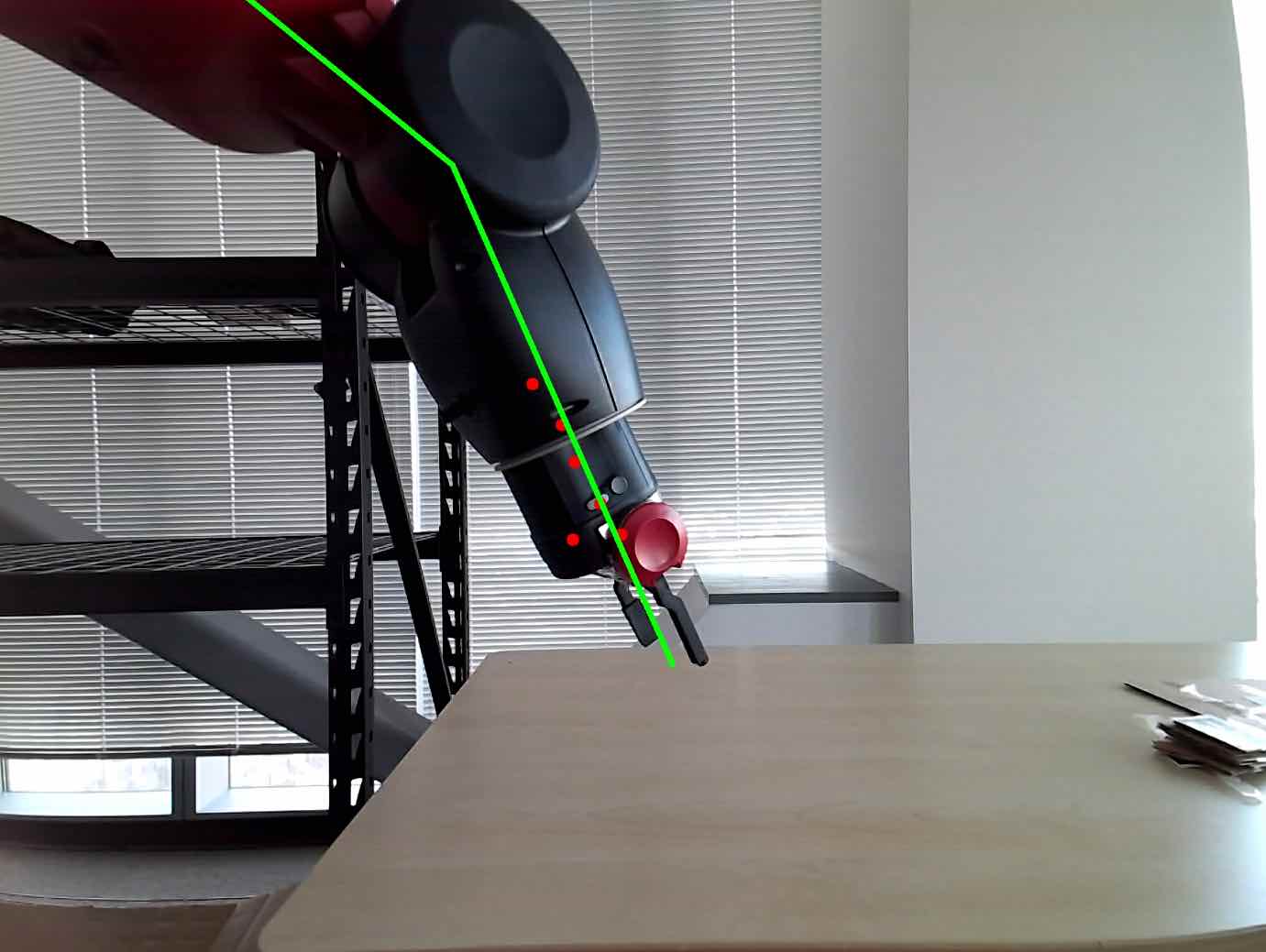}
    \caption{
    A DNN was trained in simulation to detect keypoints on a partially visible Baxter robot arm.
    The keypoint locations on the visible portion of the Baxter arm in this experiment were optimized for such that their detectability and accuracy is maximized \cite{lu2020robust}, and the result is shown with blue circles in the left figure.
    An example of the detections from the Baxter experiment is shown in red on the right figure.
    These detections were used to update the particle filter tracking the Lumped Error of the partially visible Baxter arm.
    A re-projected skeleton of the Baxter using the particle filter is also shown in green on the right figure.
    }
    \label{fig:baxter_keypoint_optimization}
\end{figure*}

\begin{figure*}
    \centering
    \includegraphics[trim=0cm 0cm 0cm 0cm, clip,width=0.48\textwidth]{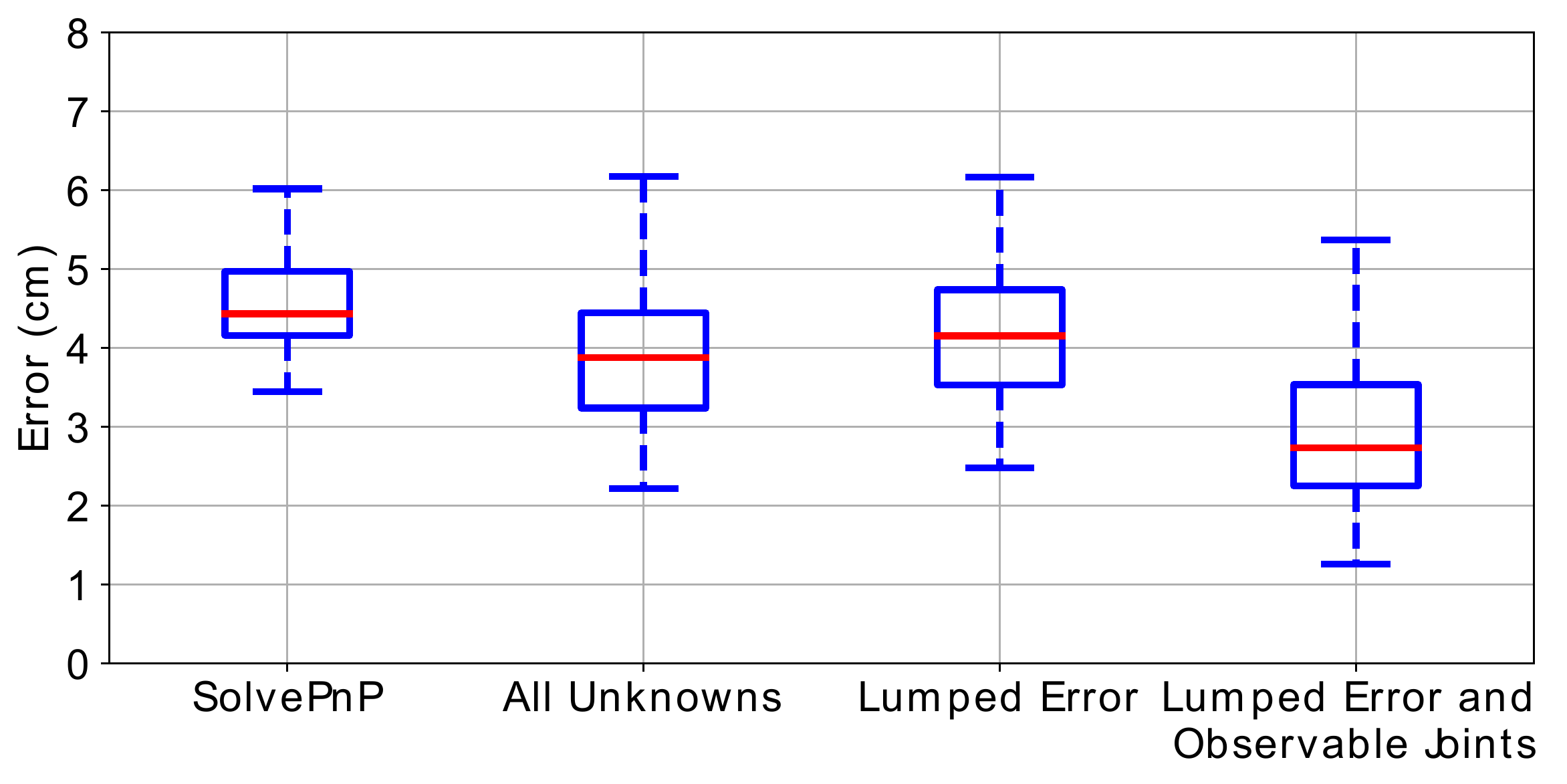}
\hspace{10pt}
    \includegraphics[trim=0cm 0cm 0cm 0cm, clip,width=0.48\textwidth]{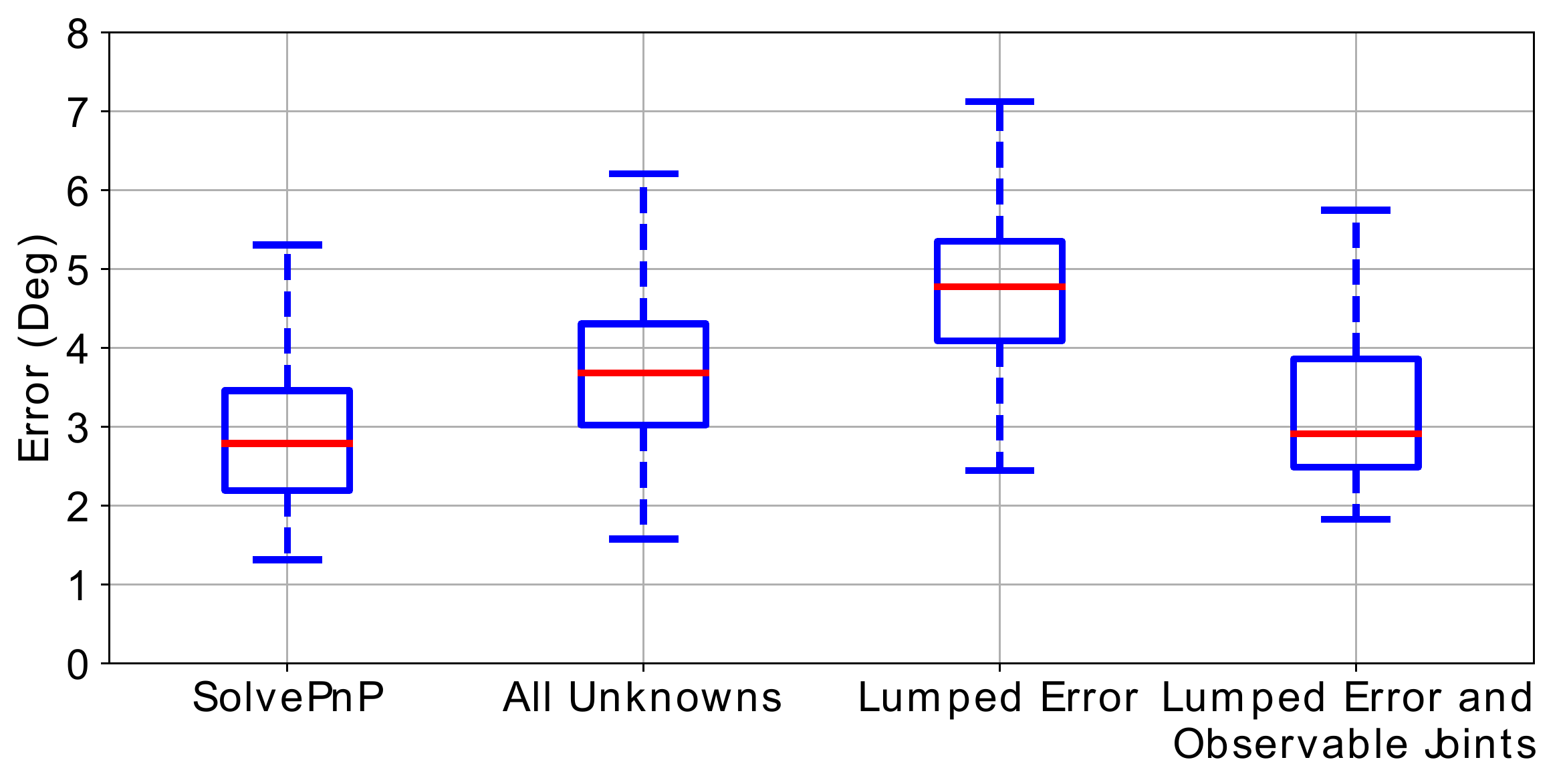}
    \caption{Distribution of position and orientation errors when calibrating for the base-to-camera transform alone with solvePnP and various particle filter configurations compensating for errors in the base-to-camera transform and joint angles from the Baxter robot experiment. Out of the active tracking methods, the Lumped Error parameter reduction technique with the observable joints is the most effective.
    Meanwhile, solvePnP for the static base-to-camera transform performs similar to Lumped Error and Observable joints in orientation error, but performs significantly worse in positional error.
    }
    \label{fig:baxter_box_plot_results}
\end{figure*}

\subsection{Tracking Lumped Error on Baxter}

A 77 second video segment was recorded of a 7 DoF arm from the Baxter robot with corresponding joint reading data.
The first joint link consistently visible in the image frames is after the $n_b = 6$ joint, and the end-effector moved a total distance of 5.25m during the segment.
The video was captured on Microsoft's Azure Kinect camera which has and RGB camera and a depth camera.
In this experiment, the particle filter which tracked this robotic arm only used the mono-RGB camera data.
Meanwhile the depth images were used to evaluate performance of tracking the robotic tool.

The features used to update the particle filter for this experiment were detected using a Deep Neural Network (DNN) rather than markers as used in the previous experiments.
This was done to show the flexibility of the presented particle filter with regards to the features used to update it.
The DNN choosen was DeepLabCut \cite{mathis2018deeplabcut}, and it was trained to detect and optimize feature points in simulation using our previously developed method \cite{lu2020robust}.
The resulting feature points which were detected by the DNN are shown in Fig. \ref{fig:baxter_keypoint_optimization}.
The DeepLabCut detections also provide direct associations for the feature points, $A_m$, and a confidence value, $\eta^k_t \in [0,1]$, for each detected feature $k$.
To integrate this with the Lumped Error tracking, the point feature observation model in (\ref{eq:pf_observation_model_markers}) was modified to:
\begin{equation}
    P( \mathbf{m}_{t}  | \hat{\mathbf{w}}_t, \hat{\mathbf{b}}_t, \hat{\mathbf{e}}_t ) \propto \sum \limits_{ k,i  \in A_m } \eta^k_t e^{-\gamma_m ||\mathbf{m}^k_{t} - \hat{\mathbf{m}}_i(\hat{\mathbf{w}}_t, \hat{\mathbf{b}}_t, \hat{\mathbf{e}}_t )||}
    \label{eq:pf_observation_model_markers_DNN}
\end{equation}
which removes the association step in line \ref{alg_pf:update_point_associate} from Algorithm \ref{alg:pf}.
It is important to include the DNN's confidence, $\eta^k_t$, in the model because sometimes the detections can be poor and the corresponding update needs to be weighted lower.
In the original observation model, (\ref{eq:pf_observation_model_markers}), this was done in the association step where the maximum cost is thresholded at $C^m_{max}$.

To evaluate tracking performance, the depth images were compared against the reconstructed robotic arm using the tracked parameters.
The reconstructed scene was rendered using a virtual camera and a Baxter robot model in V-REP \cite{rohmer2013v}.
After every image used to update the particle filter, the virtual camera captured a depth image of the reconstructed scene.
The tracking error was defined as the relative transform, $\mathbf{T}(\mathbf{w}^{\epsilon}, \mathbf{b}^{\epsilon}) \in SE(3)$, between the rendered point cloud from the virtual camera, $\mathbf{R}$, and the corresponding point cloud from Kinect Azure's depth camera, $\mathbf{G}$.
This relative transform is calculated by minimizing
\begin{equation}
    \sum \limits_{(\mathbf{r}, \mathbf{g}) \in \mathcal{K}} || \overline{\mathbf{r}} - \mathbf{T}(\mathbf{w}^{\epsilon}, \mathbf{b}^{\epsilon}) \overline{\mathbf{g}} ||
\end{equation}
where $\mathbf{r} \in \mathbf{R}$, $\mathbf{g} \in \mathbf{G}$, and $\mathcal{K}$ is the correspondence set between $\mathbf{R}$ and  $\mathbf{G}$.
The optimization was solved using Open3D's \cite{Zhou2018} implementation of the Iterative Closest Point algorithm \cite{besl1992method}.
To filter out poorly converged values, only the results where the amount of corresponded points relative to the total rendered points, $|\mathcal{K}| / |\mathbf{R}|$, is greater than 0.7 were recorded.
Similar to the dVRK experiments, the initial calibration, $\mathbf{T}^c_{b-}$, was computed using OpenCV's solvePnP \cite{opencv_library} using the detected features and their associations on the first image frame.

For additional comparison, we ran OpenCV's solvePnP implementation \cite{opencv_library} over the first 20 images of the dataset to solve for a static base-to-camera transform, $\mathbf{T}^c_b$.
The 2D detections are the same used by the particle filter.
Their corresponding 3D positions in the base frame of the Baxter robot are generated using forward kinematics with the joint angle readings $\tilde{q}^i_t$.
The resulting $\mathbf{T}^c_b$ and the joint angle readings are used to generate a rendered point cloud $\mathbf{R}$ in V-REP \cite{vrep_simulator}, and the same error metric as previously described is computed.

The distribution of translational errors, $||\mathbf{b}^\epsilon||$, and orientation errors, $||\mathbf{w}^\epsilon||$, for the three different particle filter configurations are plotted in Fig. \ref{fig:baxter_box_plot_results}.
In this case, Lumped Error with observable joints performed the best.
We believe this is since the kinematic links on the Baxter are much larger hence making the simplification from (\ref{eq:simplification}) no longer valid.

\section{Discussion}

From the experimental results and their respective metrics, it is evident that using Lumped Error yields almost always better tool tracking than explicitly estimating all unknowns for both stationary camera and eye-in-hand cases.
Only two experimental metrics, Fig. \ref{fig:moving_camera_converged_pf_box_plots}a and \ref{fig:baxter_box_plot_results}b, performed marginally better when tracking all unknowns.
Nevertheless, these metrics cannot be viewed in isolation, and their respectively paired metrics, Fig. \ref{fig:moving_camera_converged_pf_box_plots}b-e and \ref{fig:baxter_box_plot_results}a, showed significant improvement when tracking the Lumped Error over all unknowns.
Furthermore, the non-identifiable values estimated when tracking all unknowns yielded non-realistic results as shown in Fig. \ref{fig:errors_when_tracking_everything_stationary} and \ref{fig:errors_when_tracking_everything_moving_ecm}.
This is due to there being a single solution to the parameters being estimated when tracking the Lumped Error rather than the infinite set of solutions when tracking all unknowns, as shown in Claim \ref{claim:inf_solutions}.
Due to the infinite set of solutions, large distributions of the parameters being estimated occurred which is seen in Fig. \ref{fig:errors_when_tracking_everything_stationary} and \ref{fig:errors_when_tracking_everything_moving_ecm} when tracking all unknowns.
From the perspective of the particle filter, this is an inefficient usage of particles and detrimental to tracking because the particle filter estimates the posterior probability using a finite number of samples.

Tracking the observable joints when using the Lumped Error showed little difference in end-effector accuracy to not tracking them in the da Vinci simulation.
This supports the validity of applying simplification in (\ref{eq:simplification}) to the da Vinci robot because the link lengths for the observable joints, a dexterous robotic gripper, are short.
However, not tracking the observable joints on the real-world dVRK performed better.
We believe this occurred due to the features not being detected as consistently in the real world as in simulation, hence highlighting the usefulness of the simplification in (\ref{eq:simplification}).
Meanwhile for the Baxter Experiment, tracking the observable joint angles gave better performance than using the simplification.
In contrast to the da Vinci robot, the link lengths for the observable joints in the Baxter experiment are long hence making errors in their joint angles more catastrophic.
This matches with our motivation for the simplification in (\ref{eq:simplification}), and that it should only be used when the errors from the observable joints does not propagate through the kinematic chain drastically (e.g. a robotic gripper).

In addition to being efficient with respect to parameters to estimate, the Lumped Error modelled as a Weiner Process experimentally was found to compensate various distributions of errors.
In the stationary camera arm case in simulation, it captures both linear cable stretch and uniform bias error from joint angles.
For the eye-in-hand case in simulation, tracking the Lumped Error additionally compensated for Gaussian noise from the camera arm.
Meanwhile in the real world experiments, there are significant non-linear cable stretch effects on the PSM arm \cite{hwang2020efficiently} and backlash and hystresis on the Baxter robot and ECM arm, all of which the Lumped Error successfully compensated.

The proposed particle filter was also shown to be effective with a variety of visual features to update the Lumped Error.
The surgical robotic experiments used colored markers and edge detections of a cylindrical shaft as features which also needed to be associated.
The Baxter robot experiment and previously equivalent work, SuPer Deep \cite{lu2020super}, instead used a DNN to extract features with association.
The DNN feature extraction does perform better, as shown in our previous work \cite{lu2020super}, but the marker based approach is still sufficient for precise control as shown in the previous autonomous suction \cite{richter2020autonomous} and needle regrasping \cite{chiu2020bimanual} works.
This discrepancy in performance is due to the improved accuracy in feature detection, which directly improves the accuracy of the particle filters output.
Furthermore, we observed better tracking performance when the visible robotic tool is closer to the camera as the features (learned or markers) are more accurately detected.

The particle filter in all of the experiments shown here and the previous ones just described ran on a Intel\textregistered{} Core$^{\text{TM}}$ i9-7940X Processor and NVIDIA’s GeForce RTX 2080 and yielded a loop rate of 24FPS and 10FPS when using the colored markers and DNN respectively.
Therefore, it is suitable for real-time applications which is further highlighted by our previous control experiments.
These previous works in surgical robotic control and their usage of Lumped Error are discussed in Appendix \ref{appendix:previou_work}.

We also showed that the Lumped Error is mathematically equivalent to a previous, popular surgical tool tracking formulation which claimed to only track the error in the transform from the camera frame to the base of the surgical robot \cite{original_rcs}.
The equivalency implies that this previous formulation actually was compensating for both error in base-to-camera transform and joint angle errors.
The surgical tool tracking experiments presented here focused on the parameter reduction technique.
For performance of surgical tool tracking in surgical environments, refer to our previously developed work which utilized an equivalent tool tracking method \cite{super, lu2020super,richter2020autonomous, chiu2020bimanual}.

Lastly, we want to highlight that the tracking method presented in this work requires knowledge of the kinematic chain through the joint transforms $\mathbf{T}^i_{i-1}(\cdot)$ and camera intrinsics, $\mathbf{K}$.
This is a fair assumption as the kinematic chain for a robot is typically supplied by the manufacturer.
Furthermore, the joint transforms can be calibrated for with high accuracy offline \cite{okamura1996kinematic}.
Camera calibration is also a well studied method \cite{zhang2000flexible} and even implemented in standard vision toolboxes such as OpenCV \cite{opencv_library}.
Nonetheless, accurate calibration of these parameters is required to implement the proposed tracking method in this work.

\section{Conclusion}

In this work, we describe the challenges of tracking robotic tools from visual observations that only show part of the kinematic chain and there is uncertainty in base-to-camera transform and joint angle measurements through a problem formulation that shows it is infeasible to directly estimate all of these unknowns.
A smaller set of parameters, which we coined as Lumped Error, is derived and shown to compensate all the described uncertainties and is identifiable.
Furthermore, Lumped Error is mathematically equivalent to a popular instrument tracking method \cite{original_rcs} hence giving a deeper understanding of how it works.
Tracking the Lumped Error experimentally is shown to efficiently track robotic tools and even can be extended to eye-in-hand configurations.
Through this extension, we successfully track for the first time a surgical robotic tool with a moving endoscope, which contained a total of 10 DoF and a gripper joint.

The proposed tracking method to estimate the Lumped Error used a Weiner Process to model the uncertainty and experimentally was found to be efficient.
In future work, we intend to use the analytical derivation of the Lumped Error to more precisely describe the uncertainty.
In particular, we will use cable stretch models to describe the joint angle error so the uncertainty of the motion model for the Lumped Error appropriately propagates the transmission errors for cable driven robots such as dVRK \cite{DVRK}.
Additionally, we also intend to investigate controllers which utilize the Lumped Error parameter reduction in future work.
The controllers presented in the previous autonomous suction \cite{richter2020autonomous} and suture needle regrasping \cite{chiu2020bimanual} works show great promise to the capabilities of a Lumped Error controller.
These controllers will be investigated from a theoretical perspective where criteria for stability will be defined.

\section*{Acknowledgements}
This work is supported by University of California San Diego's Galvanizing Engineering in Medicine (GEM) grant, the Intuitive Surgical Technology Grant, the National Science Foundation (NSF) under grant numbers 1935329 and 2045803, and the US Army Telemedicine and Advanced Technology Research Center (TATRC) under the Robotic Battlefield Medical Support System project.
F. Richter is supported via the NSF Graduate Research Fellowships. 
The authors would like to thank Intuitive Surgical Inc. for instrument donations, and Simon DiMiao, Omid Maherari, Dale Bergman, and Anton Deguet for their support with the dVRK.

\appendix

\subsection{Previous Works using Lumped Error Tracking for Surgical Robotic Control}
\label{appendix:previou_work}

The Lumped Error simplification has been used to great effect in previous surgical robotic work under the guise of tracking KCS \cite{zhao2015efficient}.
The summary here gives insight into how this tracking method has enabled our previous work in surgical robotic control.
In addition, minor adjustments are described which show how these previous works fit into the unified approach for tool tracking presented here.

\subsubsection{Position Control}

In order to regulate a robotic suction tool along a motion plan to clear the surgical field of blood, a controller which uses Lumped Error was implemented \cite{richter2020autonomous}.
An example of this motion plan is shown in Fig. \ref{fig:blood_and_needle}.
In this automated suction work, the robotic suction tool's Lumped Error was tracked using painted markers for point features and the cylindrical insertion shaft, similar to the surgical tool tracking experiments.
Since the motion plan generates goal positions in the camera frame, which will be denoted as $\mathbf{p}^c_g \in \mathbb{R}^3$, the controller regulates the end-effector of the robotic suction tool in the camera frame.
Let $\mathbf{b}^e_t \in \mathbb{R}^3$ be the incorrect position of the end-effector in the robot base frame which was computed through forward kinematics with the noisy joint angle measurements, $\tilde{q}^i_t$.
The controller iteratively transforms the goal, $\mathbf{p}^c_g$, to the virtual base defined by the Lumped Error, and the error, $\mathbf{d}^e_t \in \mathbb{R}^3$, in the virtual base was computed as:
\begin{equation}
    \mathbf{d}^e_t = \left( \mathbf{T}^c_{b-} \mathbf{T}^{b-}_{n_b} (\hat{\mathbf{w}}_t, \hat{\mathbf{b}}_t) \right)^{-1} \overline{\mathbf{p}}_g^c - \overline{\mathbf{b}}_{t}^e
    \label{eq:position_controller_error}
\end{equation}
This error was then used to update the end-effector position
\begin{equation}
    \overline{\mathbf{b}}^e_{t+1} = \begin{cases} \gamma_s \frac{\mathbf{d}^e_{t}}{||\mathbf{d}^e_{t}||} + \overline{\mathbf{b}}^e_{t}   & \text{if } ||\mathbf{d}^e_{t}|| > \gamma_s\\
    \mathbf{d}^e_{t} + \overline{\mathbf{b}}^e_{t} & \text{if } ||\mathbf{d}^e_{t}|| \leq \gamma_s
    \end{cases}
    \label{eq:position_controller}
\end{equation}
where $\gamma_s$ is the max step size.
The updated end-effector position, $\mathbf{b}^e_{t+1}$, was set on the robotic suction tool using inverse kinematics and joint level regulators which use the noisy joint angle readings, $\tilde{q}^i_t$, as feedback.
These operations were repeated until the error, $||\mathbf{d}^e_t||$, was less than some threshold, and then a new goal position was set from the motion planner.
The resulting motion was an effective controller used to automate clearing of the surgical field from blood for hemostasis.

\begin{figure}[t]
    \centering
    \includegraphics[width=0.49\linewidth]{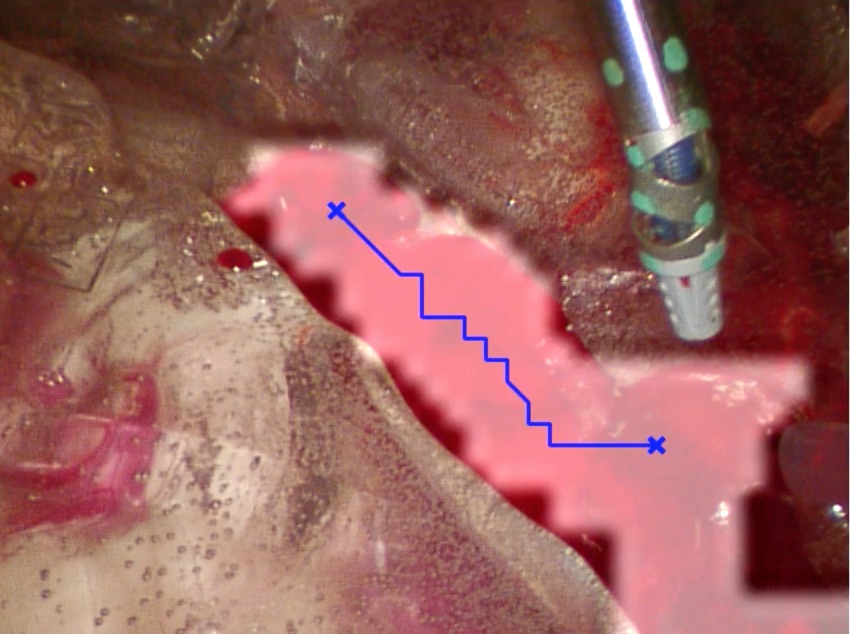}
    \includegraphics[trim=1.65cm 0cm 5cm 0mm, clip, width=0.49\linewidth]{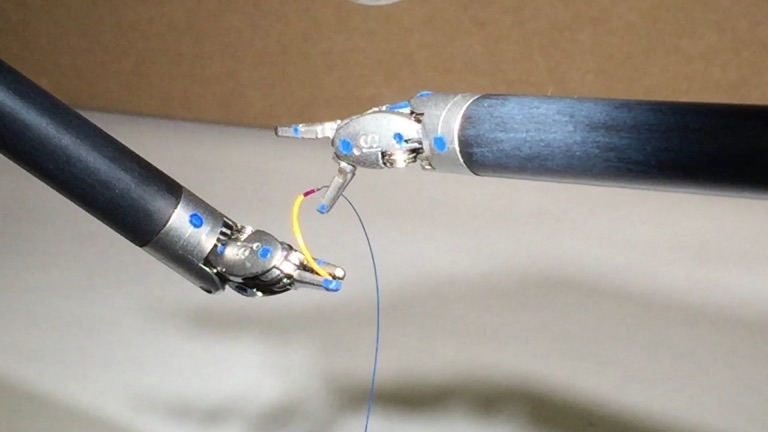}
    \caption{From left to right respectively, the figures show autonomous suction to clear the surgical field of blood for hemostasis \cite{richter2020autonomous} and automated needle regrasping for suture throwing \cite{chiu2020bimanual}. Both of these efforts used a controller to regulate the robotic surgical tools in the camera frame as the goals, blood and suture needles, are detected and tracked in it. The controllers utilize the Lumped Error to accomplish the regulation.}
    \label{fig:blood_and_needle}
\end{figure}

\subsubsection{Orientation Control}

The Lumped Error parameter reduction was also used to regulate robotic Large Needle Drivers along motion plans to conduct suture needle regrasping.
Similar to the previously described autonomous suction task, the motion plan was generated in the camera frame which acts as a bridge between the tracked surgical robotic tools and reconstructed suture needle.
The Lumped Errors of the two surgical robotic tools were tracked using the same point and edge features as described in the surgical tool tracking experiments.
Their positions were regulated using the same controller as described in (\ref{eq:position_controller_error}) and (\ref{eq:position_controller}).
The orientation was regulated in a similar fashion.
Let $\mathbf{R}^c_g \in SO(3)$ and $\mathbf{R}^e_t  \in SO(3)$ be the goal orientation in the camera frame and incorrect orientation of the end-effector in the robot base frame computed through forward kinematics with the noisy joint angle measurements, $\tilde{q}^i_t$, respectively.
The orientation of the end-effector is iteratively set to
\begin{equation}
    \mathbf{R}^e_t = \left( \mathbf{R}^c_{b-}  \mathbf{R}^{b-}_{n_b} (\hat{\mathbf{w}}_t) \right)^{-1} \mathbf{R}^c_g
\end{equation}
where $\mathbf{R}^c_{b-} \in SO(3)$ and $\mathbf{R}^{b-}_{n_b} (\hat{\mathbf{w}}_t) \in SO(3)$ are the rotation matrix of $\mathbf{T}^c_{b-}$ and $\mathbf{T}^{b-}_{n_b} (\hat{\mathbf{w}}_t, \hat{\mathbf{b}}_t)$ respectively.
Similar to the position of the end-effector, the orientation $\mathbf{R}^e_t$ is set using inverse kinematics and joint level regulators which use the noisy joint angle readings, $\tilde{q}^i_t$, as feedback.
The controller effectively regulates the surgical robotic tools along the generated motion plan to complete the task of suture needle regrasping as shown in Fig. \ref{fig:blood_and_needle}.

\subsection{Implementation Details for Particle Filter}
\label{appendix:particle_filter_details}

The parameters used to describe the motion models and observation models for da Vinci in simulation and dVRK tracking experiments are shown in Table \ref{table:particle_filter_parameters_dvrk}.
These values were chosen based on our previously equivalent work \cite{super}.
The only modification for the non-stationary robotic endoscopse case are the covariance for the Lumped Errors, $\mathbf{\Sigma}_{\mathbf{w}, t}$ and $\mathbf{\Sigma}_{\mathbf{b}, t}$, are scaled by 2.
Note that the relationship of $\mathbf{\Sigma}_{\mathbf{w}, \mathbf{b}, 0} = 10(\mathbf{\Sigma}_{\mathbf{w}, \mathbf{b}, t})$ is still kept after the scaling.
The markers are located in similar locations as the detected features from previous work by Ye et. al \cite{template_matching_EKF}.
All marker locations relative to the joint coordinate frames, $\mathbf{p}^{j_i}$ from (\ref{eq:marker_camera_projection}), were measured using calipers on the dVRK.
For the Baxter experiment, the parameters used in the particle filter are listed in Table \ref{table:particle_filter_parameters_baxter}.
These values were chosen based on a previously developed report on Baxter's performance \cite{cremer2016performance}.

\begin{table}[t]
\caption{Parameters used for particle filter to track da Vinci tools in simulation and dVRK. All angles and distances are in radians and millimeters respectively.}
\label{table:particle_filter_parameters_dvrk}
\centering
\small
\setlength\tabcolsep{1.0em}
\begin{tabular}{ c | l }
 \textbf{Parameter} & \textbf{Value} \\ &  \\[-1em] \hline &  \\[-0.75em]
 $\mathbf{a}_{\hat{\mathbf{e}}}$ &  $\begin{matrix} \big[ 0.004 & 0.004 & 2  & 0.004  \\   0.004 & 0.004 & 0.01 \big]  \end{matrix}$  \\ &  \\[-0.75em] \hline &  \\[-0.75em]
 $\mathbf{\Sigma}_{\hat{\mathbf{e}},t}$ &  diag $\left( \begin{matrix}\big[ 0.0025 & 0.0025 & 1 \\ 0.0025 & 0.0025 & 0.0025 \\ 0.005 \big] \end{matrix} \right)$  \\ &  \\[-0.75em] \hline &  \\[-0.75em]
  $\mathbf{a}_{\hat{\mathbf{e}}^c}$ &  $\begin{matrix} \big[ 0.004 & 0.004 & 2  & 0.004 \big]  \end{matrix}$  \\ &  \\[-0.75em] \hline &  \\[-0.75em]
  $\mathbf{\Sigma}_{\hat{\mathbf{e}}^c,t}$ &  diag $\left( \begin{matrix}\big[ 0.01 & 0.01 & 2.5 \\ 0.01 \big] \end{matrix} \right)$  \\ &  \\[-0.75em] \hline &  \\[-0.75em]
 $\mathbf{\Sigma}_{\mathbf{w},\mathbf{b}, t}$ &  $\text{diag}(\begin{bmatrix} \mathbf{\Sigma}_{\mathbf{w},t} & \mathbf{\Sigma}_{\mathbf{b},t} \end{bmatrix})$  \\ &  \\[-0.75em] \hline &  \\[-0.75em]
 $\mathbf{\Sigma}_{\mathbf{w},t}$ &  $\text{diag}(\begin{bmatrix} 0.005 & 0.005 & 005 \end{bmatrix})$  \\ &  \\[-0.75em] \hline & \\[-0.75em]
$\mathbf{\Sigma}_{\mathbf{b},t}$ &  $\text{diag}(\begin{bmatrix} 0.25 & 0.25 & 0.25 \end{bmatrix})$  \\ &  \\[-0.75em] \hline &  \\[-0.75em]
$\mathbf{\Sigma}_{\mathbf{w},\mathbf{b}, 0}$ & $10(\mathbf{\Sigma}_{\mathbf{w},\mathbf{b}, t})$ \\ &  \\[-0.75em] \hline &  \\[-0.75em]
$\begin{bmatrix} \gamma_m & \gamma_\phi & \gamma_\rho \end{bmatrix}$ & $\begin{bmatrix} 0.15 & 40.0 & 0.1 \end{bmatrix}$ \\ &  \\[-0.75em] \hline &  \\[-0.75em]
$\begin{bmatrix} C^m_{max} & C^l_{max} \end{bmatrix}$ & $\begin{bmatrix} 25 \gamma_m  & 0.1\gamma_\phi + 25 \gamma_\rho \end{bmatrix}$ \\ &  \\[-0.75em] \hline &  \\[-0.75em]
$\begin{bmatrix} N & N_{eff} \end{bmatrix}$ & $\begin{bmatrix} 1000 & 0.5 \end{bmatrix}$
\end{tabular}
\end{table}

\begin{table}[t]
\caption{Parameters used for particle filter to track Baxter robot. All angles and distances are in radians and millimeters respectively.}
\label{table:particle_filter_parameters_baxter}
\centering
\small
\setlength\tabcolsep{1.0em}
\begin{tabular}{ c | l }
 \textbf{Parameter} & \textbf{Value} \\ &  \\[-1em] \hline &  \\[-0.75em]
 $\mathbf{a}_{\hat{\mathbf{e}}}$ &  $\begin{matrix} \big[ 0.01 & 0.01 & 0.01  & 0.01  \\   0.01 & 0.01 & 0.01 \big]  \end{matrix}$  \\ &  \\[-0.75em] \hline &  \\[-0.75em]
 $\mathbf{\Sigma}_{\hat{\mathbf{e}},t}$ &  diag $\left( \begin{matrix}\big[ 0.001 & 0.001 & 0.001 &  \\ 0.001 & 0.001 & 0.001 \\ 0.001 \big] \end{matrix} \right)$  \\ &  \\[-0.75em] \hline &  \\[-0.75em]
 $\mathbf{\Sigma}_{\mathbf{w},\mathbf{b}, t}$ &  $\text{diag}(\begin{bmatrix} \mathbf{\Sigma}_{\mathbf{w},t} & \mathbf{\Sigma}_{\mathbf{b},t} \end{bmatrix})$  \\ &  \\[-0.75em] \hline &  \\[-0.75em]
 $\mathbf{\Sigma}_{\mathbf{w},t}$ &  $\text{diag}(\begin{bmatrix} 0.001 & 0.001 & 0.001 \end{bmatrix})$  \\ &  \\[-0.75em] \hline & \\[-0.75em]
$\mathbf{\Sigma}_{\mathbf{b},t}$ &  $\text{diag}(\begin{bmatrix} 0.25 & 0.25 & 0.25 \end{bmatrix})$  \\ &  \\[-0.75em] \hline &  \\[-0.75em]
$\mathbf{\Sigma}_{\mathbf{w},\mathbf{b}, 0}$ & $10(\mathbf{\Sigma}_{\mathbf{w},\mathbf{b}, t})$ \\ &  \\[-0.75em] \hline &  \\[-0.75em]
$\begin{bmatrix} \gamma_m & N & N_{eff} \end{bmatrix}$ & $\begin{bmatrix} 5 & 200 & 0.5 \end{bmatrix}$
\end{tabular}
\end{table}

\subsection{Camera Projection Equation for Cylinder} 
\label{appendix:shaft_projection}
The camera projection of a cylinder is an adaptation of previous work by Chaumette \cite{project_cylinder}. A cylinder is described by three parameters: a radius $r \in \mathbb{R}$, a directional vector $\mathbf{d}^j 
\in \mathbb{R}^3$ of its center axis, and a position along its center axis $\mathbf{p}_0^j \in \mathbb{R}^3$.  Let $\mathbf{d}^j$ and $\mathbf{p}_0^j$ be defined in joint frame $j$, which is the insertion shaft, and $||\mathbf{d}^j|| = 1$.
Using (\ref{eq:final_estimation_equation}) or (\ref{eq:lumped_error_endoscopic_case}) for the stationary endoscope or robotic endoscope cases respectively, $\mathbf{d}^j$ and $\mathbf{p}_0^j$ are transformed to the camera frame and denoted as $\mathbf{d}^c ( \mathbf{w}_t,\mathbf{b}_t, \mathbf{e}_t)  = \begin{bmatrix} a^c & b^c& c^c \end{bmatrix}^\top$ and $\mathbf{p}_0^c ( \mathbf{w}_t,\mathbf{b}_t, \mathbf{e}_t) = \begin{bmatrix} x^c_0 & y^c_0 & z^c_0 \end{bmatrix}^\top$ respectively. Note that $( \mathbf{w}_t,\mathbf{b}_t)$ should be replaced with $( \mathbf{w}^l_t,\mathbf{b}^l_t)$ in the robotic endoscope case. 
The center axis of the cylinder in the camera frame can be described as:
\begin{equation}
    \label{eq:insertion_shaft_cylinder_center_line}
    \mathbf{p}^c_a = \mathbf{p}_0^c( \mathbf{w}_t,\mathbf{b}_t, \mathbf{e}_t) + \lambda \mathbf{d}^c( \mathbf{w}_t,\mathbf{b}_t, \mathbf{e}_t)
\end{equation}
where $\lambda \in \mathbb{R}$. The cross section of a cylinder that is normal to the center axis can be described as the intersection between the surface of a sphere with radius $r$ centered along $\mathbf{p}^c_a$ and a plane with normal $\mathbf{d}^c$ that contains the point $\mathbf{p}^c_a$. This intersection is described as:
\begin{equation}
    \label{eq:insertion_shaft_cylinder_cross_section_intersection}
    \begin{cases}
        (\mathbf{p}^c_c -  \mathbf{p}^c_a)^\top (\mathbf{p}^c_c -  \mathbf{p}^c_a) - r^2 = 0 \\
        \mathbf{d}^c(\mathbf{w}_t, \mathbf{b}_t, \mathbf{e}_t)^\top (\mathbf{p}^c_c -  \mathbf{p}^c_a ) = 0
    \end{cases}
\end{equation}
where $\mathbf{p}^c_c \in \mathbb{R}^3$ is a point on the perimeter of the circle from the cross section of the cylinder in the camera frame. 

By combining (\ref{eq:insertion_shaft_cylinder_center_line}) and (\ref{eq:insertion_shaft_cylinder_cross_section_intersection}), an expression for the surface of a cylinder can be derived. The resulting expression of the cylinder is:
\begin{multline}
    \big(\mathbf{p}^c_s -  \mathbf{p}_0^c( \mathbf{w}_t,\mathbf{b}_t, \mathbf{e}_t) \big)^\top \big( \mathbf{p}^c_s -  \mathbf{p}_0^c( \mathbf{w}_t,\mathbf{b}_t, \mathbf{e}_t) \big) \\ - \Big( \mathbf{d}^c(\mathbf{w}_t, \mathbf{b}_t, \mathbf{e}_t)^\top \big( \mathbf{p}^c_s -  \mathbf{p}_0^c( \mathbf{w}_t,\mathbf{b}_t, \mathbf{e}_t) \big) \Big)^2 - r^2 = 0 
\end{multline}
where $\mathbf{p}^c_s = \begin{bmatrix} x^c_s & y^c_s & z^c_s \end{bmatrix}^\top$ is a point on the surface of the cylinder in the camera frame. 

Without loss of generality, let $(X,Y)$ be the projected pixel coordinates of the cylinder to a unit camera using the pin-hole model. This can be converted to the $(u,v)$ pixel location on a different camera by setting:
\begin{align}
    \label{eq:convert_unit_camera_to_other_camera}
    X = \frac{u - c_u}{f_x} &&
    Y = \frac{v - c_v}{f_y}
\end{align}
where $f_x,f_y$ and $c_u,c_v$ are the focal lengths and principal point in pixel units respectively from the camera intrinsic matrix $\mathbf{K}$. 

Applying the camera pin-hole model to the surface of the cylinder in the camera frame results in a quadratic:
\begin{equation}
    A + \frac{1}{z} B  + \frac{1}{z^2} C  = 0
\end{equation}
where
\begin{equation}
\begin{split}
    A &= X^2 + Y^2 + 1 - (a^c X + b^c Y + c^c)^2 \\
    B &= -2 \Big( (x^c_0 - a^c \nu) X + (y^c_0 - b^c \nu) Y + z^c_0 - c^c \nu \Big) \\ 
    C &= \big( x^c_0 \big)^2 + \big( y^c_0 \big)^2 + \big( z^c_0 \big)^2 - \nu^2 - r^2
\end{split}
\end{equation}
and
\begin{equation}
    \nu = a^c x^c_0 + b^c y^c_0 + c^c z^c_0
\end{equation}
The quadratic expression with respect to the depth occurs because there can be at most two solutions to the depth for each $(X,Y)$ when the cylinder is projected onto an image plane. One solution is the visible side of the cylinder, and the other is the obstructed side of the cylinder. The case of a single solution to depth would occur only at the two edges of the projected cylinder. This can be enforced by setting the determinant of the quadratic to zero ($B^2 - 4AC = 0$) resulting in:
\begin{equation}
    \Bigg( \frac{Br}{-2\sqrt{C}} - \alpha X  - \beta Y  - \kappa \Bigg) \Bigg( \frac{Br}{-2\sqrt{C}} + \alpha X  + \beta Y  + \kappa \Bigg) = 0
\end{equation}
where
\begin{equation}
    \alpha = c^c y^c_0 - b^c z^c_0 \qquad
    \beta = a^c z^c_0 - c^c x^c_0 \qquad
    \kappa = b^c x^c_0 - a^c y^c_0
\end{equation}
after simplification. 

Therefore, it is evident that the two edges from the projection of the cylinder result in two lines:
\begin{multline}
    \Bigg( \frac{ r (x^c_0 - a^c \nu)}{\sqrt{C}} - \alpha \Bigg) X + \Bigg( \frac{ r (y^c_0 - b^c \nu)}{\sqrt{C}} - \beta \Bigg) Y \\
    + \Bigg( \frac{ r (z^c_0 - c^c \nu)}{\sqrt{C}} - \kappa \Bigg) = 0
\end{multline}
and
\begin{multline}
    \Bigg( \frac{ r (x^c_0 - a^c \nu)}{\sqrt{C}} + \alpha \Bigg) X + \Bigg( \frac{ r (y^c_0 - b^c \nu)}{\sqrt{C}} + \beta \Bigg) Y \\
    + \Bigg( \frac{ r (z^c_0 - c^c \nu)}{\sqrt{C}} + \kappa \Bigg) = 0
\end{multline}
Through simple arithmetic and (\ref{eq:convert_unit_camera_to_other_camera}), both of these edges can be converted to the normal form described in (\ref{eq:hough_shaft_parameters}) for any camera. 
In the normal form, let the resulting two edges be parameterized by  $\big(\hat{\rho}_1(\mathbf{w}_t,\mathbf{b}_t, \mathbf{e}_t), \hat{\phi}_1(\mathbf{w}_t$,$\mathbf{b}_t, \mathbf{e}_t)\big)$ and $\big(\hat{\rho}_2(\mathbf{w}_t, \mathbf{b}_t, \mathbf{e}_t), \hat{\phi}_2(\mathbf{w}_t,\mathbf{b}_t, \mathbf{e}_t)\big)$.

\subsection{da Vinci Simulation Details}
\label{appendix:simulation_detials}
The stereoscopic endoscope's virtual cameras are set to render 540 by 432 images with a field of view of 60 degrees.
The baseline distance for the stereo cameras is set to 5mm to give similar depth challenges in real stereoscopic endoscopes. 
The random walk for the orientation, $\mathbf{w}^e_t$ a quaternion vector, of the end-effector is:
\begin{equation}
    \mathbf{w}^e_{t+1} = \mathbf{w}^e_t \mathbf{w}^n_t
\end{equation}
where $\mathbf{w}^n_t$ is the quaternion representation of axis-angle vector whose angle is sampled from $\mathcal{U}(0, 0.07)$ radians and axis is uniformly sampled in spherical coordinates:
\begin{equation}
    \begin{bmatrix}
    \sin{(\phi^n_t)} \cos{(\theta^n_t)} \\ \sin{(\phi^n_t)} \sin{(\theta^n_t)} \\ \cos{(\phi^n_t)}
    \end{bmatrix}
\end{equation}
where $\theta^n_t = \arccos{(u_t)}$, $u_t \sim \mathcal{U}(-1,1)$, and $\phi^n_t \sim \mathcal{U}(0, 2\pi)$.
The trajectory per trial is ran for 140 time steps.
Additional parameters are given in Table \ref{table:simulation_parameters}.

\begin{table}[b]
\caption{Parameters used for simulated experiments. All angles and distances are in radians and millimeters respectively.}
\label{table:simulation_parameters}
\centering
\small
\setlength\tabcolsep{1.0em}
\begin{tabular}{ c | l }
 \textbf{Parameter} & \textbf{Value} \\ &  \\[-1em] \hline &  \\[-0.75em]
$\mathbf{\Sigma}^{b-}_{\mathbf{w}, \mathbf{b}}$ &  $\text{diag} \left( \begin{matrix} \big[ 0.005 & 0.005 & 0.005 \\ 5 & 5 & 5 \big] \end{matrix} \right)$  \\ &  \\[-0.75em] \hline &  \\[-0.75em]
$\mathbf{a}^b_e$ &  $\ \begin{matrix} \big[ 0.004 & 0.004 & 2 & 0.004 \\ 0.004 & 0.004 & 0.01 \big] \end{matrix}$  \\ &  \\[-0.75em] \hline &  \\[-0.75em]
$\mathbf{e}_c$ &  $\ \begin{matrix} \big[ 0.02 & 0.02 & 0.0025 & 0.02 \\ 0.02 & 0.02 & 0.05 \big] \end{matrix}$  \\ &  \\[-0.75em] \hline &  \\[-0.75em]
$\mathbf{a}^c_e$ &  $ \begin{matrix} \big[ 0.004 & 0.004 & 2 & 0.004 \big] \end{matrix}$  \\ &  \\[-0.75em] \hline &  \\[-0.75em]
$\sigma_{c, l}$ &  $\begin{matrix} \big[ 0.0075 & 0.0075 & 0.75 & 0.0075 \big] \end{matrix}$ 
\end{tabular}
\end{table}

\balance
\bibliographystyle{ieeetr}
\bibliography{references}

\end{document}